\newcommand*{\BraceAmplitude}{0.4em}%
\newcommand*{\VerticalOffset}{0.5ex}%
\newcommand*{\HorizontalOffset}{0.0em}%
\newcommand*{\blocktextwid}{3.0cm}%
\NewDocumentCommand{\InsertLeftBrace}{%
	O{} 
	O{\HorizontalOffset,\VerticalOffset} 
	O{\blocktextwid} 
	m   
	m   
	m   
}{%
	\begin{tikzpicture}[overlay,remember picture]
	\coordinate (Brace Top)    at ($(#4.north) + (#2)$);
	\coordinate (Brace Bottom) at ($(#5.south) + (#2)$);
	\draw [decoration={brace, amplitude=\BraceAmplitude}, decorate, thick, draw=black, #1]
	(Brace Bottom) -- (Brace Top) 
	node [pos=0.5, anchor=east, align=left, text width=#3, color=black, xshift=\BraceAmplitude] {#6};
	\end{tikzpicture}%
}%
\NewDocumentCommand{\InsertRightBrace}{%
	O{} 
	O{\HorizontalOffset,\VerticalOffset} 
	O{\blocktextwid} 
	m   
	m   
	m   
}{%
	\begin{tikzpicture}[overlay,remember picture]
	\coordinate (Brace Top)    at ($(#4.north) + (#2)$);
	\coordinate (Brace Bottom) at ($(#5.south) + (#2)$);
	\draw [decoration={brace, amplitude=\BraceAmplitude}, decorate, thick, draw=black, #1]
	(Brace Top) -- (Brace Bottom) 
	node [pos=0.5, anchor=west, align=left, text width=#3, color=black, xshift=\BraceAmplitude] {#6};
	\end{tikzpicture}%
}%
\NewDocumentCommand{\InsertTopBrace}{%
	O{} 
	O{\HorizontalOffset,\VerticalOffset} 
	O{\blocktextwid} 
	m   
	m   
	m   
}{%
	\begin{tikzpicture}[overlay,remember picture]
	\coordinate (Brace Top)    at ($(#4.west) + (#2)$);
	\coordinate (Brace Bottom) at ($(#5.east) + (#2)$);
	\draw [decoration={brace, amplitude=\BraceAmplitude}, decorate, thick, draw=black, #1]
	(Brace Top) -- (Brace Bottom) 
	node [pos=0.5, anchor=south, align=left, text width=#3, color=black, xshift=\BraceAmplitude] {#6};
	\end{tikzpicture}%
}%
\definecolor{cof}{RGB}{219,144,71}
\definecolor{pur}{RGB}{186,146,162}
\definecolor{greeo}{RGB}{91,173,69}
\definecolor{greet}{RGB}{52,111,72}
\theoremstyle{plain}
\newtheorem{theorem}{Theorem}
\newtheorem{lemma}{Lemma}
\newtheorem{remark}{Remark}
\newtheorem{corollary}{Corollary}
\newtheorem{definition}{Definition}
\def \bP {\mathbb{P}}
\def \bE {\mathbb{E}}
\def \bR {\mathbb{R}}
\def \var {\mathsf{Var}}
\def\1{\mathbbm{1}}
\newcommand{\stepa}[1]{\overset{\rm (a)}{#1}}
\newcommand{\stepb}[1]{\overset{\rm (b)}{#1}}
\newcommand{\stepc}[1]{\overset{\rm (c)}{#1}}
\newcommand{\stepd}[1]{\overset{\rm (d)}{#1}}
\newcommand{\stepe}[1]{\overset{\rm (e)}{#1}}
\definecolor{myblue}{rgb}{.8, .8, 1}
\definecolor{mathblue}{rgb}{0.2472, 0.24, 0.6} 
\definecolor{mathred}{rgb}{0.6, 0.24, 0.442893}
\definecolor{mathyellow}{rgb}{0.6, 0.547014, 0.24}
\newcommand{\calA}{{\mathcal{A}}}
\newcommand{\calB}{{\mathcal{B}}}
\newcommand{\calC}{{\mathcal{C}}}
\newcommand{\calE}{{\mathcal{E}}}
\newcommand{\calF}{{\mathcal{F}}}
\newcommand{\calN}{{\mathcal{N}}}
\newcommand{\calP}{{\mathcal{P}}}
\newcommand{\calX}{{\mathcal{X}}}
\crefname{lemma}{Lemma}{Lemmas}
\Crefname{lemma}{Lemma}{Lemmas}
\crefname{thm}{Theorem}{Theorems}
\Crefname{thm}{Theorem}{Theorems}
\begin{document}

\title{Optimal No-regret Learning in Repeated First-price Auctions}
\author{Yanjun Han, Zhengyuan Zhou, Tsachy Weissman\thanks{Y. Han is with the Courant Institute of Mathematical Sciences and the Center for Data Science, New York University, email: \url{yanjunhan@nyu.edu}. Z. Zhou is with the Stern School of Business, New York University, email: \url{zzhou@stern.nyu.edu}. T. Weissman is with the Department of Electrical Engineering, Stanford University, email: \url{tsachy@stanford.edu}. This project was supported in part by NSF awards CCF-2106467 and CCF-2106508. Y. Han and T. Weissman were partially supported by the Yahoo Faculty Research and Engagement Program.}}
\maketitle
\begin{abstract}
We study online learning in repeated first-price auctions where a bidder, only observing the winning bid at the end of each auction, learns to adaptively bid in order to maximize her cumulative payoff. To achieve this goal, the bidder faces censored feedback: if she wins the bid, then she is not able to observe the highest bid of the other bidders, which we assume is \textit{iid} drawn from an unknown distribution. In this paper, we develop the first learning algorithm that achieves a near-optimal $\widetilde{O}(\sqrt{T})$ regret bound, by exploiting two structural properties of first-price auctions, i.e. the specific feedback structure and payoff function. 

We first formulate the feedback structure in first-price auctions as partially ordered contextual bandits, a combination of the graph feedback across actions (bids), the cross learning across contexts (private values), and a partial order over the contexts. We establish both strengths and weaknesses of this framework, by showing a curious separation that a regret nearly independent of the action/context sizes is possible under stochastic contexts, but is impossible under adversarial contexts. In particular, this framework leads to an $O(\sqrt{T}\log^{2.5}T)$ regret for first-price auctions when the bidder's private values are \emph{iid}.  

Despite the limitation of the above framework, we further exploit the special payoff function of first-price auctions to develop a sample-efficient algorithm even in the presence of adversarially generated private values. We establish an $O(\sqrt{T}\log^3 T)$ regret bound for this algorithm, hence providing a complete characterization of optimal learning guarantees for first-price auctions. 
	
\end{abstract}
\tableofcontents

\section{Introduction}
With the rapid proliferation of e-commerce, digital advertising has become
the predominant marketing channel across the industries: in 2019, businesses in US alone~\cite{news1} have spent more than 129 billion dollars--a number that has been fast growing and projected to increase--on digital ads, surpassing for the first time the combined amount spent via traditional advertising channels (TV, radio, newspapers, etc.), which falls short of 20 billion dollars. 
Situated in this background, online auctions--a core component of digital advertising--have become the most economically impactful element, both for publishers (entities that sell advertising spaces through auctions, a.k.a. sellers) and for advertisers (entities that buy advertising spaces through auctions to advertise, a.k.a. bidders). In practice, online advertising is implemented on platforms known as \emph{ad exchanges}, where the advertising spaces are sold through auctions between sellers and bidders.

In the past, due to its truthful nature that bidding one's private value is weakly dominant, the second-price auction\footnote{In a second-price auction, the highest bidder wins the auction but only pays the second-highest bid.} (possibly with reserve prices, also known as the Vickrey auction~\cite{vickrey1961counterspeculation}) was a popular auction mechanism and was almost universally adopted for online display ads auctions~\cite{lucking2000vickrey,klemperer2004auctions,lucking2007pennies}. 
However, very recently there has been an industry-wide shift from second-price auctions to first-price auctions in display ads\footnote{A wide range of ads, often made up of texts, images or video segments that encourage the user to click-through to a landing page and take some action (buy a product, use a service, sign up for a class etc).} auctions~\cite{despotakis2019first}, which account for 54\% of the digital advertising market share, a percentage that has seen continued growth ``fuled by the upswing in mobile browsing, social media activities, video ad formats, and the developments in targeting technology''~\cite{choi2020online}.  The remaining market share is dominated by search ads (e.g. sponsored search ads), which at this point are still exchanged between publishers and advertisers via generalized second-price auctions, although that could change in the future too.
See Figure \ref{fig:3}(a) for a schematic diagram.

Such a shift to first-price auctions has occured for several reasons:  enhanced transparency where the seller no longer has the ``last look'' advantage, an increased revenue of the seller and hence the exchange (which charges a percentage of the winning bid) and finally, fairness \cite{news2, Google}. To understand the last point on fairness, note that a seller would sometimes sell an advertising slot on different exchanges, and take the highest payment across the exchanges, which is known as \emph{header bidding}. Consequently, under second-price auctions, it is possible that a bidder who bids lower ends up winning the final bid, a fairness issue that would not happen under first-price auctions\footnote{This is not to say that the fairness issue under second-price auctions cannot be fixed. Technologically, it can be fixed easily. However, the bigger incentive (due to the reasons already mentioned) is for the exchanges to switch to first-price auctions, which also do not suffer from the fairness issue.}.

As a result of these advantages, several exchanges (e.g. AppNexus, Index Exchange and OpenX) started to roll out first-price auctions in 2017 \cite{Exchange}, and Google Ad Manager (previously known as Adx) completed its move to the first-price auctions\footnote{First-price auctions have also been
	the norm in several more traditional settings, including the mussels auctions~\cite{van2001sealed} (see~\cite{esponda2008information} for more discussion).} at the end of 2019 \cite{Google2}. 
This shift brings forth important challenges to bidders since the optimal bidding strategy in first-price auctions is no longer truthful. This thus leads to an important and pressing question, one that was absent in second-auctions prior to the shift: how should a bidder (adaptively) bid to maximize her cumulative payoffs when she needs to bid repeatedly facing a first-price auction?

Needless to say, the rich literature on auctions theory has studied related aspects of the problem. Broadly speaking, there are two major approaches that provide insights into bidding strategies in auctions. The first (and also the more traditional) approach takes a game-theoretic view by assuming a Bayesian setup, where the bidders have perfect or partial knowledge of each other's private valuations modeled as probability distributions. Proceeding from this standpoint, the pure or mixed (Nash) equilibria that model rational and optimal outcomes of the auction can be derived  \cite{wilson1969communications,myerson1981optimal,riley1981optimal}. Despite its elegance, an important shortcoming of this game-theoretic framework is that the participating bidders often do not have an accurate modeling of one's own value distributions. Consequently, these value distributions are even more unlikely to be known to other bidders or the seller in practice \cite{wilson1985game}. 

To mitigate this drawback, the second (and more recent) approach is based on online learning in repeated auctions, where the participants can learn their own or others' value distributions over time. Under this framework, a flourshing line of literature studies the second-price auction, mostly from the seller's perspective who aims for an optimal reserve price \cite{medina2014learning,cesa2014regret,roughgarden2019minimizing,zhao2020online}. There are also a few papers that take the bidder's perspective in the second-price auction \cite{mcafee2011design,weed2016online}, where the bidder does not have a perfect knowledge of her own valuations. 

However, to date, with the exception of the pioneering work~\cite{balseiro2019contextual} (which has adopted a cross-learning approach discussed in more detail in Section~\ref{subsec:related}), the problem of learning to bid in repeated first-price auctions has not yet been adequately addressed. 
In fact, not only is this not yet rigorously studied in an intellectual framework, but it also appears that no effective heuristics have been developed satisfactorially by the bona fide bidders in the industry. As documented in a recent report by the ad exchange AppNexus in 2018, ``the available evidence suggests that many large buyers have yet to adjust their bidding behavior for first-price auctions''~\cite{news7}, and as a result, after the transition to first-price occured, the bidders' spending increased substantially; see Figure~\ref{fig:3}(b). Similar empirical findings were also obtained in a recent study \cite{goke2021learning}. Consequently, developing methologically sound bidding schemes in first-price auctions presents an important research opportunity.

In this paper, we aim to tackle this problem and establish the optimal bidding strategy which minimizes the bidder's regret. Specifically, we consider learning in repeated first-price auctions where only the transaction price (i.e. the winning bid) is observed by each bidder and where the private values of the bidder may vary over time. This feedback structure is a valid feedback in practice (e.g. in equity markets where the transaction price of a stock is announced in real time), and more importantly, leads to an interesting and sophisticated scenario of \emph{censored feedback} in theory: the \emph{winner} cannot observe the highest other bid (HOB) on that round. In order for the bidder to learn from the history, we assume that the highest bids of others are stochastic and follow an unknown \textit{iid} distribution. In our online learning setting, the bidder is competing against a strong oracle, one that knows the underlying HOB distribution and hence can bid optimally at each time.  More discussions on these assumptions will be placed in the problem formulation. 

\begin{figure}[t!]
	\centering
	\begin{subfigure}[t]{0.45\textwidth}
		\centering
		\includegraphics[height=4.5cm]{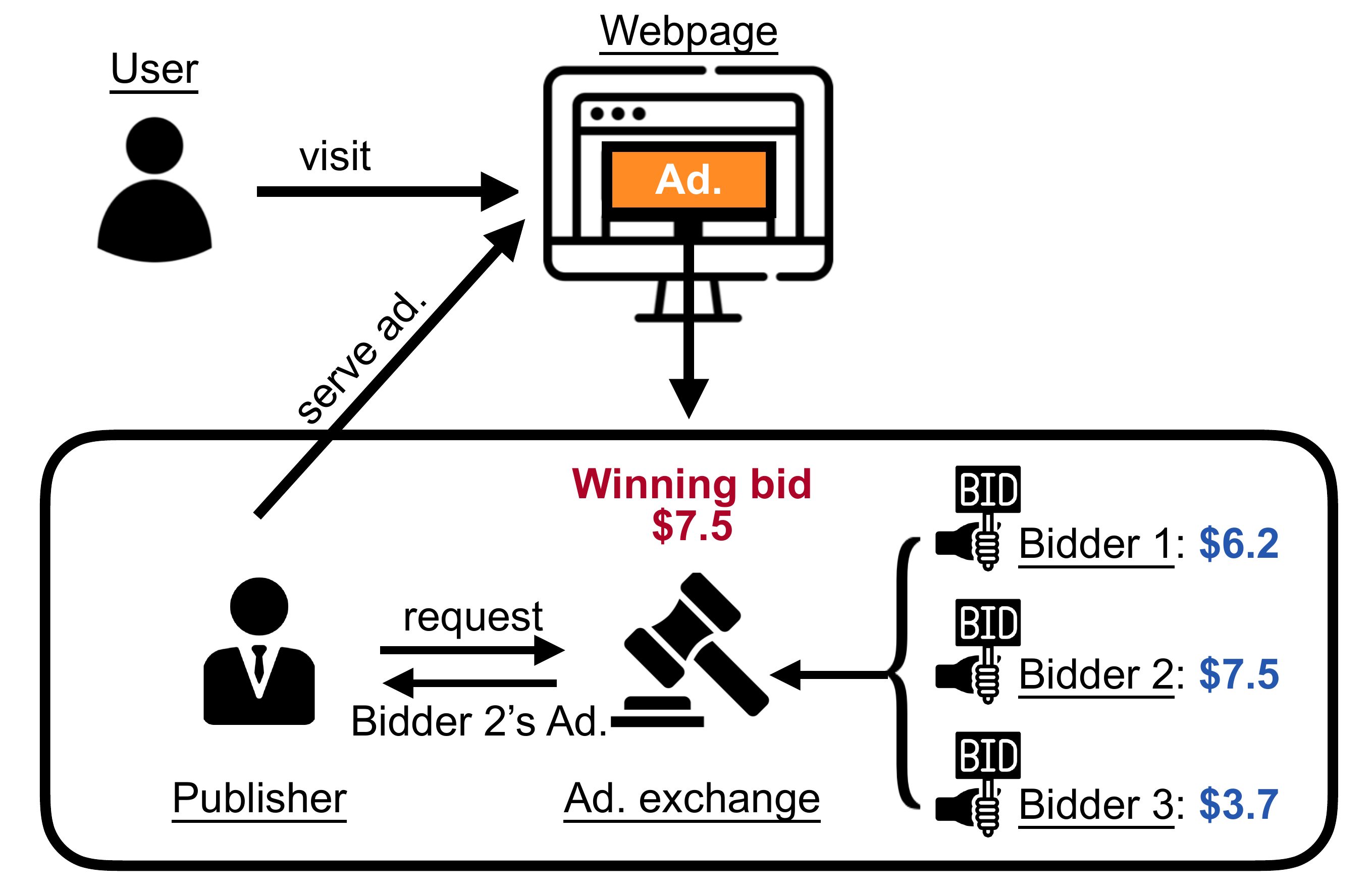} 
		\caption{} \label{fig:1}
	\end{subfigure}
	\begin{subfigure}[t]{0.52\textwidth}
		\centering
		\includegraphics[height=4.5cm]{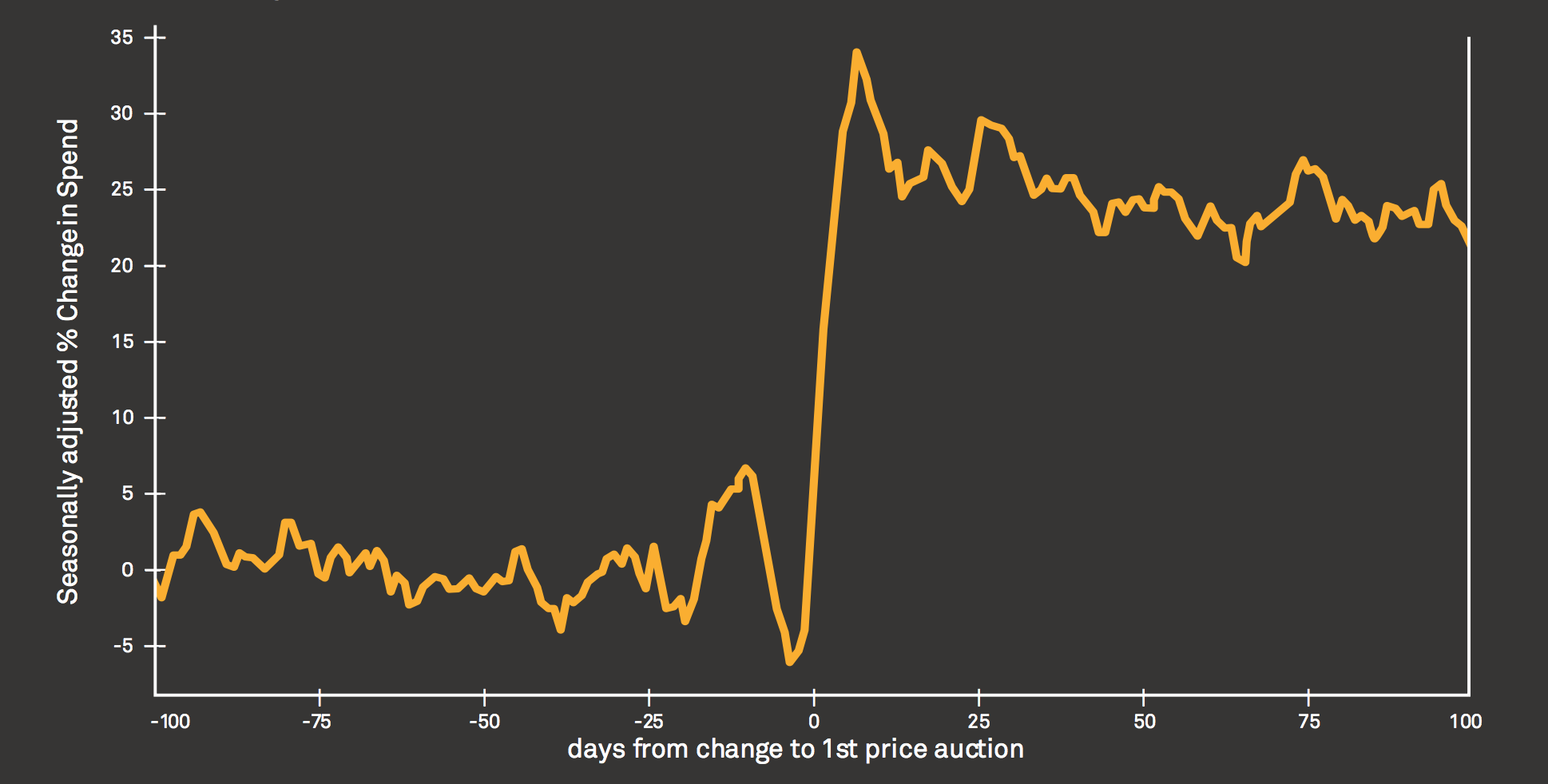} 
		\caption{} \label{fig:2}
	\end{subfigure}
	\hfill
	\caption{
		Panel (a) provides a simplified schematic diagram first-price display ads auction. 
		Panel (b) shows the spending change for bidders' on AppNexus~\cite{news7}, which switched from second-price to first-price auctions in 2018. } \label{fig:3}
\end{figure}

\subsection{Key Challenge}

The key challenge in this problem lies in censored feedback and its impact on the learning process: if the bidder bids a good price that wins, she will not learn anything about the HOB. This is the \textit{curse} of censored feedback, which presents an exploration-exploitation trade-off that is distinct from and more challenging than that of the standard contextual bandits (with bandit feedback).

To appreciate the difficulty of the problem, we cast the first-price auction as a contextual bandit where the bids are the \emph{actions}, and the bidder's private values are the \emph{contexts}. The censored feedback in first-price auctions leads to a feedback structure across both actions and contexts, discussed in Section \ref{subsec.results}. Under this feedback structure, with $K$ actions and $M$ contexts, the idea of using graph feedback across actions in \cite{alon2015online,cohen2016online,alon2017nonstochastic,lykouris2020feedback} achieves $\widetilde{O}(\sqrt{MT})$ regret over a time horizon $T$, and the idea of exploiting cross learning across contexts in \cite{balseiro2019contextual,dann2020reinforcement} leads to $\widetilde{O}(\sqrt{KT})$ regret. Both results are much larger than $\widetilde{O}(\sqrt{T})$, as $K$ and $M$ grow with $T$ in the discretization.

Another type of algorithm making use of the censored feedback is called explore-then-commit (ETC), where the bidder intentionally loses the first $T_0$ rounds to estimate the HOB distribution, and performs a pure exploitation in the remaining rounds. It is easy to see that the optimal choice of $T_0$ is $\Theta(T^{1/3})$, and the ETC algorithm achieves an $\widetilde{O}(T^{2/3})$ regret; this is still larger than $\widetilde{O}(\sqrt{T})$. 


Here is the key contribution of this paper: we show that an $\widetilde{O}(\sqrt{T})$ regret is possible for first-price auctions, by identifying additional structures on the feedback and payoff. As summarized in Section \ref{subsec.results}, our first algorithm requires a novel partial-ordering property possessed by first-price auctions which provides additional feedback, and our second algorithm crucially utilizes a correlation structure which is unique in the payoff function of first-price auctions. Furthermore, we show that relying \emph{only} on the feedback structure does not lead to an $\widetilde{O}(\sqrt{T})$ regret for first-price auctions in general - we prove an $\Omega(\min\{\sqrt{MT}, \sqrt{KT}, T^{2/3}\})$ regret lower bound which shows the optimality of the above algorithms in the literature. In other words, achieving the optimal learning performance in first-price auctions requires \emph{both} the development of general bandit algorithms, \emph{and} an in-depth exploitation of the specific structures in first-price auctions.

\subsection{Our Algorithms and Results}\label{subsec.results}
In this paper, we provide two algorithms which achieve the near-optimal $\widetilde{O}(\sqrt{T})$ regret in first-price auctions with stochastic and adversarial private values, respectively. Although the former setting is subsumed by the latter, the algorithm developed for the former is entirely different, and has deeper connections to --- and thus enriches --- the bandits literature. 

\subsubsection{Stochastic private values}
The first algorithm works under stochastic private values and mainly exploits the censored feedback structures in the auction:
\emph{\begin{enumerate}
		\item Once we know the payoff under a certain bid, then irrespective of whether this bid wins or not, we know the payoff of \emph{any} larger bid; 
		\item Once we know the payoff of a bid under a certain private value, then we know the payoff of that bid under \emph{any} private values. 
\end{enumerate}}
In the bandits language, the bids are \emph{actions}, and the private values are \emph{contexts}. The above observations state that, bidding in first-price auctions is a contextual bandit with a one-sided feedback across actions and a full-information feedback across contexts. 
In the bandits literature, there has been studies on each of the above-mentioned 
two feedback structures under the names of graph feedback \cite{alon2015online,cohen2016online,alon2017nonstochastic,lykouris2020feedback} and cross learning \cite{balseiro2019contextual}, respectively. 
The joint study of both structures is also available in a concurrent work \cite{dann2020reinforcement}. However, these results do not lead to the desired $\widetilde{O}(\sqrt{T})$ regret in first-price auctions: the above two observations are insufficient without the third: 
\emph{\begin{enumerate}
		\item[3.] The optimal bid never decreases when the private value increases. 
\end{enumerate}}
The third observation has no direct counterpart in the bandits literature, and in this paper we state a general version of this observation and formulate a new bandit instance called the \emph{partially ordered contextual bandits}. Our first algorithm is a general algorithm for partially ordered contextual bandits, and leads to an $O(\sqrt{T}\log^{2.5} T)$ regret (Theorem \ref{thm.stochastic}, almost matching the lower bound $\Omega(\sqrt{T})$ in Theorem \ref{thm:sqrtT_lower_bound}) when specialized to first-price auctions with \emph{iid} private values. 

For both pedagogical clarity and articulation of its novelty and connections with the existing work, we build up our algorithm in several layers, each with increasing complexity:

\begin{itemize}
	\item First, we consider stochastic $K$-armed bandits with graph feedback across actions (pulling one arm reveals the rewards of not only this arm, but also arms that form edges with this arm) and present a simple arm-elimination algorithm (Algorithm \ref{algo.MAB}) that achieves the optimal $\widetilde{\Theta}(\sqrt{\alpha T})$ regret where $\alpha$ is the independence number of the feedback graph (cf. Theorem \ref{thm.MAB_feedback}). 
	\item Second, we consider contextual bandits with cross learning across contexts and graph feedback across actions, where taking one action under a context reveals the rewards of all context-action pairs whose actions form edges with this action in the feedback graph. We show that a simple generalization of our non-contextual algorithm achieves an $\widetilde{O}(\sqrt{\min\{\alpha M, K \}T})$ regret bound, without any assumption on how the context is generated, where $K$ is the number of actions and $M$ is the number of contexts (cf. Algorithm \ref{algo.contextual} and Theorem \ref{thm.contextual}). This recovers both results of $\widetilde{O}(\sqrt{\alpha MT})$ regret with only graph feedback in \cite{alon2015online} and $\widetilde{O}(\sqrt{KT})$ regret with only cross learning in \cite{balseiro2019contextual}, by setting $M=1$ and $\alpha=K$, respectively. 
	
	\item Third, we impose an additional partial ordering structure over contexts to form a partially ordered contextual bandit, and propose our main algorithm (Algorithm \ref{algo.partial_order}) that achieves an $\widetilde{O}(\sqrt{\alpha\beta T})$ regret bound when the contexts are \emph{iid} generated, where $\beta$ is the independence number of the partial ordering graph (cf. Theorem \ref{thm:partial_order}). This is a notable improvement over the $\widetilde{O}(\sqrt{\alpha M T})$ regret achieved above, and is the key to obtaining an $\widetilde{O}(\sqrt{T})$ regret for first-price auctions (with stochastic private values), in which case $\alpha=\beta=1$ (cf. Corollary \ref{cor.stochastic}). 
	\item Finally, we establish a fundamental learning limit for partially ordered contextual bandits, and in particular, a curious separation between stochastic and adversarial contexts. Specifically, we show that if the contexts are generated by an oblivious adversary, even when $\beta = 1$, any learning algorithm must suffer from an $\Omega(\min\{\sqrt{\alpha MT}, \sqrt{KT}, T^{2/3}\})$ regret lower bound in the worst case (cf. Theorem \ref{thm.lower_bound}), which is minimax optimal (up to log factors) when $\alpha=1$ (cf. Corollary \ref{cor.minimax_regret}). The implication of this minimax optimal bound $\widetilde{\Theta}(\min\{\sqrt{MT}, \sqrt{KT}, T^{2/3} \})$ is profound for first-price auctions (where $\alpha = \beta =1$): when the private values are advesarially generated, then casting first-price auctions as a partially ordered contextual bandits problem yields at best a $\widetilde{\Theta}(T^{2/3})$ regret bound (see Remark~\ref{rem:lower_bound_insufficient}). As such, if $\widetilde{O}(\sqrt{T})$ regret is indeed achieveable for first-price auctions under adversarial private values, then the abstraction provided by partially ordered contextual bandits is not fine-grained enough to capture \textit{all} the useful properties of a first-price auction, in which case a new framework is called for.
\end{itemize}

\subsubsection{Adversarial private values}
Surprisingly, the $\widetilde{O}(\sqrt{T})$ regret is indeed achieveable for first-price auctions under adversarial private values, and our second algorithm (which no longer operates in the previous framework) achieves it. In fact, a closer look reveals that the above framework makes no assumption on the payoffs of different actions (except for taking value in $[0,1]$).
In contrast, the crux of our second algorithm makes use of the special payoff structure of first-price auctions and comes from the following observation which complements the previous observation 1: 
\emph{\begin{enumerate}
		\item[4.] once we know the payoff under a bid, then we \emph{partially} know the payoff of any smaller bid. 
\end{enumerate}}
This observation is a consequence of the correlated payoff structure which is unique in first-price auctions, and the formal meaning of ``partial'' is detailed in Section 4. Roughly speaking, the payoff function in first-price auctions crucially depends on the CDF of the HOB distribution, which could be written as a sum of several pieces --- the bidder has different amounts of information for different pieces. Building on this insight, we are able to provide an interval-splitting scheme that estimates the unknown HOB distribution via a dynamic partition scheme: the CDF of the HOB distribution is estimated on an appropriately chosen set of partitioning intervals, where each interval has a certain sample size that is just sufficient to estimate the probability of that interval to the required accuracy.
Further, on top of the previous insights, we develop a master algorithm which decouples the exploration and exploitation in an efficient way to help handle the complicated dependence in sequential learning. 
Putting all these pieces together yields the learning algorithm ML-IS-UCB (Algorithm~\ref{algo.ml-ucb}), which
achieves $O(\sqrt{T}\log^3 T)$ regret (Theorem~\ref{thm.arbitrary}) and is hence minimax optimal (up to $\log$ factors) as a result of the $\Omega(\sqrt{T})$ lower bound given in Theorem~\ref{thm:sqrtT_lower_bound}.


\subsection{Related Work}\label{subsec:related}

Modeling repeated auctions as bandits has a long history~\cite{blum2004online,devanur2009price,babaioff2014characterizing,medina2014learning,cesa2014regret,babaioff2015truthful,mohri2015revenue,weed2016online,cai2017learning,golrezaei2019dynamic,roughgarden2019minimizing,zhao2020online} that lies at the intersection between learning and auctions.  In these work, the auctions are typically modeled as multi-armed bandits without any contexts, where the competing oracle can only choose a fixed action. At the same time, some work considered the censored feedback structure in different auction settings. 
In \cite{cesa2014regret}, the seller can observe the second highest bid only if she sets a reserve price below it in the second-price auction. In \cite{weed2016online}, the bidder in the second-price auction does not know her own private valuation of a good and must learn to bid when simultaneously learning that valuation. In particular, the bidder can update her private valuation of the good only when she makes the highest bid and gets the good. 
Another important work in the unknown valuation setting is~\cite{10.1145/3219166.3219208}, where a general auction mechanism is considered. The work \cite{10.1145/3219166.3219208} differs from ours in several key aspects. First, we assume unknown HOB and known private valuation, while \cite{10.1145/3219166.3219208} assumes known HOB but aims to learn private valuations. Second, \cite{10.1145/3219166.3219208} assumes a different feedback model that the bidder observes full information only if she wins. 
Third, the vanishing regret of the online learning algorithm of \cite{10.1145/3219166.3219208}
is with respect to the best fixed bid in hindsight, whereas ours competes with a bidding policy that varies the bids depending on the private values. 
In \cite{zhao2019stochastic} and \cite{zhao2020online}, the authors proposed a one-sided full-information structure in stochastic bandits, with applications in the second-price auction. This one-sided feedback structure is similar to ours, but their work did not consider the contextual case. 

Beyond the auction setting, there is a rich line of research in the bandit problem with general feedback structures (also known as \emph{partial monitoring}). We discuss how our results for partially ordered contextual bandits are related to existing ones in the literature. For multi-armed bandits without contexts, it was shown in \cite{kocak2014efficient,alon2015online,alon2017nonstochastic} that the optimal regret under graph feedback is $\widetilde{\Theta}(\sqrt{\alpha T})$, where $\alpha$ is the independence number of the feedback graph. Under the special case of stochastic bandits, simpler algorithms based on arm elimination were also proposed in \cite{cohen2016online,lykouris2020feedback}. Our arm elimination based algorithm is of a similar spirit, but easier to generalize to the contextual case (more detailed comparisons on the arm elimination algorithms are discussed under Algorithm \ref{algo.MAB}). The setting of contextual bandits with both cross learning and graph feedback is a combination of action graph feedback considered in~\cite{alon2015online} and cross learning considered in~\cite{balseiro2019contextual}. At the action level, information flows according to the given feedback graph, while at the context level, information flows everywhere according to the complete graph. Our regret bound $\widetilde{O}(\sqrt{\min\{\alpha M, K\}T})$ recovers both results of $\widetilde{O}(\sqrt{\alpha MT})$ regret with only graph feedback in \cite{alon2015online} and $\widetilde{O}(\sqrt{KT})$ regret with only cross learning in \cite{balseiro2019contextual}, by setting $M=1$ and $\alpha=K$, respectively. For additional work on partial feedback with contexts, the paper \cite{cesa2017algorithmic} studied a one-sided full-information feedback structure under the setting of online nonparametric learning and proved a $\widetilde{\Theta}(T^{2/3})$ bound on the minimax regret. The recent work \cite{dann2020reinforcement} considered a general feedback graph over the context-action pair in reinforcement learning and proved an $\widetilde{O}(\sqrt{mT})$ regret bound, where $m$ is the size of the maximum acyclic subgraph. When the feedback graph over actions is acyclic (as is the case for first-price auctions), it holds that $m=K$ and this result reduces to the $\widetilde{O}(\sqrt{KT})$ upper bound in \cite{balseiro2019contextual}. Consequently, the results in \cite{balseiro2019contextual,dann2020reinforcement} did not lead to the $\widetilde{O}(\sqrt{\alpha\beta T})$ regret for partially ordered contextual bandits, nor the separation between stochastic and adversarial contexts. 

We also review the literature on censored observations in pricing or auctions. In dynamic pricing, a binary feedback indicating whether an item is sold to a particular buyer was studied in \cite{kleinberg2003value,cesa2018dynamic}, where they derived the optimal regret using multi-armed bandits. In particular, their results imply that if only a binary feedback is revealed in the first-price auction (i.e. whether the bidder wins or not), even under an identical private value of the bidder over time, the worst-case regret is at least $\Omega(T^{2/3})$. 
The same $\widetilde{\Theta}(T^{2/3})$ regret bound was also shown in \cite{balseiro2019contextual} for learning in repeated first-price auctions with binary feedback, where the bidder only knows whether she wins the bid or not. In comparison, our work shows a $\widetilde{\Theta}(\sqrt{T})$ regret when the winning bid is additionally available, thereby adding an interesting piece to the picture of learning in repeated first-price auctions. Moreover, the regret bounds of \cite{kleinberg2003value,cesa2018dynamic,balseiro2019contextual} were obtained by only using the feedback structures in the respective bandit problems. In contrast, our setting of first-price auctions under arbitrary/adversarial private values is not amenable to either cross-learning or partially ordered contextual bandits, and we need to identify and exploit additional structures in first-price auctions to design an algorithm that achieves the optimal $\widetilde{O}(\sqrt{T})$ regret. 

\section{Problem Formulation and Main Results}\label{sec.formulation}
In this section, we formulate our learning problem in repeated first-price auctions, with discussions on various assumptions. The main results of this paper are stated subsequently. 

\subsection{Notation}\label{subsec:notation}
For a positive integer $n$, let $[n]\triangleq \{1,2,\cdots,n\}$. For an $x\in\bR$, let $\lceil x \rceil$ be the smallest integer no smaller than $x$. For $x,y\in \bR$, let $x\wedge y=\min\{x,y\}$ be the minimum. For a square-integrable random variable $X$, let $\bE[X]$ and $\var(X)$ be the expectation and variance of $X$, respectively. For an event $A$, let $\1(A)$ be the indicator function of $A$ which is one if $A$ occurs and zero otherwise. For probability measures $P$ and $Q$ on the same probability space, let $\|P-Q\|_{\text{TV}} = \frac{1}{2}\int|dP-dQ|$ and $D_{\text{KL}}(P\|Q) = \int dP\log\frac{dP}{dQ}$ be the total variation distance and KL-divergence between $P$ and $Q$, respectively. We also adopt the standard asymptotic notation: for non-negative sequences $\{a_n\}$ and $\{b_n\}$, $a_n = O(b_n)$ if $\limsup_{n\to\infty} a_n/b_n < \infty$, $a_n = \Omega(b_n)$ if $b_n = O(a_n)$, and $a_n = \Theta(b_n)$ if both $a_n = O(b_n)$ and $a_n = \Omega(b_n)$. We also use $\widetilde{O}(\cdot), \widetilde{\Omega}(\cdot), \widetilde{\Theta}(\cdot)$ to denote the respective meanings above within multiplicative poly-logarithmic factors in $n$.

\subsection{Problem Setup}\label{subsec:setup} 
We focus on a single bidder in a population of bidders during a time horizon $T$. At the beginning of each time $t=1,2,\cdots,T$, the bidder sees a particular good and receives a private value $v_t\in [0,1]$ for this good. Based on her past observations, the bidder bids a price $b_t\in [0,1]$ for this good. Let $m_t\in [0,1]$ be the highest other bid (HOB), i.e. the maximum bid of all other bidders. The outcome of the auction is as follows: if $b_t \ge m_t$, the bidder gets the good and pays $b_t$; if $b_t<m_t$, the bidder does not get the good and pays nothing\footnote{By a slight perturbation, we assume without loss of generality that the bids are never equal.}. Consequently, the instantaneous payoff/reward of the bidder is
\begin{align}\label{eq.instant_reward}
	r(v_t,b_t;m_t) = (v_t - b_t)\1(b_t \ge m_t). 
\end{align}

We assume that only the winning bid is revealed at the end of time $t$ (hence the bidder \textit{only} observes $m_t$ if she loses). This can be viewed as an informational version of the \emph{winner's curse} \cite{capen1971competitive} where the winner has less information to learn from, and the feedback available to the bidder at time $t$ is $\max\{b_t, m_t\}$, or equivalently, the quantities $\1(b_t\ge m_t)$ and $m_t\1(b_t<m_t)$. The above structure holds in a number of practical first-price auctions where only the final transaction price is announced \cite{esponda2008information}. 
Compared with the full-information feedback (i.e. $m_t$ is always revealed) and binary feedback (i.e. only $\1(b_t\ge m_t)$ is revealed), this structure is technically more challenging to exploit and leads to various technical findings. We also note that this setting is symmetric to the case where the HOB is only revealed to the winner, i.e. bidder's feedback is $\min\{b_t, m_t\}$ instead of $\max\{b_t, m_t\}$; see Section \ref{subsec.reversed_feedback} for more discussions. 

To model other bidders' bids, we assume $m_t$'s are \emph{iid} drawn from an unknown cumulative distribution function (CDF) $G(\cdot)$, with $G(x) = \bP(m_t\le x)$. The main rationale behind this assumption is that there is potentially a large population of bidders, and on average their valuations and bidding strategies are static over time, and in particular, independent of a single bidder's private valuation. Moreover, the \emph{iid} assumption makes learning possible for the bidder, so that in the sequel she can compete against a strong oracle. Note that we do not make any additional smoothness or shape assumptions on $G(\cdot)$, e.g. $m_t$ could either be discrete or continuous. This assumption is also adopted by many demand side platforms where stationarity of data is assumed for a certain time window. Moreover, learning could become impossible even when the \emph{iid} assumption is only slightly violated; see Section \ref{subsec.lower_bound} for more details. 

We consider two models for the private values $v_t$. One is that $(v_t)_{t\in [T]}$ are stochastic and \emph{iid} drawn from some unknown distribution $F$. The other is that $(v_t)_{t\in [T]}$ is an adversarial sequence taking values in $[0,1]$, chosen by an \emph{oblivious adversary}, who may choose $v_t$ in an arbitrary way--without knowing and hence are independent of the bids $(b_t,m_t)_{t\in [T]}$--as the private values in practice are usually learned using other sources of information and independently of the bidding process. We recall that $v_t$ is always revealed to the bidder, regardless of how it is generated.  

As such, the expected reward of the bidder at time $t$ is
\begin{align}\label{eq.expected_reward}
	R(v_t,b_t) = \bE[r(v_t,b_t;m_t)] = (v_t - b_t)G(b_t). 
\end{align}
The regret of the bidder is defined to be the difference in the cumulative rewards of the bidder's bidding strategy and the optimal bidding strategy which has the perfect knowledge of $G(\cdot)$:
\begin{align}\label{eq.regret}
	R_T(\pi; v) = \sum_{t=1}^T \left(\max_{b\in [0,1]}R(v_t,b) - R(v_t,b_t) \right),
\end{align}
where $\pi$ is the overall bidding policy that generated $b_t$'s, and $v=(v_1,\cdots,v_T)$ is a given sequence of private values. The bidder's goal is to devise a policy $\pi$ minimizing the expected regret $\bE[R_T(\pi; v)]$ subject to the censored feedback structure. When $v$ is stochastic, the expectation $\bE[R_T(\pi;v)]$ in \eqref{eq.regret} is taken jointly over the randomness of both $\pi$ and $v$. In contrast, when $v$ is chosen adversarily, the bidder aims to achieve a uniformly small expected regret $\bE[R_T(\pi;v)]$ regardless of $v$. 

\subsection{Main Results}\label{subsec:main}

Our first result shows that, under stochastic private values, an $\widetilde{O}(\sqrt{T})$ expected regret is attainable. 
\begin{theorem}[Bidding with stochastic private values]\label{thm.stochastic}
	Let $v_1,\cdots,v_T$ be iid drawn from any unknown distribution $F$. Then there exists a bidding policy $\pi$ (Algorithm \ref{algo.partial_order} applied to first-price auctions, see Corollary \ref{cor.stochastic}) satisfying
	\begin{align*}
		\bE[R_T(\pi;v)] \le C\sqrt{T}\log^{2.5} T, 
	\end{align*}
	where the expectation is taken jointly over the randomness of $v$ and the policy $\pi$, and $C>0$ is an absolute constant independent of the time horizon $T$ and the CDFs $(F,G)$. 
\end{theorem}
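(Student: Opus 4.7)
The plan is to deduce Theorem~\ref{thm.stochastic} by casting the repeated first-price auction as an instance of the monotone group contextual bandit (MGCB) problem and then directly invoking the general regret bound from Theorem~\ref{thm.cose} for the MSE policy. The reduction is natural: treat the private value $v_t \in [0,1]$ as the iid context, treat (a suitably discretized version of) the bid $b_t \in [0,1]$ as the action, and take $R(v,b) = (v-b)G(b)$ as the context--action expected-reward function.

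The reduction requires verifying the two defining structural properties of an MGCB. For the \emph{monotone-optimal-action} property, I would show that $b^\ast(v) = \arg\max_{b\in[0,1]} (v-b)G(b)$ is non-decreasing in $v$; this follows from a standard monotone comparative statics argument, since for any $v < v'$ and $b < b'$ one has
\begin{equation*}
R(v',b') - R(v',b) - R(v,b') + R(v,b) \;=\; (v'-v)\bigl(G(b') - G(b)\bigr) \;\ge\; 0,
\end{equation*}
so $R$ has increasing differences in $(v,b)$ and its maximizer is monotone in $v$. For the \emph{monotone-group-feedback} property, note that if the bidder plays $b$ and wins, then any larger bid $b' \ge b$ also wins and deterministically yields the known reward $v_t - b'$; and if $b$ loses, then $m_t$ is revealed exactly, whence the reward $(v_t - b')\1(b' \ge m_t)$ of any alternative bid $b'$ is computable offline. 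Thus the first-price auction sits inside the MGCB framework.

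Given these two verifications, one applies the MSE policy from Section~\ref{subsec.cose} after discretizing the bid space $[0,1]$ into a uniform grid of $K = \Theta(\sqrt{T})$ levels. The per-round discretization error is $O(1/K)$ because, on the event that the bidder would have won, the reward is $1$-Lipschitz in the bid; summed over $T$ rounds this contributes $O(\sqrt{T})$, absorbed into the $O(\sqrt{T}\log^2 T)$ bound produced by Theorem~\ref{thm.cose} (whose stochastic-context hypothesis is exactly the iid assumption on $v_t$ here). Combining gives the claimed $C\sqrt{T}\log^2 T$ expected regret.

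The main obstacle I anticipate is not the reduction itself but the interaction between the chosen grid and the (possibly atomic) distribution $G$: a jump of $G$ placed between two grid points can make the best on-grid bid $\Omega(1)$-worse than $b^\ast(v)$ for a $\Theta(1/K)$ fraction of contexts. This can be handled by exploiting monotonicity of $b^\ast(v)$ to bound the number of contexts whose optimal bid lies in any small interval, or by a data-driven refinement of the grid near places where many losing observations accumulate; either way the correction is a lower-order term. Once this subtlety is addressed, Theorem~\ref{thm.stochastic} follows as a direct corollary of Theorem~\ref{thm.cose}.
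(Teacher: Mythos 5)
Your overall route is the same as the paper's: reduce the auction to a monotone group contextual bandit, verify the two monotonicity properties, discretize, and invoke Theorem~\ref{thm.cose}. Your increasing-differences computation $R(v',b')-R(v',b)-R(v,b')+R(v,b)=(v'-v)(G(b')-G(b))\ge 0$ is a clean substitute for the paper's direct argument in Lemma~\ref{lemma.monotonicity}, and your verification of the feedback property matches the paper's. Two points need attention, one of which is a genuine omission.

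First, you discretize only the bid (action) space, but Definition~\ref{def.contextual_bandit} and the MSE policy require a \emph{finite context set}: the algorithm maintains an active set $\calA_c$ per context, the elimination rule references $\min\calA_{c-1}$, and the bound in Theorem~\ref{thm.cose} carries a $\log(KMT)$ factor. With $\calC=[0,1]$ left continuous, Theorem~\ref{thm.cose} simply does not apply as stated. The paper quantizes both spaces with $M=K=\lceil\sqrt{T}\rceil$ (rounding $v_t$ up to $\widetilde v_t\in\calC$), uses $|R(v,b)-R(\widetilde v,b)|=|v-\widetilde v|\,G(b)\le 1/M$ to control the extra error via Lemma~\ref{lemma.approx_regret}, and notes that iid values induce iid quantized contexts so the stochastic-context hypothesis of Theorem~\ref{thm.cose} still holds. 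You need this step; it costs an additional $O(T/M)=O(\sqrt T)$ and nothing more.

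Second, the obstacle you anticipate about atoms of $G$ is not actually there, and the machinery you propose (grid refinement, counting contexts per interval) is unnecessary. Rounding the bid \emph{up} to $\widetilde b=\min\{b'\in\calA: b'\ge b\}$ exploits monotonicity of $G$: one has $(v-b)G(b)\le (v-b)G(\widetilde b)\le (v-\widetilde b)G(\widetilde b)+1/K$, so a jump of $G$ between grid points is always captured by the next grid point at an extra payment of at most $1/K$, and the per-round discretization loss is uniformly $O(1/K+1/M)$ regardless of where the atoms sit. This is exactly the content of Lemma~\ref{lemma.approx_regret}. With these two repairs your argument coincides with the paper's proof.
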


Theorem \ref{thm.stochastic} is obtained as a special case to the regret bound on a more general class of problems, called the partially ordered contextual bandits (see Definition~\ref{assump:partial_order}).
However, we also show that if one works in this general class of problems, then the \emph{iid} condition of $v_t$ in Theorem \ref{thm.stochastic} is necessary for achieving an $\widetilde{O}(\sqrt{T})$ regret (see Remark~\ref{rem:lower_bound_insufficient} in Section \ref{subsec.lowerbound}). It turns out that this general class of problems does not capture certain specific structure of first-price auctions. Hence, in our second result, we propose another bidding strategy tailored specifically for the first-price auctions which achieves an $\widetilde{O}(\sqrt{T})$ regret (with a slightly worse logarithmic factor) under adversarial private values. 

\begin{theorem}[Bidding with arbitrary private values]\label{thm.arbitrary}
	Let $v_1,\cdots,v_T$ be any value sequence in $[0,1]$ which may be chosen by an oblivious adversary. Then there exists a bidding policy $\pi$ (the \text{\rm ML-IS-UCB} bidding policy in Section \ref{subsec.coUCB}) with
	\begin{align*}
		\bE[R_T(\pi;v)] \le C\sqrt{T}\log^3 T, 
	\end{align*}
	where the expectation is taken with respect to the randomness of $\pi$, and $C>0$ is an absolute constant independent of the time horizon $T$, the value sequence $v$ and the unknown CDF $G$. 
\end{theorem}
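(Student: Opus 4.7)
The plan is to exploit an extra partial-feedback structure that first-price auctions possess but a generic monotone group contextual bandit does not -- which is exactly what allows us to circumvent the $\Omega(T^{2/3})$ lower bound of Theorem~\ref{thm.lower_bound} for adversarial contexts. The key observation is that, even though a \emph{win} at bid $b_t$ reveals only $\1(m_t \le b_t) = 1$, the indicator $\1(m_t \le x)$ is in fact fully recoverable for every $x \ge b_t$ regardless of the auction's outcome: if the bidder wins then automatically $m_t \le b_t \le x$, while if she loses then $m_t$ is observed exactly. Hence every round contributes an unbiased Bernoulli$(G(x))$ sample to the estimation of $G(x)$ at any level $x \ge b_t$, even when the value sequence $v = (v_t)$ is adversarial.

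First I would construct an interval-splitting estimator of $G$. Maintain a dynamic partition $\mathcal{P}_t$ of $[0,1]$ whose breakpoints $0 = y_0 < y_1 < \cdots < y_K = 1$ are the only admissible bids. For each breakpoint $y_k$ set $\widehat{G}_t(y_k) = N_t(y_k)^{-1} \sum_{s \le t:\, b_s \le y_k} \1(m_s \le y_k)$ with $N_t(y_k) = |\{s \le t:\, b_s \le y_k\}|$, and attach a Hoeffding-type confidence radius $\mathrm{conf}_t(y_k) = O(\sqrt{\log T / N_t(y_k)})$. A cell is split (a new breakpoint inserted in its interior) as soon as a prescribed number of observations have accumulated inside it, the threshold being chosen so that the probability mass of every cell is resolved to within roughly $1/\sqrt{T}$. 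This adaptive refinement -- finer where $G$ concentrates mass, coarser where it is thin -- is what the IS component controls, and it automatically matches the resolution of the partition to the local difficulty.

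Second I would describe the master layer (ML). In each round $t$, after receiving $v_t$, form the UCB $U_t(y_k) = (v_t - y_k)_+ \cdot \min\{1, \widehat{G}_t(y_k) + \mathrm{conf}_t(y_k)\}$ at every current breakpoint and provisionally pick $\hat b_t = \arg\max_k U_t(y_k)$. The master overrides this exploitation step whenever some breakpoint \emph{below} $\hat b_t$ still has a confidence radius too wide to rule out a more profitable cell, forcing a low exploratory bid in that case. Since every override strictly increases $N_t(\cdot)$ at the exploratory breakpoint and eventually triggers a split, the total number of overrides and splits stays $\widetilde O(\sqrt{T})$ by the global sample budget. The regret then decomposes into (i) \emph{discretization}, controlled by the splitting rule so that $R(v_t, \cdot)$ varies by $\widetilde O(1/\sqrt{T})$ across any cell; (ii) \emph{UCB regret} $\sum_t \mathrm{conf}_t(b_t) \cdot (v_t-b_t)$, bounded by a potential-function / telescoping argument exploiting the monotonicity $N_t(y_{k+1}) \ge N_t(y_k)$; and (iii) \emph{exploration cost} from the forced overrides. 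Summing these three pieces produces the stated $O(\sqrt{T} \log^3 T)$ bound.

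The main obstacle is item (ii) under adversarial contexts, because the induced bid sequence $\{b_t\}$ is driven by the adversary through $v_t$, so the samples feeding each $\widehat{G}_t(y_k)$ arrive at an adversarial rate and the standard elliptical-potential machinery does not apply off the shelf. To handle this I would rely on the monotone cumulation $N_t(y_{k+1}) \ge N_t(y_k)$ together with the interval-splitting invariant that caps the number of samples per cell, so that $\sum_t \mathrm{conf}_t(b_t)(v_t-b_t)$ telescopes over the cell tree rather than over $t$; this converts what would otherwise be a crude $O(T^{3/4})$ bound into the targeted $\widetilde O(\sqrt{T})$. A secondary subtlety -- and the reason the master ML layer is needed at all -- is coordinating the split-triggers with the UCB selections so that the potential argument closes: without forced exploration the adversary could pin the bids to a narrow region and prevent the confidence radii at neighboring cells from shrinking fast enough to support the $\widetilde O(\sqrt{T})$ accounting.
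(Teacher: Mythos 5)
Your high-level plan (one-sided feedback, an interval-splitting estimator of $G$ on a $\widetilde{O}(\sqrt{T})$-point grid, a UCB layer, a master layer coordinating exploration) points in the right direction, but the two load-bearing ingredients of the paper's argument are missing, and what you substitute for them does not close the gap. Your estimator $\widehat{G}_t(y_k)=N_t(y_k)^{-1}\sum_{s\le t:\,b_s\le y_k}\1(m_s\le y_k)$ with a Hoeffding radius $O(\sqrt{\log T/N_t(y_k)})$ uses only the monotone one-sided feedback, i.e.\ exactly the structure of a monotone group contextual bandit; by the paper's own Theorem~\ref{thm.lower_bound}, that structure alone forces $\Omega(T^{2/3})$ regret under adversarial values. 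Concretely, under a decreasing value sequence the counts $N_t(b_t)$ at the chosen bids stay small, $\sum_t N_t(b_t)^{-1}$ is of order $K\log T=\widetilde{\Theta}(\sqrt{T})$, and Cauchy--Schwarz then yields only $\widetilde{O}(T^{3/4})$; no telescoping ``over the cell tree'' can repair this, and your adaptive cell-splitting neither creates samples at low bids nor shrinks a Hoeffding radius below $N_t(y_k)^{-1/2}$. The ingredient that actually breaks the lower bound is the \emph{variance-adaptive} width: write $\bar G(b^i)=\sum_{j\ge i}p_j$ with $p_j=\bP(b^j<m\le b^{j+1})$, estimate each $p_j$ from its own (larger) sample count $n_j\ge n_i$, and use a Bernstein width of order $\sqrt{\log(\cdot)\sum_{j\ge i}p_j/n_j}$ as in \eqref{eq:width}. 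The payoff is the combinatorial bound of Lemma~\ref{lemma.combinatorial}, $\sum_t\sum_{j\ge i(t)}p_j/n_{j,t}\le 1+\log T$ (because $\sum_j p_j=1$), versus $K(1+\log T)$ for the unweighted analogue; this is exactly the $\sqrt{K}$ you need to save.

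Separately, you never address the statistical dependence between the adaptively chosen bids and the observations: conditioned on $(b_1,\dots,b_t)$, the observed $m_s$ are no longer independent, so neither Hoeffding nor Bernstein applies off the shelf to $\widehat{G}_t$ or to a sum $\sum_{j\ge i}\widehat{p}_j$, and the estimators need not even be unbiased. This is precisely why the paper's master layer is a \emph{multi-level} time decomposition (Lemma~\ref{lemma:independence}) that computes the confidence widths at each level before seeing that level's observations, restoring conditional independence within each level. Your master layer (forced-exploration overrides of the UCB choice) serves a different purpose and leaves this concentration step unjustified. To make your proposal work you would need to (i) replace the Hoeffding radius by the interval-wise Bernstein radius built from the decomposition \eqref{eq.prob_decomposition}, and (ii) add a decoupling mechanism in the spirit of Lemmas~\ref{lemma:independence} and~\ref{lemma.UCB}.
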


To the best of our knowledge, the above theorems present the first $\widetilde{O}(\sqrt{T})$ regret bidding strategies for general unknown CDF $G$ in the first-price auction. The $\Omega(\sqrt{T})$ regret lower bound even under full-information where $m_t$ is always revealed at each time can be established using the standard Le Cam's two-point method; for completeness we include the proof in Appendix \ref{appendix.lower_bound}. 

\subsection{Organization}

The rest of the paper is organized as follows. In Section \ref{sec.stochastic}, we present the framework of partially ordered contextual bandits, and prove a general result which subsumes Theorem \ref{thm.stochastic}. Section \ref{sec.adversarial} is devoted to the proof of Theorem \ref{thm.arbitrary}, and Section \ref{sec:discussion} contains additional extensions and discussions of the current setting. The proofs of many auxiliary results are relegated to the appendix. 

\section{Learning in First-price Auctions with Stochastic Private Values}\label{sec.stochastic}
In this section, we formulate repeated first-price auctions as a particular type of contextual bandits with structured feedback, and establish general regret bounds. Specifically, we consider a general contextual bandit problem combining the feedback graph structure across actions in \cite{alon2015online,alon2017nonstochastic} and the cross learning structure across contexts in \cite{balseiro2019contextual}. A natural question is the characterization of the optimal regret in this setting. Several upper bounds directly follow from the literature: with $K$ actions and $M$ contexts, by only exploiting the feedback structure across actions and studying $M$ independent bandit problems, a regret upper bound $\widetilde{O}(\sqrt{\alpha MT})$ is achievable, where $\alpha$ is the independence number of the feedback graph across actions; by only exploiting the cross learning structure across contexts, an $\widetilde{O}(\sqrt{KT})$ regret is available in \cite{balseiro2019contextual}. We show, by first proposing an algorithm for multi-armed bandits and then generalizing it to contextual bandits, that an $\widetilde{O}(\sqrt{\min\{\alpha M, K \}T})$ regret can be achieved --- here the effects of actions and contexts are decoupled. To further reduce the regret, we show that if there is an additional partial ordering structure over the contexts with independence number $\beta$, \emph{and} if the contexts are \emph{stochastic}, a smaller regret $\widetilde{O}(\sqrt{\alpha\beta T})$ is achievable --- here the effects of actions and contexts are coupled. This leads to the $\widetilde{O}(\sqrt{T})$ regret in Theorem \ref{thm.stochastic} for first-price auctions with stochastic private values, where $\alpha=\beta=1$. A curious phenomenon is that, these effects are fundamentally decoupled for adversarial contexts: we prove a regret lower bound $\Omega(\sqrt{\min\{\alpha M,K,T^{1/3}\}T})$, showing a distinction between stochastic and adversarial contexts. This is the interesting interplay between actions and contexts we aim to highlight in Section \ref{sec.stochastic}.

This section is organized as follows. Section \ref{subsec.MAB_feedback} presents a simple algorithm for stochastic multi-armed bandits which achieves the optimal $\widetilde{O}(\sqrt{\alpha T})$ regret with feedback graphs across actions, and Section \ref{subsec.contextual_feedback} extends it to contextual bandits with cross learning across contexts. In Section \ref{subsec.partial_order}, we impose an additional partial ordering structure over the contexts (termed as \emph{partially ordered contextual bandits}), where a variant of the previous algorithm achieves a regret almost independent of $(K, M)$ if the contexts are stochastic. In particular, this general result leads to an $\widetilde{O}(\sqrt{T})$ regret for first-price auctions with stochastic private values in Section \ref{subsec.application_auction}. Finally Section \ref{subsec.lowerbound} proves the limitation of this framework: without the assumption of stochastic contexts, a regret lower bound $\Omega(\sqrt{T})$ is unavoidable for the general class of partially ordered contextual bandits.

\subsection{An Algorithm for Stochastic Multi-armed Bandits with Graph Feedback}\label{subsec.MAB_feedback}

We recall the following setting of stochastic multi-armed bandits with a feedback graph $G$, introduced in \cite{alon2015online}. 
\begin{definition}[Stochastic Multi-armed Bandits with Graph Feedback]\label{def.MAB_feedback}
	
	A stochastic multi-armed bandit with graph feedback consists of a finite time horizon $T$, an action space $\calA$ with $|\calA| = K$, a joint reward distribution $P$ supported on $[0,1]^K$ with marginal distributions $(P_a)_{a\in \calA}$ and mean rewards $(R_a)_{a\in \calA}$, and a directed graph $G = (\calA, E)$ with vertex set $\calA$ and edge set $E$.
	\begin{enumerate}
		\item \text{\rm \bf Learning and Feedback Model.}
		At each $t\in [T]$, the learner chooses an action $a_t\in \calA$ based on her historical observations. Nature then draws a reward vector $r_t = (r_{t,a})_{a\in \calA}\overset{iid}{\sim} P$ and reveals $(r_{t,a})_{a\in I_t}$ to the learner, where $I_t = \{a_t\} \cup \{a\in \calA: (a_t, a) \in E \}$. 
		\item \text{\rm \bf Regret.} 
		Let $\pi = (a_1, a_2, \dots, a_T)$ be the policy used by the learner, then the regret is:
		\begin{align}\label{eq:MAB_regret}
			R_T(\pi) \triangleq \sum_{t=1}^T \left(\max_{a\in\calA} R_{a} - R_{a_t} \right). 
		\end{align}
	\end{enumerate}
\end{definition} 

The directed graph $G$ defines a feedback structure in the following sense: if an action $a$ is played, the rewards of action $a$ and all its out-neighbors in $G$ are revealed to the learner. Note that we manually take out all self loops and assume that $G$ no longer contains any self loop. It was known in \cite{alon2015online,alon2017nonstochastic} that for adversarial bandits, the optimal regret is $\widetilde{\Theta}(\sqrt{\alpha T})$, where $\alpha$ is the independence number of $G$, i.e. the largest size $m$ of a vertex set $\{a_1, \cdots, a_m\}$ such that $(a_i, a_j)\notin E$ for all $i\neq j$. Similar results were also obtained in \cite{cohen2016online,lykouris2020feedback} for stochastic bandits. Nevertheless, we will present a simpler algorithm for stochastic bandits with graph feedback which is easy to generalize to the contextual setting. 

Although the results in this section hold for any directed graph $G$, in the future we will also have a special focus on the case that $G$ is a \emph{directed acyclic graph (DAG)}, i.e. $G$ contains no cycles. The following Algorithm \ref{algo.MAB}, based on successive arm elimination, will serve as a building block for all subsequent algorithms in Section \ref{sec.stochastic}. There are two possible options for choosing the action $a_t$, where Option I works for any directed graph $G$, and Option II is a more convenient choice for DAGs. 

\begin{algorithm}[h!]
	\caption{Arm Elimination for Stochastic Multi-armed Bandits with Graph Feedback}\label{algo.MAB}
	\textbf{Input:} Time horizon $T$; action set $\calA = [K]$; feedback graph $G$; failure probability $\delta\in (0,1)$. \\
	\textbf{Output:} A resulting policy $\pi$. \\
	\textbf{Initialization:}
	Active set $\calA_{\text{act}} \gets \calA$, minimum count $N\gets 0$; 
	
	All (empirical reward, action count) pairs $(\bar{r}_{a}^0, n_{a}^0)$ initialized to $(0,0)$ for each $a\in \calA.$\\
	\For{$t\in\{1,2,\cdots,T\}$}{
		Let $\calA_0 = \{a\in \calA_{\text{act}}: n_a^{t-1} = N\}$ be the set of active actions which have been pulled the fewest times, and $G_0$ be the restriction of $G$ to the vertex set $\calA_0$;\\
		The learner chooses the following action (break ties arbitrarily): \\
		\quad \textbf{(Option I, general graph)} $a_t \gets $ action in $\calA_0$ with the largest out-degree in $G_0$;\\
		\quad \textbf{(Option II, DAG)} $a_t \gets$ action in $\calA_0$ with in-degree zero in $G_0$;\\
		The learner receives random rewards $(r_{t,a})_{a\in I_t}$ with $I_t = \{a_t\}\cup \{a\in \calA: (a_t, a)\in E \}$; \\
		The learner updates the empirical rewards and action counts $(\bar{r}_a^t, n_a^t)$ for $a\in I_t$; \\
		\If{$\min_{a\in \calA_{\text{\rm act}}} n_a^t > N$}{
			Update the minimum count $N\gets N+1$; \\
			Let $\bar{r}_{\max}^t=\max_{a\in \calA_{\text{act}}} \bar{r}_{a}^t$ be the maximum empirical reward among active actions; \\
			Update the active set
			\begin{align*}
				\calA_{\text{act}} \gets \left\{a\in \calA_{\text{act}}: \bar{r}_a^t \ge \bar{r}_{\max}^t - 6\sqrt{\frac{\log(2T)\log(KT/\delta)}{N}} \right\}. 
			\end{align*}
		}
	}
\end{algorithm}

Algorithm \ref{algo.MAB} falls into the category of arm elimination policies \cite{even2006action}, where the learner maintains an active set of actions which contains all probably good actions and shrinks over time as more observations are collected. In Algorithm \ref{algo.MAB}, the active set $\calA_{\text{act}}$ is initialized to be the entire action set $\calA$, and is updated whenever the rewards of all active arms have been observed once, twice, and so on. In other words, if we denote by $N$ the minimum number of times the rewards of all active arms have been observed, the active set is updated whenever the value of $N$ changes. The construction of $\calA_{\text{act}}$ is based on the usual principle of confidence bounds, and the key algorithmic difference lies in the sampling rule between two successive updates of the active set under the graph feedback. Without the graph feedback, each active action is simply pulled once; with the graph feedback, we use a sampling rule which favors actions giving the most amount of information. Recall the definition of $N$ that all active arms have been pulled for at least $N$ times, and we aim to increase this quantity from $N$ to $N+1$ between two successive updates of the active set. To this end, we consider the set $\calA_0$ which has \emph{only} been pulled for $N$ times, and choose one of the following: 
\begin{enumerate}
	\item Option I: pull any action with the largest out-degree in $\calA_0$. This choice is valid for any directed graph $G$.
	\item Option II: pull any action with no parent in $\calA_0$. This choice is valid only for $G$ being a DAG. 
\end{enumerate}
Roughly speaking, both options aim to choose the most informative action, and we will show that $\widetilde{O}(\alpha)$ pulls increases the quantity of $N$ by one. 

We note that the algorithms in \cite{cohen2016online,lykouris2020feedback} also relied on arm elimination, with the following differences. First, the setting of \cite{lykouris2020feedback} is restricted to undirected graphs, while the regret bound in Theorem \ref{thm.MAB_feedback} (with Option I) applies to general directed graphs. Second, the algorithm in \cite{cohen2016online} is randomized and chooses a uniformly random action in the active set (due to the changing feedback graph over time), while \cite{lykouris2020feedback} needs to find a maximal independence set in each epoch. In contrast, Algorithm \ref{algo.MAB} is deterministic and simply needs to find a vertex with the highest degree. Our choice is computationally cheaper, easier to generalize to the contextual setting in Section \ref{subsec.contextual_feedback}, while makes the analysis slightly more involved and require a non-trivial result (cf. Lemma \ref{lemma.independence_number}) in combinatorial graph theory. 

The following theorem shows that the above algorithm achieves an $\widetilde{O}(\sqrt{\alpha T})$ regret, known to be optimal (within logarithmic factors) in \cite{alon2015online}. 

\begin{theorem}\label{thm.MAB_feedback}
	For stochastic multi-armed bandits with feedback graph $G$, with probability at least $1-\delta$, Algorithm \ref{algo.MAB} achieves regret upper bounds
	\begin{align*}
		R_T(\pi) &\le T \wedge  \begin{cases}
			50\alpha(\log K)^2 + 24\log K\cdot \sqrt{50\alpha T\log(2T)\log(KT/\delta)} &\text{\rm for Option I}, \\
			\alpha + 24\sqrt{\alpha T\log(2T)\log(KT/\delta)} &\text{\rm for Option II},
		\end{cases} \\
		&= \widetilde{O}(T \wedge \sqrt{\alpha T\log(1/\delta)}). 
	\end{align*}
\end{theorem}

Choosing $\delta = 1/T$, Theorem \ref{thm.MAB_feedback} gives an expected regret $\widetilde{O}(\sqrt{\alpha T})$ whenever $\alpha\le T$. To prove Theorem \ref{thm.MAB_feedback} we shall need the following concentration lemma. 
\begin{lemma}\label{lemma.concentration}
	With probability at least $1-\delta$, it holds that
	\begin{align*}
		\left|\bar{r}_a^t - R_a \right| \le 3\sqrt{\frac{\log(2T)\log(KT/\delta)}{n_t^a}}
	\end{align*}
	for every $t\in [T]$ and $a\in \calA$. 
\end{lemma}

With Lemma \ref{lemma.concentration} we are ready to prove Theorem \ref{thm.MAB_feedback}. 

\begin{proof}[Proof of Theorem \ref{thm.MAB_feedback}]
Let $\Delta_n := 3\sqrt{\log(2T)\log(KT/\delta)/n}$, and suppose the event in Lemma \ref{lemma.concentration} holds. During the period when $N=n\ge 1$, we have the following observations: 
\begin{itemize}
	\item The best arm $a^\star$ is not eliminated at the end of the period. This is because $\bar{r}_{a^\star}^t \ge R_{a^\star} - \Delta_n = \max_{a\in \calA_{\text{act}}} R_{a} - \Delta_n \ge \bar{r}_{\max}^t - 2\Delta_n$, where we note that $n_a^t \ge n$ for all $a\in \calA_{\text{act}}$ during this period. 
	\item All active arms $a\in \calA_{\text{act}}$ satisfy $R_a \ge R_{a^\star} - 4\Delta_n$. This is because $R_a \ge \bar{r}_a^t - \Delta_n \ge \bar{r}_{\max}^t - 3\Delta_n \ge \bar{r}_{a^\star}^t - 3\Delta_n \ge R_{a^\star} - 4\Delta_n$. 
\end{itemize}
Consequently, let $T_n$ be the duration of the period when $N=n$, with probability at least $1-\delta$ we have $R_T(\pi) \le T_0 + 4\sum_{n=1}^\infty T_n\Delta_n$. If we could prove that $T_n \le m$ for every $n\ge 0$, we would conclude that
\begin{align*}
	R_T(\pi) \le m + 4\sum_{n=1}^{\lceil T/m\rceil - 1} m\Delta_n \le m\left(1 + 24\sqrt{\log(2T)\log(KT/\delta)}\cdot \sqrt{\frac{T}{m}}\right). 
\end{align*}

Next we prove an upper bound of $T_n$ for each $n\ge 0$, which concludes the proof in view of the above arguments. 
\begin{itemize}
	\item Option I: consider the evolution of the set $\calA_0$ during the period when $N=n$. At each time, the vertex of the largest out-degree in $\calA_0$ is chosen, and this vertex with all its out-neighbors are removed from $\calA_0$. Using the inequality between the independence number and the weak domination number in Lemma \ref{lemma.independence_number}, and the fact that the independence number of $G_0$ is at most $\alpha$, the proportion of removed vertices from $\calA_0$ is at least $1/(50\alpha\log K)$. Therefore, the number of remaining vertices in $\calA_0$ after $m$ rounds is at most
	\begin{align*}
		K\cdot \left(1 - \frac{1}{50\alpha\log K}\right)^m < K\exp\left(-\frac{m}{50\alpha\log K}\right), 
	\end{align*}
	which is no larger than $1$ if $m\ge 50\alpha(\log K)^2$. Since the $n$-th period ends when $\calA_0$ becomes empty, we conclude that $T_n\le 50\alpha(\log K)^2$ must hold. 
	\item Option II: Let $a_1, \cdots, a_m$ be the chosen actions during the period when $N=n$, we prove that $\{a_1,\cdots,a_m\}$ must be an independence set of $G$. Consider any pair $i<j$. On one hand, there is no edge from $a_i$ to $a_j$, for $a_j$ would have been removed from $\calA_0$ when $a_i$ is chosen. On the other hand, there is no edge from $a_j$ to $a_i$, as by definition $a_i$ has no parent in $\calA_0$ when it is chosen. Consequently, there is no edge between $(a_i, a_j)$, and $m \le \alpha$.  
\end{itemize}
\end{proof}

\subsection{Stochastic Contextual Bandits with Graph Feedback and Cross Learning}\label{subsec.contextual_feedback}
In this section, we extend the framework of Section \ref{subsec.MAB_feedback} to contextual bandits, with the concept of cross learning introduced in \cite{balseiro2019contextual}. 

\begin{definition}[Stochastic Contextual Bandits with Graph Feedback and Cross Learning]\label{def.contextual_feedback_cross}
	A stochastic contextual bandit with graph feedback and cross learning consists of a time horizon $T$, a context space $\calC$ with $|\calC| = M$, an action space $\calA$ with $|\calA|=K$, a joint reward distribution $P$ supported on $[0,1]^{M\times K}$ with marginal distributions $(P_{c,a})_{c\in\calC, a\in\calA}$ and mean rewards $(R_{c,a})_{c\in\calC, a\in \calA}$, and a directed graph $G=(\calA, E)$ with vertex set $\calA$ and edge set $E$.
	\begin{enumerate}
		\item \text{\rm \bf Learning and Feedback Model.}
		At each $t\in [T]$, nature generates a context $c_t \in \calC$, either stochastically or adversarially, and reveals it to the learner. The learner then chooses an action $a_t\in \calA$ based on $c_t$ and the historical observations. Nature draws a reward vector $r_t = (r_{t,c,a})_{c\in \calC, a\in \calA}\overset{iid}{\sim} P$ and reveals $(r_{t,c,a})_{c\in \calC, a\in I_t}$ with $I_t = \{a_t\}\cup \{a\in\calA: (a_t,a)\in E\}$. 
		\item \text{\rm \bf Regret.} 
		Let $\pi = (a_1, a_2, \dots, a_T)$ be the overall policy used by the learner, then the regret is:
		\begin{align}\label{eq:contextual_bandit_regret}
			R_T(\pi) \triangleq \sum_{t=1}^T \left(\max_{a\in\calA} R_{c_t, a} - R_{c_t,a_t} \right). 
		\end{align}
		Note that the regret in \eqref{eq:contextual_bandit_regret} depends on the context sequence $(c_1,\cdots,c_T)$; this dependence is suppressed for notational simplicity. 
	\end{enumerate}
\end{definition}

Compared with Definition \ref{def.MAB_feedback}, Definition \ref{def.contextual_feedback_cross} combines the graph feedback across \emph{actions} and cross learning across \emph{contexts}. Specifically, whenever the learner chooses an action $a$ under context $c$, the rewards for all actions which are either $a$ or its out-neighbors in $G$ under \emph{all} contexts are revealed. In other words, the directed graph $G$ is the feedback structure across actions, and a complete graph is the feedback structure across contexts. In principle, as it was done in \cite{balseiro2019contextual,dann2020reinforcement}, one can also consider a general feedback graph across contexts, or assume different graphs $G_c$ under different contexts. However, we remark that this simplified model already captures non-trivial interplay between the contexts and actions, and covers the example of first-price auctions of interest. 

\begin{algorithm}[h!]
	\caption{Arm Elimination for Stochastic Contextual Bandits with Graph Feedback and Cross Learning}\label{algo.contextual}
	\textbf{Input:} Time horizon $T$; context set $\calC = [M]$; action set $\calA = [K]$; feedback graph $G$; failure probability $\delta\in (0,1)$. \\
	\textbf{Output:} A resulting policy $\pi$. \\
	\textbf{Initialization:}
	Active set $\calA_{\text{act},c} \gets \calA$, minimum count $N_c\gets 0$, for each $c\in \calC$;  
	
	All (empirical reward, action count) pairs $(\bar{r}_{c,a}^0, n_{a}^0)$ initialized to $(0,0)$ for each $c\in\calC, a\in \calA.$\\
	\For{$t\in\{1,2,\cdots,T\}$}{
		The learner receives the context $c_t \in \calC$; \\
		Let $\calA_{0, c_t} = \{a\in \calA_{\text{act}, c_t}: n_a^{t-1} = N_{c_t}\}$ be the set of active actions which have been pulled for the smallest amount of time under the context $c_t$, and $G_{0, c_t}$ be the restriction of $G$ to the vertex set $\calA_{0,c_t}$;\\
		The learner chooses the following action (break ties arbitrarily): \\
		\quad \textbf{(Option I, general graph)} $a_t \gets $ action in $\calA_{0,c_t}$ with the largest out-degree in $G_{0,c_t}$;\\
		\quad \textbf{(Option II, DAG)} $a_t \gets$ action in $\calA_{0,c_t}$ with in-degree zero in $G_{0,c_t}$;\\
		The learner receives random rewards $(r_{t,c,a})_{c\in\calC, a\in I_t}$ with $I_t = \{a_t\}\cup \{a\in \calA: (a_t, a)\in E \}$; \\
		The learner updates the empirical rewards and action counts $(\bar{r}_{c,a}^t, n_a^t)$ for $c\in\calC, a\in I_t$; \\
		\For{$c\in \calC$}{
			\If{$\min_{a\in \calA_{\text{\rm act},c}} n_a^t > N_c$}{
				Update the minimum count $N_c\gets N_c+1$; \\
				Let $\bar{r}_{c,\max}^t=\max_{a\in \calA_{\text{act},c}} \bar{r}_{c,a}^t$ be the maximum empirical reward under context $c$; \\
				Update the active set
				\begin{align*}
					\calA_{\text{act},c} \gets \left\{a\in \calA_{\text{act},c}: \bar{r}_{c,a}^t \ge \bar{r}_{c,\max}^t - 6\sqrt{\frac{\log(2T)\log(MKT/\delta)}{N_c}} \right\}. 
				\end{align*}
			}
		}
	}
\end{algorithm}

We propose a learning algorithm for contextual bandits in Algorithm \ref{algo.contextual}, which is a generalization of Algorithm \ref{algo.MAB} based on the idea of arm elimination. Algorithm \ref{algo.contextual} maintains an active set $\calA_c$ for each context $c\in \calC$, but thanks to the cross learning structure, the action count $n_{a}^t$ does not depend on $c$ and is the same across contexts. The rest of the algorithm is similar to Algorithm \ref{algo.MAB}: for each context $c$, whenever the minimum count $N_c=\min_{a\in \calA_c} n_a$ is increased by one, the active set $\calA_c$ is updated; between two consecutive updates of action sets, the ``most informative'' action is chosen according to either Option I or II. The following theorem summarizes the regret performance of Algorithm \ref{algo.contextual}. 

\begin{theorem}\label{thm.contextual}
	For stochastic contextual bandits with feedback graph $G$ and cross learning (the contexts could either be stochastic or adversarial), with probability at least $1-\delta$, Algorithm \ref{algo.contextual} achieves regret upper bounds
	\begin{align*}
		R_T(\pi) &\le T \wedge \left(K + 3\sqrt{KT\log(2T)\log(MKT/\delta)}\right) \\
		&\qquad \wedge\begin{cases}
			50\alpha M(\log K)^2 + 24\log K\cdot \sqrt{50\alpha M T\log(2T)\log(MKT/\delta)} &\text{\rm for Option I}, \\
			\alpha M + 24\sqrt{\alpha M T\log(2T)\log(MKT/\delta)} &\text{\rm for Option II}, 
		\end{cases} \\
		&= \widetilde{O}(T \wedge \sqrt{\alpha M T\log(1/\delta)} \wedge \sqrt{KT\log(1/\delta)}). 
	\end{align*}
\end{theorem}

Theorem \ref{thm.contextual} shows that Algorithm \ref{algo.contextual} achieves ``best of both worlds'', i.e. a single algorithm achieves the best performances simultaneously when the learner completely ignores the cross learning or feedback graph structure. However, whenever $\min\{\alpha M, K\}\gg 1$, the regret upper bound in Theorem \ref{thm.contextual} is still much larger than $\widetilde{O}(\sqrt{T})$, a pessimistic result in view of the rich feedback structures. This point will be elaborated in later sections. 

\begin{proof}[Proof of Theorem \ref{thm.contextual}]
First note that by applying a union bound over $t\in [T], c\in \calC$ and $a\in \calA$, a similar result of Lemma \ref{lemma.concentration} (with $\delta$ replaced by $\delta/M$) holds in the contextual case for all $c\in \calC$.

We now prove the second upper bound of $R_T(\pi)$. Let $T_c = \sum_{t=1}^T \1(c_t=c)$ be the number of occurrences of context $c$, we may decompose the regret $R_T(\pi)$ in \eqref{eq:contextual_bandit_regret} into the sum of regrets under different contexts $c$, with different time horizons $T_c$. For the $c$-th subproblem, similar arguments of the proof of Theorem \ref{thm.MAB_feedback} still work through, and we only need to additionally note that the actions under other contexts $c'$ only make the duration of the period when $N_c = n$ in the $c$-th subproblem no longer (as $\calA_{0,c}$ never expands given a fixed $N_c$). Consequently, summing up the regret upper bound of Theorem \ref{thm.MAB_feedback} for different contexts $c$, and using $\sum_{c\in \calC} \sqrt{T_c} \le \sqrt{MT}$ completes the proof. 

Next we prove the first upper bound of $R_T(\pi)$. To this end, we sum over different actions. For action $a\in \calA$, let $t_1<t_2<\cdots<t_{T_a}$ be the time points when $a$ is chosen. If $a$ is chosen at time $t_m$ with context $c_m$, in both Option I and II, $n_{a}^{t_m - 1}$ must be the smallest value in $\{n_{a'}^{t_m-1}: a'\in \calA_{\text{act}, c_m} \}$. Using the second observation in the proof of Theorem \ref{thm.MAB_feedback}, the suboptimality of choosing action $a$ is at most $4\Delta(n_a^{t_m-1})$, where $\Delta(n) := 3\sqrt{\log(2T)\log(MKT/\delta)/n}$. Moreover, thanks to the cross learning structure, whenever $a$ is chosen regardless of the context, the quantity $n_a$ is increased by one. Consequently, $n_a^{t_m-1} \ge m-1$, and the total regret incurred by choosing action $a$ is at most
\begin{align*}
	1 + \Delta(1) + \cdots + \Delta(T_a - 1) \le 1 + 6\sqrt{\log(2T)\log(MKT/\delta)T_a}. 
\end{align*}
Summing over $a$ and using $\sum_{a\in \calA}T_a = T$ completes the proof of the first upper bound. 
\end{proof}

\begin{remark}
	Using a UCB-type algorithm, it was shown in \cite{dann2020reinforcement} that the $\widetilde{O}(\sqrt{KT})$ upper bound could be improved to $\widetilde{O}(\sqrt{mT})$, where $m\le K$ is the size of the maximum acyclic subgraph of $G$. However, the particular focus of the coming sections is the case where $G$ is a DAG, so $m=K$ and this improvement is not substantial. 
\end{remark}

\subsection{Partially Ordered Contextual Bandits}\label{subsec.partial_order}
A natural question from Theorem \ref{thm.contextual} is whether or not the learner could jointly exploit the graph feedback and cross learning structures, and importantly, eliminate the polynomial dependence on $\min\{K, M\}$ in the final regret. In this section we present such a partial ordering assumption that a small regret is possible. 

To this end we introduce the concept of \emph{partially ordered contextual bandits}: 
\begin{definition}[Partially Ordered Contextual Bandits]\label{assump:partial_order} We call a bandit instance in Definition \ref{def.contextual_feedback_cross} \emph{partially ordered contextual bandit} if the following conditions additionally hold: 
	\begin{itemize}
		\item the feedback graph $G$ is a DAG whose transitive closure is itself; in other words, every descedent of a node $v$ in $G$ is a child of $v$. This defines a partial order $(\calA, \preceq_G)$ on the action set $\calA$: $a_1 \preceq_G a_2$ if and only if either $a_1 = a_2$, or $a_2$ is a child of $a_1$ in $G$; 
		\item there is a partial order $(\calC, \preceq_\calC)$ over $\calC$ such that if $c_1\preceq_\calC c_2$, then $a^\star(c_1) \preceq_G a^\star(c_2)$, where $a^\star(c):=\arg\max_{a\in \calA} R_{c,a}$ is the optimal action under context $c$. 
	\end{itemize}
\end{definition} 
Definition \ref{assump:partial_order} imposes an ordering on both the optimal actions and the contexts: when the context moves from small to large (in terms of the partial order $\preceq_\calC$), the optimal action under the context also moves from small to large (in terms of the partial order $\preceq_G$). If $(\calC, \preceq_\calC)$ is the empty relation, Definition \ref{assump:partial_order} is vacuous, and this is essentially the setting in Section 3.2 without further assumptions; for first-price auctions, Section \ref{subsec.application_auction} will show that both $(\calA, \preceq_G)$ and $(\calC, \preceq_\calC)$ are chains (i.e. totally ordered sets - any pair is comparable). 

\begin{algorithm}[h!]
	\caption{Arm Elimination for Partially Ordered Contextual Bandits}\label{algo.partial_order}
	\textbf{Input:} Time horizon $T$; context set $\calC = [M]$; action set $\calA = [K]$; feedback graph $G$; partial orders $(\calA, \preceq_G)$ and $(\calC, \preceq_\calC)$; failure probability $\delta\in (0,1)$. \\
	\textbf{Output:} A resulting policy $\pi$. \\
	\textbf{Initialization:}
	Active set $\calA_{\text{act},c} \gets \calA$, minimum count $N_c\gets 0$, for each $c\in \calC$;  
	
	All (empirical reward, action count) pairs $(\bar{r}_{c,a}^0, n_{a}^0)$ initialized to $(0,0)$ for each $c\in\calC, a\in \calA.$\\
	\For{$t\in\{1,2,\cdots,T\}$}{
		The learner receives the context $c_t \in \calC$; \\
		Let $G_{\text{act},c_t}$ be the restriction of $G$ to the vertex set $\calA_{\text{act}, c_t}$; \\
		($\star$) The learner picks $a_t \gets $ a uniformly random action in $\calA_{\text{act},c_t}$ with no parent in $G_{\text{act},c_t}$; \\
		The learner receives random rewards $(r_{t,c,a})_{c\in\calC, a\in I_t}$ with $I_t = \{a_t\}\cup \{a\in \calA: (a_t, a)\in E \}$; \\
		The learner updates the empirical rewards and action counts $(\bar{r}_{c,a}^t, n_a^t)$ for $c\in\calC, a\in I_t$; \\
		\For{$c$ in a given topological ordering of $\calC$}{
			($\star\star$) Eliminate all actions $a$ from $\calA_{\text{act},c}$ if there exists $c'\preceq_\calC c$ such that no action $a'\in \calA_{\text{act}, c'}$ satisfies $a'\preceq_G a$; \\
			\If{$\min_{a\in \calA_{\text{\rm act},c}} n_a^t > N_c$}{
				Update the minimum count $N_c\gets \min_{a\in \calA_{\text{\rm act},c}} n_a^t$; \\
				Let $\bar{r}_{c,\max}^t=\max_{a\in \calA_{\text{act},c}} \bar{r}_{c,a}^t$ be the maximum empirical reward under context $c$; \\
				Update the active set
				\begin{align*}
					\calA_{\text{act},c} \gets \left\{a\in \calA_{\text{act},c}: \bar{r}_{c,a}^t \ge \bar{r}_{c,\max}^t - 6\sqrt{\frac{\log(2T)\log(MKT/\delta)}{N_c}} \right\}. 
				\end{align*}
			}
		}
	}
\end{algorithm}

For partially ordered contextual bandits, one can slightly modify Algorithm \ref{algo.contextual} to eliminate more actions in the update of active action sets, shown in Algorithm \ref{algo.partial_order}. There are two main differences bewteen Algorithms \ref{algo.contextual} and \ref{algo.partial_order}. First, instead of choosing an action which has been pulled for the least amount of time in the active set in Algorithm \ref{algo.contextual}, Line ($\star$) of Algorithm \ref{algo.partial_order} chooses the action directly from the active set regardless of how many times it has been pulled. As a comparison, the rule in Algorithm \ref{algo.contextual} is deterministic and ensures that for two consecutive updates of the active set $\calA_{\text{act}, c}$ at time $t_1, t_2$, the duration must satisfy $\sum_{t=t_1+1}^{t_2} \1(c_t = c) \le \alpha$; in contrast, the rule in Algorithm \ref{algo.partial_order} is \emph{random} and ensures that the above duration is $\widetilde{O}(\alpha)$ with high probability. The main benefit of the random rule is that, the definition of duration changes and a stronger inequality $\sum_{t=t_1+1}^{t_2} \1(c_t \preceq_\calC c) = \widetilde{O}(\alpha)$ holds for partially ordered contextual bandits (cf. Lemma \ref{lemma.partial_order_N_lb}). This means that choosing an action under a small context $c_1$ provides useful information for $\widetilde{\Omega}(1/\alpha)$ fraction of the active actions under a large context $c_2$, a key property to arrive at a small regret. 

The second difference is the introduction of an additional arm elimination rule (Line ($\star\star$)) thanks to the partial order: for $c_1\preceq_\calC c_2$, any action $a\in \calA_{\text{act}, c_2}$ must be dominated by an action in $\calA_{\text{act},c_1}$. This is because the active set $\calA_{\text{act},c}$ is constructed so that $a^\star(c)\in \calA_{\text{act},c}$ with high probability; therefore, if an action $a$ is not dominated by any action in $\calA_{\text{act},c}$, Definition \ref{assump:partial_order} ensures that $a\neq a^\star(c')$ for every $c'\succeq_\calC c$, and we can eliminate $a$ from $\calA_{\text{act}, c'}$. 

The main benefit of Definition \ref{assump:partial_order} is that, when the contexts are i.i.d., a smaller regret depending on the independence number $\beta$ of $(\calC, \preceq_\calC)$ is achievable. Here the independence number of a partial order $(X, \preceq)$ is the largest cardinality of $\{x_1,\cdots,x_m\}\subseteq X$ such that any pair is incomparable. The regret performance of Algorithm \ref{algo.partial_order} is illustrated in the following theorem. 

\begin{theorem}\label{thm:partial_order}
	Consider any partially ordered contextual bandit in Definition \ref{assump:partial_order}, with $\alpha$ and $\beta$ denoting the independence number of $G$ and $(\calC, \preceq_\calC)$, respectively. If the contexts $(c_t)_{t \in [T]}$ follows an i.i.d. distribution $F$, then choosing $\delta = 1/(2T)$, the policy in Algorithm \ref{algo.partial_order} satisfies
	\begin{align*}
		\bE[R_T(\pi)] \le 1 + 24\sqrt{\alpha \beta T\log(2T)\log(2MKT^2)\log(2MKT^3)}(1+\log T) = \widetilde{O}(\sqrt{\alpha\beta T}), 
	\end{align*}
	where the expectation is taken with respect to the random contexts. Moreover, if the contexts are adversarially generated, with probability at least $1-2\delta$ the same algorithm achieves
	\begin{align*}
		R_T(\pi) \le 24\sqrt{\alpha MT\log(2T)\log(MKT/\delta)\log(MKT^2/\delta)(1+\log T)} = \widetilde{O}(\sqrt{\alpha MT}\log(1/\delta)). 
	\end{align*}
\end{theorem}

Theorem \ref{thm:partial_order} shows that, for partially ordered contextual bandits, Algorithm \ref{algo.partial_order} achieves an expected $\widetilde{O}(\sqrt{\alpha\beta T})$ regret under stochastic contexts which deteriorates to $\widetilde{O}(\sqrt{\alpha MT})$ regret under adversarial contexts. Consequently, when both quantities $\alpha$ and $\beta$ are of the order $\widetilde{O}(1)$, a small regret $\widetilde{O}(\sqrt{T})$ independent of $(K, M)$ is achievable in expectation under Definition \ref{assump:partial_order} and \emph{stochastic contexts}. If the contexts are adversarial, we know that Algorithm \ref{algo.contextual} achieves an $\widetilde{O}(\min\{\sqrt{\alpha MT}, \sqrt{KT}\})$ regret (cf. Theorem \ref{thm.contextual}), and Theorem \ref{thm:partial_order} shows that Algorithm \ref{algo.partial_order} partially attains it and achieves $\widetilde{O}(\sqrt{\alpha MT})$ regret. We do not know whether a ``best of both worlds'' result could be obtained for Algorithm \ref{algo.partial_order} or others. 

\begin{proof}[Proof of Theorem \ref{thm:partial_order}]
We first fix the possibly random contexts $(c_t)_{t\in [T]}$ and present a generic analysis that works under both stochastic and adversarial contexts. By similar lines to the proof of Theorem \ref{thm.MAB_feedback}, conditioned on an event $\calE_1$ which happens with probability at least $1-\delta$, if an action $a\in \calA$ is chosen at time $t$, the suboptimality gap of $a$ under the corresponding context $c_t$ is at most $4\Delta(N_{c_t}^{t-1})$, where $\Delta(n):= 3\sqrt{\log(2T)\log(MKT/\delta)/n}$. Consequently,
\begin{align*}
	R_T(\pi)\1(\calE_1) \le \sum_{t=1}^T \min\left\{4\Delta(N_{c_t}^{t-1}), 1 \right\}, 
\end{align*}
The main benefit of Definition \ref{assump:partial_order} lies in the following lower bound on $N_{c_t}^{t-1}$. 
\begin{lemma}\label{lemma.partial_order_N_lb}
	With probability at least $1-\delta$, for every $t\in [T]$ it holds that
	\begin{align*}
		N_{c_t}^{t-1} \ge \frac{\sum_{s<t} \1(c_s \preceq_\calC c_t)}{\alpha\log(MKT^2/\delta)} - 1. 
	\end{align*} 
\end{lemma}

Lemma \ref{lemma.partial_order_N_lb} formalizes the following idea: in Algorithm \ref{algo.partial_order}, choosing a small context provides a fresh observation under a large context. Consequently, the quantity $N_c^t$ could stabilize during $[t_1, t_2]$ only if the ``effective sample size'' for context $c$ satisfies $\sum_{t=t_1+1}^{t_2} \1(c_t\preceq_\calC c) = \widetilde{O}(\alpha)$. This is the main benefit of the partial order structure in Definition \ref{assump:partial_order}. 

To proceed, let $\calE_2$ be the good event ensured by Lemma \ref{lemma.partial_order_N_lb}, then
\begin{align*}
	R_T(\pi)\1(\calE_1 \cap \calE_2) &\le \sum_{t=1}^T \min\left\{4\Delta(N_{c_t}^{t-1}), 1 \right\}\1(\calE_2) \le \sum_{t=1}^T 8\Delta(N_{c_t}^{t-1}+2)\1(\calE_2) \\
	&\le 24\sqrt{\alpha\log(2T)\log(MKT/\delta)\log(MKT^2/\delta)}\sum_{t=1}^T \frac{1}{\sqrt{1+\sum_{s<t} \1(c_s\preceq_\calC c_t) }} \\
	&\le 24\sqrt{\alpha T\log(2T)\log(MKT/\delta)\log(MKT^2/\delta)}\cdot \sqrt{\sum_{t=1}^T \frac{1}{1+\sum_{s<t}\1(c_s\preceq_\calC c_t)}}.
\end{align*}
The next lemma presents two upper bounds on the final sum, depending on whether the contexts are stochastic or adversarial. 

\begin{lemma}\label{lemma.record_breaking}
	\begin{enumerate}
		\item 
		Let $X_1,\cdots,X_T$ be iid real-valued random variables taking value in a finite partially ordered set $(\calX, \preceq)$, which has independence number $\beta$. Then
		\begin{align*}
			\bE\left[ \sum_{t=1}^T \left(1 + \sum_{s<t} \1(X_s \preceq X_t) \right)^{-1} \right] \le \beta(1+\log T)^2. 
		\end{align*}
		\item 
		Let $x_1, \dots, x_T$ be arbitrary (possibly generated adversarially) in a partially ordered set $(\calX, \preceq)$. Then
		\begin{align*}
			\sum_{t=1}^T \left(1 + \sum_{s<t} \1(x_s \preceq x_t) \right)^{-1} \le |\calX| (1+\log T). 
		\end{align*}
	\end{enumerate}
\end{lemma}

Lemma \ref{lemma.record_breaking} is the key place where stochastic contexts offer greater help than adversarial contexts: the upper bound improves from $|\calC| = M$ to $\beta$. Plugging different results of Lemma \ref{lemma.record_breaking} into the regret upper bound leads to the claimed results of Theorem \ref{thm:partial_order}. 
\end{proof} 

\subsection{Application to Repeated First-price Auctions}\label{subsec.application_auction}
We now describe how learning in first-price acutions is a special instance of the partially ordered contextual bandit in Definition \ref{assump:partial_order}. First, let $c_t = v_t$ (private values are contexts) and $a_t = b_t$ (bids are actions), then the joint distribution $P$ over the rewards $(r_{t,c,a})_{c\in\calC, a\in\calA}$ is given by:
\begin{align*}
	r_{t,c,a} = (c - a)\1(m_t \le a),\qquad \forall c\in\calC, a\in\calA, 
\end{align*}
where $m_t \overset{iid}{\sim} G$ is a common random variable shared among all contexts and actions. 
Further, the mean rewards are $R_{c,a} = (c-a)G(a)$.
The only remaining issue is that the private values and bids come from the continuous space $[0,1]$,
which we resolve by the following simple quantization: 
\begin{align}\label{eq:quantization}
	\calC := \left\{\frac{1}{M}, \frac{2}{M}, \cdots, 1\right\}, \qquad \calA := \left\{\frac{1}{K}, \frac{2}{K}, \cdots, 1\right\}. 
\end{align}
Note that any policy $\pi^{\text{\rm Q}}$ on this quantized problem naturally results in a policy in the original (continuous-space) problem: when $v_t \in [0,1]$ appears, first clip it to $\widetilde{v}_t = \min\{v\in\calC: v\ge v_t \}$ and apply $\pi^{\text{\rm Q}}$ to get the bid for time $t$. The following lemma shows that this quantization causes little performance degradation.

\begin{lemma}\label{lemma.approx_regret}
	Let $\pi^{\text{\rm Q}}$ be the policy for the quantized problem and $\pi$ be the induced policy for the original problem. Let $\widetilde{R}_T(\pi^{\text{\rm Q}},v) = \sum_{t=1}^T \left(\max_{b\in \calA} R(\widetilde{v}_t, b) - R(\widetilde{v}_t,b_t) \right)$ be the regret of $\pi^{\text{\rm Q}}$ in the quantized problem, where 
	$v = (v_1, \dots, v_T)$ is an arbitrary sequence of private values in the original problem.
	Then: $$R_T(\pi,v) \le \widetilde{R}_T(\pi^{\text{\rm Q}},v) + \left(\frac{2}{M} + \frac{1}{K}\right)T.$$
\end{lemma}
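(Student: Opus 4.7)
Plan: The proof is a per-round approximation argument comparing the continuous problem with its quantized counterpart. Fix $t \in [T]$ and telescope the per-round regret through the two intermediate quantities $\max_{b \in \calA} R(v_t, b)$ and $\max_{b \in \calA} R(\widetilde{v}_t, b)$:
\begin{align*}
\max_{b \in [0,1]} R(v_t, b) - R(v_t, b_t) = D_t + E_t + B_t + C_t,
\end{align*}
where $D_t := \max_{b \in [0,1]} R(v_t, b) - \max_{b \in \calA} R(v_t, b)$ is the action-space discretization loss, $E_t := \max_{b \in \calA} R(v_t, b) - \max_{b \in \calA} R(\widetilde{v}_t, b)$ and $C_t := R(\widetilde{v}_t, b_t) - R(v_t, b_t)$ are the two context-space discretization losses, and $B_t := \max_{b \in \calA} R(\widetilde{v}_t, b) - R(\widetilde{v}_t, b_t)$ is the quantized per-round regret, satisfying $\sum_t B_t = \widetilde{R}_T(\pi^{\mathrm{Q}}, v)$. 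It then suffices to show $D_t + E_t + C_t \le 2/M + 1/K$.

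The two context-discretization pieces are direct: since $R(v, b) = (v - b) G(b)$ with $G \le 1$ is $1$-Lipschitz in $v$, and $0 \le \widetilde{v}_t - v_t \le 1/M$ by construction of $\calC$, we get $C_t \le 1/M$ and $|E_t| \le 1/M$. For the action-discretization piece $D_t$, fix any maximizer $b_t^\ast \in \arg\max_{b \in [0,1]} R(v_t, b)$, which may be taken to lie in $[0, v_t]$: any $b > v_t$ gives nonpositive reward while $R(v_t, 0) = v_t G(0) \ge 0$. Set $\widetilde{b}_t^\ast := \min\{b \in \calA : b \ge b_t^\ast\}$, so that $\widetilde{b}_t^\ast \in \calA$ exists (as $1 \in \calA$) and $0 \le \widetilde{b}_t^\ast - b_t^\ast \le 1/K$. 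Since $R(v_t, \widetilde{b}_t^\ast) \le \max_{b \in \calA} R(v_t, b)$, the cross-term decomposition gives
\begin{align*}
D_t \le R(v_t, b_t^\ast) - R(v_t, \widetilde{b}_t^\ast) = (v_t - b_t^\ast)\bigl[G(b_t^\ast) - G(\widetilde{b}_t^\ast)\bigr] + (\widetilde{b}_t^\ast - b_t^\ast)\,G(\widetilde{b}_t^\ast) \le \tfrac{1}{K},
\end{align*}
where the first bracket is nonpositive (by $v_t \ge b_t^\ast$ and monotonicity of $G$) and the second is at most $\widetilde{b}_t^\ast - b_t^\ast \le 1/K$. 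Summing $D_t + E_t + B_t + C_t$ over $t \in [T]$ yields the lemma.

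The only mildly subtle step is the bound on $D_t$: a direct Lipschitz-in-$b$ estimate is unavailable because $G$ can be an arbitrary (possibly discontinuous) CDF, so $b \mapsto R(v_t, b)$ need not be continuous. The decisive point is that the canonical choice $\widetilde{b}_t^\ast \ge b_t^\ast$, together with $b_t^\ast \le v_t$ and the monotonicity of $G$, forces the cross-term $(v_t - b_t^\ast)[G(b_t^\ast) - G(\widetilde{b}_t^\ast)]$ to have the favorable sign, leaving only a benign $1/K$ loss from the action discretization; the remaining two $1/M$ terms come from replacing $v_t$ by $\widetilde{v}_t$ in the two different places ($E_t$ and $C_t$).
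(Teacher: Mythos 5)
Your proof is correct and follows essentially the same route as the paper: the same telescoping decomposition into the quantized regret plus discretization errors, with the two value-quantization terms each bounded by $1/M$ via the $1$-Lipschitzness of $R(\cdot,b)$, and the action-quantization term bounded by $1/K$ by rounding the optimal bid up to $\calA$ and using the monotonicity of $G$ so that the cross-term has the favorable sign. The only cosmetic difference is that you insert $\max_{b\in\calA}R(v_t,b)$ as an extra intermediate quantity, splitting the paper's combined $1/M+1/K$ term into two pieces.
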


We are left with checking the graph feedback and cross learning structures, as well as the partial orders in Definition \ref{assump:partial_order}. We claim that the feedback graph $G$ is a chain: for two bids $b, b' \in \calA$, there is an edge from $b$ to $b'$ if and only if $b<b'$. Clearly $G$ satisfies the first condition of Definition \ref{assump:partial_order}, with independence number $\alpha=1$. To see why $G$ is a valid feedback structure in first-price auctions, recall that the bidder receives partial information $\1(b_t\ge m_t)$ and $m_t\1(b_t<m_t)$ at the end of time $t$. As such, all indicator functions $\1(b\ge m_t)$ are also observable for any $b\ge b_t$, and so are the instantaneous rewards $r(v,b;m_t)=(v-b)\1(b\ge m_t)$ for all $v\in [0,1]$ and $b\ge b_t$. In other words, a bid $b_t$ not only reveals the reward for this particular bid under $v_t$, but also reveals the rewards for all bids $b\ge b_t$ and all values $v$ - this is the desired structure of graph feedback across actions and cross learning across contexts. 

Finally, to verify the second condition of Definition \ref{assump:partial_order}, we just take $\calC$ as a totally ordered set with $v_1 \preceq_\calC v_2 \Leftrightarrow v_1 \le v_2$. Consequently, the second condition of Definition \ref{assump:partial_order} is validated by the following lemma. 

\begin{lemma}\label{lemma.monotonicity}
	For each $v\in [0,1]$, let $b^\star(v) = \arg\max_{b\in \calA} R(v,b)$ (if there are multiple maximizers in the finite set $\calA$, take the largest). Then the mapping $v\mapsto b^\star(v)$ is non-decreasing. 
\end{lemma}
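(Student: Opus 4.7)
The plan is to proceed by contradiction, leveraging the supermodular (single-crossing) structure of $R(v,b) = (v-b)G(b)$ in the pair $(v,b)$, combined with the monotonicity of $G$. Fix $v_1 < v_2$ in $\calV$ and set $b_i = b^\star(v_i)$; I will assume $b_1 > b_2$ and derive a contradiction.

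First I would write out the two optimality conditions $R(v_1,b_1)\ge R(v_1,b_2)$ and $R(v_2,b_2)\ge R(v_2,b_1)$, then simply add them. After the $vG(b)$ cross terms cancel and the $bG(b)$ terms rearrange, this collapses to the single scalar inequality
\begin{equation*}
(v_1 - v_2)\bigl(G(b_1) - G(b_2)\bigr) \;\ge\; 0 .
\end{equation*}
Since $v_1 < v_2$ by assumption, this forces $G(b_1)\le G(b_2)$. On the other hand, $G$ is a CDF (hence non-decreasing) and $b_1 > b_2$, so $G(b_1)\ge G(b_2)$. Therefore $G(b_1)=G(b_2)$, which is the key structural consequence.

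Next I would split into two cases based on this common value. If $G(b_1)=G(b_2)>0$, then $R(v_1,b_2)-R(v_1,b_1) = (b_1-b_2)G(b_1) > 0$, which strictly contradicts the optimality of $b_1$ at $v_1$. If instead $G(b_1)=G(b_2)=0$, then $R(v_2,b_1)=(v_2-b_1)\cdot 0 = 0 = R(v_2,b_2)$, so $b_1$ is also a maximizer of $R(v_2,\cdot)$ over $\calA$; but $b_1 > b_2$ contradicts the tie-breaking convention that $b^\star(v_2)=b_2$ is the \emph{largest} maximizer. Either way we reach a contradiction, so $b^\star(v_1)\le b^\star(v_2)$, as desired.

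No step here is really an obstacle; the only delicate point is the degenerate case $G(b_1)=0$, which is precisely why the largest-maximizer tie-breaking rule is baked into the statement. The proof is short and essentially an instance of Topkis' monotone comparative statics applied to the supermodular payoff $R(v,b)$, so I would keep the exposition to a few lines rather than invoking general machinery.
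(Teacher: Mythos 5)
Your proof is correct. It is, however, a genuinely different write-up from the paper's. The paper argues directly: it writes $R(v_2,b) = R(v_1,b) + (v_2-v_1)G(b)$ and observes that for every $b\le b^\star(v_1)$ both summands are (weakly) dominated by their values at $b^\star(v_1)$, so $R(v_2,b)\le R(v_2,b^\star(v_1))$; hence no $b<b^\star(v_1)$ can be the \emph{largest} maximizer at $v_2$, and the tie-break is absorbed without any case analysis. You instead run the standard Topkis-style revealed-preference argument by contradiction: adding the two optimality inequalities yields $(v_1-v_2)\bigl(G(b_1)-G(b_2)\bigr)\ge 0$, which combined with monotonicity of $G$ forces $G(b_1)=G(b_2)$, and you then need the two-case split (common value positive versus zero) precisely to reconcile with the largest-maximizer convention. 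Both arguments hinge on the same structural fact --- $R(v,b)$ has increasing differences because $\partial R/\partial v = G(b)$ is non-decreasing in $b$ --- but your version is the generic supermodularity argument that would survive for any payoff with increasing differences, at the cost of the extra degenerate-case bookkeeping, while the paper's exploits the specific additive decomposition to get a shorter, direct proof with the tie-breaking handled in one stroke.
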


Lemma \ref{lemma.monotonicity} simply states that, whenever the bidder has a higher valuation of an item, she is willing to bid a higher price for it: although $b^\star(v)$ are unknown in general, a reliable lower bound of $b^\star(v)$ translates to a reliable lower bound of $b^\star(v')$ for all $v'\ge v$.

Based on the above properties, we see that learning in first-price auctions fit into the framework of partially ordered contextual bandits, with $\alpha=\beta=1$. The following corollary is then immediate from Theorem \ref{thm:partial_order}: 

\begin{corollary}\label{cor.stochastic}
	Algorithm \ref{algo.partial_order} applied to the quantized first-price auction leads to an expected regret $O(\sqrt{T}\log^{2.5}T)$ when the private values are i.i.d. (with parameters $M=K=T^{1/2}$ and $\delta = 1/(2T)$), and an expected regret $O(T^{2/3}\log^2 T)$ when the private values are adversarially generated (with parameters $M=K=T^{1/3}$ and $\delta = 1/(2T)$). 
\end{corollary}

Corollary \ref{cor.stochastic} shows that, applying our general framework of contextual bandits with both graph feedback and cross learning is sufficient to achieve an $\widetilde{O}(\sqrt{T})$ regret for first-price auctions, when the private values are stochastic. When the private values are adversarial, the next section will that the partially ordered contextual bandit framework turns out to be insufficient.

\subsection{Lower Bounds under Adversarial Contexts}\label{subsec.lowerbound}
The previous sections show that for partially ordered contextual bandits, a small regret $\widetilde{O}(\sqrt{\alpha\beta T})$ almost independent of $(K, M)$ is attainable in expectation under stochastic contexts. A natural question is whether similar guarantees are also possible under adversarial contexts. This section proves that this is not possible in general: there is a contextual bandit instance satisfying Definition \ref{assump:partial_order} with $\beta = 1$, while any learning algorithm suffers from a regret $\Omega(\min\{\sqrt{\alpha MT}, \sqrt{KT}, T^{2/3}\})$ in the worst case. 
The formal statement is summarized in the following theorem. 

\begin{theorem}\label{thm.lower_bound}
	For every $\alpha\in \mathbb{N}$, there exists a graph $G$ with independence number $\alpha$ such that
	\begin{align*}
		\inf_\pi \sup_{P}\sup_{\{c_t\}_{t=1}^T} \bE[R_T(\pi)] \ge c \min\left\{\sqrt{\alpha MT}, \sqrt{KT}, T^{2/3} \right\}.
	\end{align*}
	Here the supremum is taken over all possible partially ordered contextual bandits with feedback graph $G$, joint reward distribution $P$ on $[0,1]^{\calC\times \calA}$ such that the context set $\calC$ is totally ordered (i.e. $\beta=1$), and all possible oblivious context sequences $(c_t)_{t\in [T]}$; the infimum is taken over all possible policies, and $c>0$ is an absolute constant independent of $(K,M,T,\alpha)$. 
\end{theorem}

Note that the minimum of first two terms $\Omega(\min\{\sqrt{\alpha MT}, \sqrt{KT}\} )$ matches the upper bound in Theorem \ref{thm.contextual} within logarithmic factors, showing that under adversarial contexts, using an algorithm without using Definition \ref{assump:partial_order} is somewhat optimal. We remark that the last term $T^{2/3}$ in the lower bound is not superfluous as well, for there exists another explore-then-commit (ETC) policy that achieves an $\widetilde{O}(T^{2/3})$ regret regardless of $(M, K)$. Consider the case $\alpha = 1$, meaning that there is an action $a_0$ (in the DAG case; for the general case, $a_0$ is a union of $O(\log K)$ actions in view of Lemma \ref{lemma.independence_number}) which could provide full reward information for all actions and contexts $(r_{t,c,a})_{c\in \calC, a\in \calA}$. The ETC policy is then as follows: choose the action $a_0$ for the first $O(T^{2/3})$ rounds (exploration phase), and then greedily choose the action with the highest empirical reward for the rest of the time (commit phase). This policy leads to an $\widetilde{O}(T^{2/3})$ regret as the uniform estimation error after the exploration phase is $\widetilde{O}(T^{-1/3})$, giving an $\widetilde{O}(T^{2/3})$ regret in the commit phase. Combined with Theorem \ref{thm.contextual}, this actually results in a tight regret characterization for contextual bandits in Definition \ref{def.contextual_feedback_cross} when $\alpha = 1$: 

\begin{corollary}\label{cor.minimax_regret}
	For any given $(M, K, T)$, the minimax regret, where the maximum takes over all possible bandit instances in Definition \ref{def.contextual_feedback_cross} with $\alpha=1$ (regardless of whether or not Definition \ref{assump:partial_order} is additionally required), is $\widetilde{\Theta}(\sqrt{\min\{K, M, T^{1/3}\}T})$. 
\end{corollary}

We use a Bayesian approach to prove Theorem \ref{thm.lower_bound}: we set $G$ as a disjoint union of $\alpha$ chains, and construct $(\alpha+1)^M$ reward distributions $P^u$ indexed by $u \in \{0,1,\cdots,\alpha\}^M$. We show that under the uniform mixture of these reward distributions, no single policy can incur an average regret smaller than $\Omega(\sqrt{\min\{K, \alpha M, T^{1/3}\} T})$. The derivation of the above average lower bound requires a delicate exploration-exploitation tradeoff: we show that if the average regret were small, then the learner would not have enough information to distinguish among the components of the mixture and would in turn incur a large average regret. Note that this is different from typical lower bounds in the bandits literature, where the inability of distinguishing among the components (or multiple hypotheses) for the learner is independent of the final regret she attains. 

\begin{proof}[Proof of Theorem \ref{thm.lower_bound}]
We first claim that it suffices to prove a regret lower bound of $\Omega(\sqrt{KT})$ under the scaling $K = (\alpha+1)M\le T^{1/3}$. To see why this is sufficient, for general $(M, K)$ we define $K' = \min\{K, (\alpha+1)M, T^{1/3}\}\le K$ and $M' = \min\{K, (\alpha+1)M, T^{1/3}\}/(\alpha+1)\le M$ (without loss of generality assume that both $K'$ and $M'$ are integers). By adding dummy contexts and actions, the minimax regret under $(M, K)$ is no smaller than that under $(M', K')$, where $K' = (\alpha+1)M' \le T^{1/3}$ holds. Using the minimax lower bound under this special scaling, the general lower bound is then:
\begin{align*}
	\Omega\left(\sqrt{K'T} \right) = \Omega\left(\min\left\{\sqrt{KT}, \sqrt{\alpha MT}, T^{2/3} \right\}\right).
\end{align*}

Under the scaling $K = (\alpha+1)M\le T^{1/3}$, we define the action space $\calA$ and feedback graph $G$ as follows. The action space is represented by $\calA = \{(i, j): i \in [M], j\in [\alpha]\cup \{0\} \}$, and the edges of the feedback graph $G$ are all $(i_1,j_1)\to (i_2,j_2)$ with $i_1<i_2$ and $(i,j)\to (i,0)$ with $j\neq 0$. Clearly $G$ is a DAG satisfying the first condition of Definition \ref{assump:partial_order}, and the independence number of $G$ is $\alpha$. 

Next we define a class of reward distributions $(P^u)_{u}$. The context space is simply $\calC = [M]$, and for each $u=(u_1,\cdots,u_M)\in \{0,1,\cdots,\alpha\}^M$, define 
\begin{align*}
	P^u = \prod_{c\in \calC, a\in \calA} P_{c,a}^u = \prod_{c\in [M]}\prod_{(i,j)\in [M]\times \{0,1,\cdots,\alpha\}} \mathsf{Bern}(R_{c,(i,j)}^u), 
\end{align*}
where the rewards $(R_{c,(i,j)}^u)$ are chosen as follows: 
\begin{align*}
	R_{c,(i,j)}^u = \begin{cases}
		\frac{1}{4} + \Delta &\text{if } i = c \text{ and } j = 0, \\
		\frac{1}{4} + 2\Delta &\text{if } i = c \text{ and } j \neq 0 \text{ and } u_c = j, \\
		\frac{1}{4} &\text{if } i = c \text{ and } j \neq 0\text{ and } u_c \neq j, \\
		0 &\text{if } i \neq c,
	\end{cases} \quad \text{ where }\Delta := \sqrt{\frac{K}{16T}} \le \frac{1}{4}. 
\end{align*}
The above rewards are chosen so that the following properties hold: 
\begin{enumerate}
	\item all mean rewards lie in $[0,1]$, and all non-zero mean rewards lie in $[1/4, 3/4]$; 
	\item under true parameter $u$ and context $c$, the unique optimal action is $(c, u_c)$; 
	\item for every $u$ and $c$, the instantaneous regret incurred by choosing a non-optimal action is at least $\Delta$; 
	\item for every context $c$, the instantaneous regret incurred by choosing any action $(i,j)$ with $i\neq c$ is at least $1/4$; 
	\item for $u=(u_1,\cdots,u_M)$ and $u'=(u_1,\cdots,u_{c-1},0,u_{c+1},\cdots,u_M)$ only differing in the $c$-th component, the KL divergence between the observed reward distributions $P^u(i,j)$ and $P^{u'}(i,j)$ when the action $(i,j)$ is chosen (under any context) is: 
	\begin{align*}
		D_{\text{KL}}(P^{u'}(i,j) \| P^u(i,j)) &= \begin{cases}
			0 &\text{if } i > c \text{ or } (i=c, j\neq u_c), \\
			D_{\text{KL}}(\mathsf{Bern}(1/4)\|\mathsf{Bern}(1/4+2\Delta)) & \text{if } i<c \text{ or }(i=c, j=u_c), 
		\end{cases} \\
		&\le \frac{64}{3}\Delta^2\cdot \1(i<c \text{ or } (i,j) = (c,u_c)), 
	\end{align*}
	where the second inequality is due to $D_{\text{KL}}(\mathsf{Bern}(p) \| \mathsf{Bern}(q)) \le (p-q)^2/(q(1-q))$. This is because $R^u$ and $R^{u'}$ only differ in the index $(c, (c, u_c))$, and we have used the feedback structure of $G$ and the cross learning across contexts. 
\end{enumerate}

Next we check the second condition of Definition \ref{assump:partial_order} with $\beta = 1$. The partial ordering on $\calC=[M]$ is simply the usual integer order: $c_1 \preceq_\calC c_2$ if and only if $c_1 \le c_2$. By the second property above and our construction of $G$, under any $u$ we have $a_u^\star(c_1)\preceq_G a_u^\star(c_2)$ as long as $c_1\le c_2$, where $a_u^\star(c)$ is the best action under $P^u$ and context $c$. Also it is clear that $\beta = 1$ for this total ordering. 

Finally we set the context sequence $(c_t)_{t=1}^T$. Since $M\le T^{1/3}$, without loss of generality we assume that $T/M$ is an integer. We split the time horizon $[T]$ into $M$ consecutive blocks, each of size $T/M$. During the $m$-th block, the context is always $c_m := M+1-m$. The reason why we choose this sequence is that the optimal action gets smaller (in terms of $\preceq_G$) over time, so early actions do not provide sufficient information for late actions, which makes learning harder. 

So far we have finished all preparations for the lower bound proof. Let $\pi$ be an arbitrary policy, and write the $t$-th action as $a_t = (i_t, j_t)\in \calA$. Let $R_T$ be the largest expected regret incurred by policy $\pi$ under true reward distributions $P^u$ when $u$ traverses $\{0,1,\cdots,\alpha\}^M$, our target is to show that $R_T = \Omega(\sqrt{KT})$. We shall prove two different lower bounds on $R_T$ and combine them to arrive at the desired result. First, by the fourth property above, for every $u=(u_1,\cdots,u_M)$, 
\begin{align}\label{eq:lowerbound_first}
	R_T &\ge \frac{1}{4}\sum_{m=1}^M\sum_{t\in T_m}\bE_{(P^{u})^{\otimes (t-1)}}[\1(i_t \neq c_m)], 
\end{align}
where $T_m$ is the $m$-th block, and $P^{\otimes t}$ denotes the distribution of all observables up to time $t$ under reward distribution $P$. 

The second lower bound  is more delicate. Let $u^m:=(u_1,\cdots,u_{m-1},0,u_{m+1},\cdots,u_M)$ set the $m$-th component of $u$ to zero. If $u$ is uniformly distributed over $\{0,1,\cdots,\alpha\}^M$, the Bayes regret is at least
\begin{align*}
	R_T &\stepa{\ge} \bE_u\left[ \sum_{m=1}^M \sum_{t\in T_m} \Delta\cdot \bE_{(P^u)^{\otimes (t-1)}}[\1(j_t \neq u_{c_m})] \right] \\
	&\stepb{=} \Delta\cdot \sum_{m=1}^M\sum_{t\in T_m} \bE_{u\backslash \{u_{c_m} \}}\left\{ \bE_{u_{c_m}}[ \bE_{(P^u)^{\otimes (t-1)}}[\1(j_t \neq u_{c_m})] ] \right\} \\
	&\stepc{\ge} \Delta\cdot \sum_{m=1}^M\sum_{t\in T_m} \bE_{u\backslash \{u_{c_m} \}}\left\{ \frac{\alpha}{2(\alpha+1)}\exp\left( - \frac{1}{\alpha}\sum_{u_{c_m}=1}^\alpha D_{\text{KL}}( (P^{u^{c_m}})^{\otimes (t-1)}  \| (P^{u})^{\otimes (t-1)}) \right) \right\} \\
	&\stepd{\ge} \frac{\Delta}{4} \cdot \sum_{m=1}^M \sum_{t\in T_m} \bE_{u\backslash \{u_{c_m} \}}\exp\left( -\frac{64\Delta^2}{3\alpha}\sum_{u_{c_m}=1}^\alpha \sum_{s<t} \bE_{ (P^{u^{c_m}})^{\otimes (s-1)}}[\1(i_s < c_m \text{ or } (i_s, j_s) = (c_m, u_{c_m}))] \right),
\end{align*}
where (a) is due to the third property of mean rewards, (b) decomposes the uniformly distributed $u$ into uniformly distributed $u\backslash \{u_{c_m}\}$ and $u_{c_m}$, (c) is due to Lemma \ref{lemma.tree} and the convexity of $x\mapsto e^{-x}$, and (d) follows from the chain rule of KL divergence and the fifth property of mean rewards. To deal with the remaining exponent, we have
\begin{align*}
	&\sum_{s<t} \bE_{ (P^{u^{c_m}})^{\otimes (s-1)}}[\1(i_s < c_m \text{ or } (i_s, j_s) = (c_m, u_{c_m}))] \\
	&= \sum_{m'<m}^{}\sum_{s\in T_{m'}}  \bE_{ (P^{u^{c_m}})^{\otimes (s-1)}}[\1(i_s < c_m \text{ or } (i_s, j_s) = (c_m, u_{c_m}))] \\
	&\qquad + \sum_{s<t, s\in T_m} \bE_{ (P^{u^{c_m}})^{\otimes (s-1)}}[\1(i_s < c_m \text{ or } (i_s, j_s) = (c_m, u_{c_m}))]  \\
	&\le \sum_{m'<m}\sum_{s\in T_{m'}} \bE_{ (P^{u^{c_m}})^{\otimes (s-1)}}[\1(i_s \neq c_{m'})] + \sum_{s<t, s\in T_m}  \bE_{ (P^{u^{c_m}})^{\otimes (s-1)}}[\1(i_s \neq c_{m}) + \1(j_s = u_{c_m})] \\
	&\le \sum_{m'=1}^M \sum_{s\in T_{m'}} \bE_{ (P^{u^{c_m}})^{\otimes (s-1)}}[\1(i_s \neq c_{m'})] + \sum_{s\in T_m} \bE_{ (P^{u^{c_m}})^{\otimes (s-1)}}[\1(j_s = u_{c_m})] \\
	&\le 4R_T + \sum_{s\in T_m} \bE_{ (P^{u^{c_m}})^{\otimes (s-1)}}[\1(j_s = u_{c_m})] ,
\end{align*}
where the last inequality is due to \eqref{eq:lowerbound_first} applied to $u^{c_m}$. The above two inequalities lead to
\begin{align*}
	R_T &\ge \frac{\Delta}{4} \cdot \sum_{m=1}^M \sum_{t\in T_m} \bE_{u\backslash \{u_{c_m} \}}\exp\left( -\frac{64\Delta^2}{3} \left(4R_T + \frac{1}{\alpha}\sum_{s\in T_m}\sum_{u_{c_m}=1}^\alpha \bE_{ (P^{u^{c_m}})^{\otimes (s-1)} }[\1(j_s = u_{c_m})]  \right) \right) \\
	&\stepe{\ge} \frac{\Delta}{4} \cdot \sum_{m=1}^M \sum_{t\in T_m} \bE_{u\backslash \{u_{c_m} \}}\exp\left( -\frac{64\Delta^2}{3} \left(4R_T + \frac{1}{\alpha}\sum_{s\in T_m} 1  \right) \right) \\
	&= \frac{\Delta T}{4} \cdot \exp\left( -\frac{64\Delta^2}{3} \left(4R_T + \frac{T}{\alpha M} \right) \right), 
\end{align*}
where (e) crucially uses the observation that $u^{c_m}$ does not depend on $u_{c_m}$, and $\sum_{u_{c_m}} \1(j_s = u_{c_m}) \le 1$. Plugging the expression of $\Delta = \sqrt{K/(16T)}$ and $K=(\alpha+1)M$, we have
\begin{align}\label{eq:lowerbound_final}
	R_T \ge \frac{\sqrt{KT}}{16}\exp\left( - \frac{16KR_T}{3T} - \frac{4(\alpha+1)}{3\alpha}\right) \ge \frac{\sqrt{KT}}{16e^3}\exp\left( - \frac{16KR_T}{3T}\right). 
\end{align}
To conclude that $R_T=\Omega(\sqrt{KT})$ from \eqref{eq:lowerbound_final}, we use the remaining assumption $K\le T^{1/3}$. If $R_T \le \sqrt{KT}/(32e^3)$, then
\begin{align*}
	\frac{\sqrt{KT}}{32e^3} \ge R_T \ge  \frac{\sqrt{KT}}{16e^3}\exp\left( - \frac{16KR_T}{3T}\right) \ge  \frac{\sqrt{KT}}{16e^3}\exp\left( - \frac{K^{3/2}}{6e^3T^{1/2}}\right) \ge \frac{\sqrt{KT}}{16e^3}\exp\left(-\frac{1}{6e^3}\right),
\end{align*}
which is a contradiction. Therefore, $R_T \ge \sqrt{KT}/(32e^3)$, and we are done. 
\end{proof} 

\begin{remark}\label{rem:lower_bound_insufficient}
	A close inspection of the proof reveals that the same lower bound construction works for the partially ordered contextual bandit structure possessed by the first-price auction as well. Therefore, Theorem \ref{thm.lower_bound} suggests that the partially ordered contextual bandit framework is insufficient to achieve an $\widetilde{O}(\sqrt{T})$ regret for first-price auctions under adversarial private values.
	In particular, the best lower bound given by Theorem~\ref{thm.lower_bound} for first-price auctions while balancing the discretization errors given in Lemma~\ref{lemma.approx_regret} is $\Omega(T^{2/3})$,
	where $M = K = T^{1/3}$.
	This motivates the need for exploiting additional structures in first-price auctions, which is the subject of Section \ref{sec.adversarial}. 
\end{remark}
\section{Learning in First-price Auctions with Adversarial Private Values}\label{sec.adversarial}
We now consider the more challenging problem where the 
private values are arbitrary and chosen by an oblivious adversary. 
Surprisingly, we show that an $\widetilde{O}(\sqrt{T})$ expected regret can still be achieved. 
To do so, we exploit the additional property in first-price auctions that rewards are correlated among actions, a property not present in the general partially ordered contextual bandits.
We provide a high-level sketch of the main ideas in Section \ref{subsec.idea} and detail the final (and more complicated) ML-IS-UCB policy in Section \ref{subsec.coUCB}, with the full analysis in Section \ref{subsec.coUCB_analysis}. 

\subsection{Main Idea: Correlated Rewards and Interval Splitting}\label{subsec.idea}
The lower bound in Theorem \ref{thm.lower_bound} shows that learning with adversarial private value sequences cannot be handled purely using a generic partially ordered contextual bandit structure. 
That is because, at the generality of a partially ordered contextual bandit, choosing one action reveals either complete information or no information about the mean reward of another action. 
However, as we shall see, the rewards of different actions in first-price auctions are actually \emph{correlated}, and therefore choosing an action may reveal \emph{partial} information of another action.

To illustrate this idea, we recall the key insights of Algorithm \ref{algo.partial_order} specialized to first-price auctions. Specifically, it makes use of the crucial fact that the outcome of bidding low prices provides full information for the rewards of bidding high prices.  Mathematically, observing $b_t$ and $\max\{b_t, m_t\}$ gives perfect knowledge of $\1(m_t\le b)$ for any $b\ge b_t$; therefore, bidding $b_t$ contributes to estimating the CDF $G(b)$ (and hence the mean reward of bidding $b$) for all $b\ge b_t$. However, when the value sequence is chosen adversarially to yield decreasing $b_t$'s over time, this structure becomes not helpful.  

Despite this hurdle, which is insurmountable in a generic partially ordered contextual bandit, our key insight in this section is that bidding high prices in fact provides \emph{partial} information for the rewards of bidding low prices: for any two bids $b_1$ and $b_2$, the random reward $(v-b_1)\1(m\le b_1)$ of bidding $b_1$ is \emph{correlated} with the random reward $(v-b_2)\1(m\le b_2)$ of bidding $b_2$. Therefore the former observation may help infer part of the latter, even if $b_1 > b_2$. We exploit this additional structure in first-price auctions via the following \emph{interval splitting} scheme. 

\subsubsection{An Interval-Splitting Scheme}

Since the only unknown part in the reward $R(v,b)$ in \eqref{eq.expected_reward} is $G(b)=\bP(m_t\le b)$, we may reduce the reward estimation problem to the estimation of $G$, or equivalently the complementary CDF (CCDF) $\bar{G}(b) = \bP(m_t>b)$. For $b<b'$, we write:
\begin{align}\label{eq.prob_decomposition}
\bar{G}(b) = \bP(b<m_t\le b') + \bP(m_t> b'). 
\end{align}
Now is the crucial insight in leveraging this probability decomposition in \eqref{eq.prob_decomposition}. On one hand, since $b'>b$, the second quantity $\bP(m_t>b')$ can be estimated within more precision, as more observations are available for $b'$ than $b$ due to the one-sided feedback. On the other hand, although the number of samples for estimating $\bP(b<m_t\le b')$ is the same as that for estimating $\bar{G}(b)$, the target probability to be estimated becomes smaller: $ \bP(b<m_t\le b') \le \bP(m_t>b)$. Here a smaller target probability helps because of the dispersion of the Binomial distribution: if we toss a coin $n$ times with head probability $p$, the empirical mean estimate will deviate from the truth by at most $O(\sqrt{p/n})$ in expectation. Consequently, all el the smaller the true probability $p$ is, the easier it is to obtain an accurate estimate. This indicates that the smaller $\bP(b<m_t\le b')$ can be estimated accurately even if the sample size is smaller. As such, the decomposition \eqref{eq.prob_decomposition} motivates us to estimate these probabilities separately to achieve a better accuracy: the second quantity $\bP(m_t> b')$ corresponds to the correlation in the rewards where a larger sample size for a larger bid is provided, and the first quantity $\bP(b<m_t\le b')$ has a smaller magnitude and therefore enjoys a smaller estimation error as well. 

Taking one step further, the decomposition \eqref{eq.prob_decomposition} can be extended to multiple intervals in a dynamic way over time. Specifically, let $\calP_t =\{b_1, \dots, b_t\}$ be the set of split points of the interval $[0,1]$ at the end of time $t$. Assuming distinct $b_t$'s, let $b_{(1)}< b_{(2)}<\cdots<b_{(t)}$ be the order statistic. 
With the convention $b_{(0)}= 0, b_{(t+1)} = 1$, it follows that for any candidate bid $b\in (b_{(s)}, b_{(s+1)}]$ with $0\le s\le t$, we have the following representation of the CCDF based on interval-splitting: 
\begin{align*}
\bar{G}(b) = \bP(b < m_t \le b_{(s+1)}) + \sum_{s'=s+1}^t \bP(b_{(s')} < m_t \le b_{(s'+1)}).
\end{align*}
We can then estimate $\bP(b_{(s')} < m_t \le b_{(s'+1)})$ using the empirical frequency computed from the past observations as follows (here $m_{(\ell)}$ is the HOB associated with $b_{(\ell)}$; note that $m_{(\ell)}$ is observed only if $m_{(\ell)} > b_{(\ell)}$): 
$$\widehat{\bP}(b_{(s')} < m_t \le b_{(s'+1)}) = \frac{1}{s'} \sum_{\ell=1}^{s'} 
 \1(m_{(\ell)} > b_{(\ell)}) \1(b_{(s')}  < m_{(\ell)}  \le b_{(s'+1)}).$$
Note that only the first $s'$ observations are used in the above estimate, for $m_{(\ell)}$ with $\ell > s'$ is either outside the interval $(b_{(s')},b_{(s'+1)}]$ when it is observed, or with an unknown membership in the interval $(b_{(s')},b_{(s'+1)}]$ when it is only known that $m_{(\ell)} \le b_{(\ell)}$. In both scenarios, these remaining observations cannot contribute to the estimation of $\bP(b_{(s')} < m_t \le b_{(s'+1)})$. Glancing at this line of reasoning, we see that the sample sizes in each partition are different and adaptively chosen based on the previous bids. In particular, the reward of bidding $b$ is estimated by combining the local estimates with different sample sizes. 

Finally, based on the estimated rewards, the bidder picks the bid $b_{t+1}$ for the time $t+1$, and update the partition as $\calP_{t+1} = \calP_t \cup \{b_{t+1}\}$. Repeating the previous estimation procedure on $\calP_{t+1}$ shows that, bidding a high price $b_{t+1}$ provides partial information for a lower price $b$ in the sense that an additive component of $\bar{G}(b)$, i.e., $\bar{G}(b_{t+1})$, can now be estimated using one more observation.

\subsubsection{Intuition of the UCB Construction}
The above interval-splitting scheme provides an efficient way of using past observations to estimate the CDF. We now integrate this estimation scheme into the bidding process to understand the intuition of the final bidding algorithm. First, we quantize the action space into $K=\lceil \sqrt{T}\rceil$ evenly spaced grid points $\calB = \{b^1,\cdots,b^K\}$, with $b^i = (i-1)/K$ and $b^{K+1}:= 1$. For each $(b^j, b^{j+1}]$, we count the number $n_j$ of past observations that contributes to estimating $p_j = \bP(b^j< m_t\le b^{j+1})$ (i.e. observations from $s\le t$ where $b_s\le b^j$) and from there compute the corresponding estimate $\widehat{p}_j$. From these estimates, we can then easily recover the CDF:
\begin{equation}\label{eq:cdf_estimate}
\widehat{G}(b^i) = 1 -\sum_{j=i}^K \widehat{p}_j.
\end{equation}



Now, at time $t$, after receiving $v_t$, we would ideally like to pick a bid $b^i$ in $\calB$ to maximize the expected reward $R(v_t, b^i) = (v_t - b^i) G(b^i)$ --- if we had known $G(\cdot)$. Of course, we do not know $G(\cdot)$,
and an available estimate comes from \eqref{eq:cdf_estimate}.
However, despite the best efforts that went into making that estimate efficient via the interval-splitting scheme,
distilled wisdom from the bandits literature would frown upon us if we had simply plug in $\widehat{G}(b^i)$ and pick a maximizing $b^i$ thereafter, because of the necessity of exploration. The celebrated idea of the upper confidence bound (UCB) algorithm in the bandit literature suggests that using an upper confidence bound of $\widehat{G}$ --- rather than $\widehat{G}$ itself --- could potentially be a good way to achieve exploration. 

As such, an immediate question --- the key to all UCB-based algorithms --- naturally arises: how should one choose the upper confidence bound in this non-parametric setting? Consider the following ideal scenario where each $\widehat{p}_j$ is a normalized Binomial random variable $n_j\widehat{p}_j \sim \mathsf{B}(n_j,p_j)$, and the estimates $\widehat{p}_j$ for different $j$ are negatively associated, as is the case in the classical multinomial models \cite{joag-dev1983}. In this case, we have
\begin{align*}
\var\left(\sum_{j=i}^K \widehat{p}_j \right) \le \sum_{j=i}^K\var\left( \widehat{p}_j \right) = \sum_{j=i}^K \frac{p_j(1-p_j)}{n_j} \le \sum_{j=i}^K \frac{p_j}{n_j}.
\end{align*}
This suggests the use of the following UCB algorithm (with $\widehat{G}(b^j)$ given in \eqref{eq:cdf_estimate}): 
\begin{align}\label{eq.coUCB}
				b_t = \arg\max_{b^i\in \calB} \quad (v_t - b^i)\left[\widehat{G}(b^j)+ \gamma \left(\sqrt{\sum_{j=i}^K \frac{p_j}{n_j} } + \frac{1}{n_i}\right) \right]. 
\end{align}
In the ideal scenario where $n_j\widehat{p}_j\sim \mathsf{B}(n_j,p_j)$ is exactly Binomial, the above upper confidence bound is valid with high probability by the Bernstein inequality (cf. Lemma \ref{lemma.bennett}), with tuning parameter $\gamma$ at most polylogarithmic in $T$. 

However, there are two catches of using the confidence bound \eqref{eq.coUCB}. First, we do not know $p_j$, so we need to replace it by the empirical estimator $\widehat{p}_j$ in \eqref{eq.coUCB}. Second, and more important, the above ideal scenario does not hold since each estimator $\widehat{p}_j$ is \emph{not} a normalized Binomial random variable. Specifically, the estimator at time $t$ is
\begin{align*}
\widehat{p}_j^t = \frac{\sum_{s\le t} \1(b_s\le b^j)\1(m_s\in (b^j, b^{j+1}])}{\sum_{s\le t} \1(b_s\le b^j)}. 
\end{align*}
As such, conditioned on the bids $(b_1,\cdots,b_t)$, the HOBs $(m_1,\cdots,m_t)$ are no longer mutually independent: the choice of $b_t$ in \eqref{eq.coUCB} depends on the previous probability estimates, and thus depends on $(m_s)_{s<t}$. As a result, we do not even have the unbiased property, i.e. $\bE[\widehat{p}_j^t] \neq p_j$ in general. This issue also arises in the proof of Lemma \ref{lemma.concentration}, where each individual estimate $\widehat{p}_j^t$ can be handled using the theory of self-normalized martingales. However, the martingale approach breaks down in the current scenario where we need to handle the concentration property of the sum $\sum_{j\ge i} \widehat{p}_j^t$ due to the complex dependencies among $\widehat{p}_j^t$'s. To circumvent the dependence issue, we need a multi-layer algorithm to \emph{force} each estimator $\widehat{p}_j^t$ to be normalized Binomial and show that the desired theoretical guarantee on regret can still be obtained, which we discuss next.

\subsection{A Master Algorithm: The ML-IS-UCB Policy}\label{subsec.coUCB}

We analyze a master algorithm that is a multi-level version of the UCB intuition, which we call the Multi-Level Interval-Splitting UCB (ML-IS-UCB) policy; see Algorithm \ref{algo.ml-ucb}. The high-level idea is that instead of using all past observations up to time $t$ for computation, the estimator $\widehat{p}_j^t$ is computed by a subset of the history $\Phi(t)\subseteq [t]$ with the key property that \emph{$(m_s)_{s\in \Phi(t)}$ are mutually independent conditioned on $(b_s)_{s\in \Phi(t)}$}. This theoretical device of forcing conditional independence by throwing away a subset of data when performing estimation was first proposed in~\cite{auer2002using}, where a baseline contextual bandits algorithm LinREL is first given for linear contextual bandits (without regret guarantees due to the same dependency issue), and then a master algorithm that uses LinREL on a subset of past data is constructed to force conditional independence, giving an explicit regret bound. Subsequently, this master scheme has been used widely for different contextual bandits algorithm as a way to force conditional independence, such as the SupLinUCB algorithm in \cite{chu2011contextual} for linear contextual bandits with changing contexts and \cite{LLZ2017} for generalized linear contextual bandits. The main idea of forcing the conditional independence is the following: for a given action, although the estimated reward is formed after observing the empirical rewards, \emph{the accuracy of the estimate can be formed without observing the empirical rewards}. For example, if $X_1,\cdots,X_n\sim \calN(\mu,1)$ with an unknown mean $\mu$, the mean estimate $\overline{X}_n = n^{-1}\sum_{i=1}^n X_i$ depends on the sample $(X_1,\cdots,X_n)$, but the standard deviation $n^{-1/2}$ of $\overline{X}_n$ is independent of the sample. In other words, the randomness in the accuracy parameter (used to form the UCB) is decoupled with the randomness in the estimated reward. Our ML-IS-UCB algorithm follows the same philosophy, with the following distinctions from \cite{auer2002using, chu2011contextual,LLZ2017}:
\begin{itemize}
	\item the construction of the upper confidence bound is on a case-by-case basis depending on the underlying UCB algorithm, and our bound is designed in a nonparametric way; 
	\item the accuracy parameter in \eqref{eq:width} depends on the estimated rewards $\widehat{p}_j^0$ \emph{on a held-out sample}, a nature that is not present in the prior work. 
\end{itemize}

\begin{algorithm}[!htbp]
	\footnotesize
	\caption{Multi-Level Interval-Splitting Upper Confidence Bound (ML-IS-UCB) Policy	\label{algo.ml-ucb}}
	\textbf{Input:} Time horizon $T$; action set $\mathcal{B}=\{b^1,\cdots,b^K\}$ with $b^i = (i-1)/K$ and $K=\lceil \sqrt{T} \rceil$; number of levels $L = \lceil \log_2 T\rceil$; tuning parameter $\gamma>0$. \\
	\textbf{Output:} A resulting policy $\pi$. \\
	\textbf{Initialization:} Set $T_0 \gets (L+1)\cdot \lceil \sqrt{T} \rceil$, and the bidder bids $b_t = b^1 = 0$ for all $1\le t\le T_0$;\\
	Initialize $\Phi^{\ell}(T_0) \gets \{\ell \cdot \lceil \sqrt{T} \rceil +1, \cdots, (\ell+1) \cdot \lceil \sqrt{T} \rceil  \}$ for each $\ell=0,1,2,\cdots,L$; \\
	Initialize $\widehat{p}_i^0 \gets |\Phi^0(T_0)|^{-1}\sum_{t\in \Phi^0(T_0)}\1(b^i< m_t\le b^{i+1})$ for each $i\in [K]$.\\
	\For{$t = T_0+1,T_0+2,\cdots,T$}{
		The bidder receives the private value $v_t\in [0,1]$; \\
		The bidder initializes $\calB_t^0 \gets [K]$; \\
		\For{$\ell = 1,2,\cdots,L $}{
			For all $i\in [K]$, the bidder computes the number of observations for the action $b^i$ at level $\ell$:
			\begin{align}\label{eq:sample_size_level}
			n_{i,t}^{\ell} \gets \sum_{s\in \Phi^{\ell}(t-1)} \1(b_s \le b^i). 
			\end{align}\\
			For all $i\in \calB_t^{\ell-1}$, the bidder computes the following width $w_{i,t}^{\ell}$ of the action $b^i$: 
			\begin{align}\label{eq:width}
			w_{i,t}^{\ell} \gets  \gamma\left(\sqrt{2\log(LKT)\sum_{j\ge i}\frac{1}{n_{j,t}^{\ell}} \left(\widehat{p}_j^0 + \frac{\gamma \log(LKT)}{\lceil \sqrt{T} \rceil}\right)  }+ \frac{\log(LKT)}{n_{i,t}^{\ell}} \right). 
			\end{align}\\
			\eIf{there exists $i\in \calB_t^{\ell-1}$ such that $w_{i,t}^{\ell} > 2^{-\ell}$}{
				The bidder bids $b_t = b^i$ for an arbitrary $i\in \calB_t^{\ell-1}$ with $w_{i,t}^{\ell} > 2^{-\ell}$;\\
				The bidder updates $\Phi^{\ell}(t) \gets \Phi^{\ell}(t-1) \cup \{t\} $ and $\Phi^{\ell'}(t)\gets \Phi^{\ell'}(t-1)$ for all $\ell'\neq \ell$;\\
				\textbf{Break the inner for loop.}}{
				For all $i\in [K]$, the bidder computes the empirical probability of each interval: 
				\begin{align}\label{eq:probability_level}
				\widehat{p}_{i,t}^{\ell} \gets \frac{1}{n_{i,t}^{\ell}}\sum_{s\in \Phi^{\ell}(t-1)} \1(b_s\le b^i)\1(b^i< m_s\le b^{i+1}); 
				\end{align}\\
				For all $i\in \calB_t^{\ell - 1}$, the bidder computes the average rewards at level $\ell$: 
				\begin{align}\label{eq:reward_level}
				\widehat{r}_{i,t}^{\ell} \gets (v_t - b^i)\cdot \left(1 - \sum_{j\ge i} \widehat{p}_{j,t}^{\ell} \right); 
				\end{align}\\
				Eliminate bad actions at level $\ell$: 
				\begin{align}\label{eq:active_set_level}
				\calB_{t}^{\ell} \gets \left\{i\in \calB_{t}^{\ell-1}: \widehat{r}_{i,t}^{\ell} + w_{i,t}^{\ell} \ge \max_{j\in \calB_{t}^{\ell-1}}\left(\widehat{r}_{j,t}^{\ell} + w_{j,t}^{\ell}\right) - 2\cdot 2^{-\ell}  \right\}.
				\end{align}\\
				\textbf{Continue the inner for loop.}
			}
		}
		The bidder observes $\max\{b_t, m_t\}$; 
	}
\end{algorithm}

Next we describe the algorithm in detail. In ML-IS-UCB, we split the entire time horizon $[T]$ into $L$ different levels $\Phi^1(T),\cdots,\Phi^L(T)$ in a sequential manner, with $L = \lceil \log_2 T\rceil$. These sets form a partition of $[T]$, and therefore the subsets $\Phi^{\ell}(t) = \Phi^{\ell}(T)\cap [t]$ also form a partition of the current history $[t]$. We will call the collection of HOBs $(m_t)_{t\in \Phi^{\ell}(T)}$ as \emph{level-$\ell$ HOBs}. Heuristically, the level-$\ell$ HOBs are responsible for the level-$\ell$ estimation of the rewards for each candidate action surviving through this level, as well as a \emph{certificate} stating whether the estimation accuracy for each candidate action is within $2^{-\ell}$ or not. Specifically, at each time $t\in [T]$, we associate three objects to every level $\ell$ and every candidate bid $b^i$: 
\begin{enumerate}
	\item The certificate, or the \emph{width} $w_{i,t}^{\ell}$: The certificate is the first quantity to compute when entering each level and measures the reward estimation accuracy of the candidate bid $b^i$. Consequently, if $w_{i,t}^{\ell}$ is small for some action $i\in [K]$, then the level-$\ell$ reward estimate of the action $i$ is expected to be accurate, \emph{even before the bidder knows what the level-$\ell$ estimates are}. In particular, the computation of the certificate does not depend on $(m_s)_{s\in \Phi^{\ell}(T)}$. 
	
	Specifically, the certificate is set to be the width of the upper confidence bound in \eqref{eq.coUCB}, with several modifications. First, the number of observations $n_{i,t}^{\ell}$ for the $i$-th interval is counted only on past level-$\ell$ bids, as shown in \eqref{eq:sample_size_level}. Second, the probability estimate for each interval appearing in \eqref{eq.coUCB} (to estimate the variance) is chosen to be the estimates $\widehat{p}_{i}^{0}$ outputted by the held-out sample of size $\lceil \sqrt{T} \rceil$ on level $0$. Third, some additional terms are added to the width mainly for technical purposes. The final expression of $w_{i,t}^{\ell}$ is displayed in \eqref{eq:width}, and it is easy to see that the width does not depend on the level-$\ell$ HOBs $(m_s)_{s\in \Phi^{\ell}(T)}$. 
	\item The estimates $\widehat{p}_{i,t}^{\ell}$ and $\widehat{r}_{i,t}^{\ell}$: There are two possibilities after the certification step. If there is a candidate action with width at least $2^{-\ell}$, then the bidder stops here and chooses that action (if there are multiple such actions choose an arbitrary one). Also, the bidder assigns level $\ell$ to the time point $t$. This is an exploration step, where no estimates are performed at level $\ell$. Conversely, if all candidate actions have width smaller than $2^{-\ell}$, the bidder will perform the estimation and move to level $\ell+1$. Specifically, the bidder computes the empirical probability of each interval using only level-$\ell$ HOBs (cf. \eqref{eq:probability_level}), and consequently computes the reward estimates for each candidate action based on the above empirical probabilities (cf. \eqref{eq:reward_level}, which implements the interval-splitting scheme). Note that if the certificates are accurate, then both estimates are guaranteed to be $O(2^{-\ell})$-close to the truth. 
	\item The candidate action set $\calB_t^\ell$: After the bidder computes the estimates, she makes the exploitation step and eliminates probably bad actions from the candidates. Specifically, she updates the candidate set from $\calB_{t}^{\ell-1}$ to $\calB_t^{\ell}$ based on a UCB-type rule, which removes all actions whose upper confidence bound $\widehat{r}_{i,t}^{\ell} + w_{i,t}^{\ell}$ is smaller than the best one by a margin of at least $2\cdot 2^{-\ell}$ (cf. \eqref{eq:active_set_level}). The choice of the margin ensures two important properties of the elimination rule. First, the best action is never eliminated with high probability. Second, if any action in the candidate set $\calB_t^{\ell}$ is selected, the instantaneous regret is at most $O(2^{-\ell})$. After the update on the candidate action set, the bidder moves to level $\ell+1$ and repeat the above procedures. 
\end{enumerate}

In summary, instead of computing the reward estimates and widths simultaneously, ML-IS-UCB decouples the certification phase and the estimation phase. Specifically, the widths are computed first at each level without knowing the observations at that level, and the bidder performs estimation and \emph{exploits} a good action only if she is confident that the estimates she is about to carry out are expected to be accurate (as quantified by the previous widths). Otherwise she \emph{explores} and chooses any action with a large width, hoping that the width of the selected action will shrink below the desired level next time. With this decoupling, $(m_t)_{t\in \Phi^{\ell}(T)}$ are now mutually independent conditioned on the level-$\ell$ bids $(b_t)_{t\in \Phi^{\ell}(T)}$ for each level: 

\begin{lemma}\label{lemma:independence}
	Under an oblivious adversary, for each $\ell\in [L]$, the level-$\ell$ HOBs $(m_t)_{t\in \Phi^{\ell}(T)}$ are independent of the level-$\ell$ bids $(b_t)_{t\in \Phi^{\ell}(T)}$. Consequently, the level-$\ell$ HOBs $(m_t)_{t\in \Phi^{\ell}(T)}$ are mutually independent conditioning on the level-$\ell$ bids $(b_t)_{t\in \Phi^{\ell}(T)}$. 
\end{lemma}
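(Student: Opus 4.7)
The plan rests on a structural feature of ML-IS-UCB: when the algorithm computes the width $w_{i,t}^{\ell'}$ at level $\ell'$ via \eqref{eq:width}, the formula uses only (i) the counts $n_{i,t}^{\ell'}$ from \eqref{eq:sample_size_level}, which depend on past level-$\ell'$ bids but not on the corresponding observations; and (ii) the estimates $\widehat p_{j,t}^{\ell'-1}$, which are built from past level-$(\ell'-1)$ observations and bids. Past level-$\ell'$ observations enter the algorithm only through the estimates $\widehat p^{\ell'}$ defined in \eqref{eq:probability_level}, and these are computed only \emph{after} the algorithm has passed level $\ell'$, i.e., at times eventually assigned to some strictly higher level. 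This decoupling is precisely what the multi-level design was engineered to produce, and it is what allows the level-$\ell$ subtrajectory to be read off without ever consulting the level-$\ell$ observations.

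To convert this into a formal independence statement, I would run a coupling. Fix $\ell$ and consider a second observation sequence $\tilde m$ that agrees with $m$ off the (random) set $\Phi^\ell(T)$, with $\tilde m_t$ chosen arbitrarily in $[0,1]$ for $t\in\Phi^\ell(T)$; write $(\tilde\ell^*(t),\tilde b_t)$ for the trajectory produced by running ML-IS-UCB on $\tilde m$ under the same private values. I would prove by induction on $t\in\{T_0+1,\ldots,T\}$ the invariant: (a) on $\{\ell^*(t)\le\ell\}$, the alternate run satisfies $\tilde\ell^*(t)=\ell^*(t)$ and $\tilde b_t=b_t$; (b) on $\{\ell^*(t)>\ell\}$, it satisfies $\tilde\ell^*(t)>\ell$ (the two runs are allowed to diverge freely at strictly higher levels). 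The inductive step uses the structural observation above: by the induction hypothesis the past level-$\ell'$ bids for $\ell'\le\ell$ coincide in the two runs, so the counts $n_{i,t}^{\ell'}$ coincide; and for $\ell'\le\ell$ the estimate $\widehat p_{j,t}^{\ell'-1}$ only consults observations at level $\ell'-1\le\ell-1$, which are identical between $m$ and $\tilde m$ by construction. Hence the widths $w_{i,t}^{\ell'}$ for $\ell'\le\ell$ are identical across the two runs, which in turn forces the same stopping decision whenever that decision is taken at a level $\le\ell$, giving (a) and (b).

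The invariant says that both $\Phi^\ell(T)$ and $(b_t)_{t\in\Phi^\ell(T)}$ are unaltered by arbitrary perturbations of the level-$\ell$ coordinates of the observation vector; equivalently, they are measurable functions of $(m_s)_{s\notin\Phi^\ell(T)}$ and the deterministic value sequence. Since $m_1,\ldots,m_T$ are iid, the block $(m_t)_{t\in\Phi^\ell(T)}$ is independent of the $\sigma$-algebra generated by $(m_s)_{s\notin\Phi^\ell(T)}$ and $\Phi^\ell(T)$, which yields the first statement of the lemma. For the mutual independence, conditioning on $\Phi^\ell(T)=S$ and on $(m_s)_{s\notin S}$ fixes $(b_t)_{t\in S}$ deterministically while leaving $(m_t)_{t\in S}$ as iid draws from $G$, so the level-$\ell$ observations remain mutually independent conditional on the level-$\ell$ bids.

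The main obstacle I anticipate is the apparent circularity created by the fact that $\Phi^\ell(T)$ is itself a random set defined through the very trajectory one wishes to analyze. The coupling sidesteps this by never requiring $\Phi^\ell(T)$ to be defined independently of a run: the induction operates on a pair of runs simultaneously and propagates the invariant without ever referencing the level-$\ell$ observations. The careful asymmetry in the invariant—demanding full agreement on $\{1,\ldots,\ell\}$ but only the weak conclusion $\tilde\ell^*(t)>\ell$ on the complement—is what allows the argument to go through, since the two runs genuinely can produce different widths at levels $>\ell$ (because $\widehat p^\ell$ really does differ), and only the multi-level design prevents that divergence from contaminating the level-$\ell$ conclusions.
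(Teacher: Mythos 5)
Your proposal is correct and rests on exactly the same structural observation as the paper's proof: a time point lands in $\Phi^{\ell}(T)$ only after exploiting through levels $1,\dots,\ell-1$, so its bid is determined by level-$\le\ell$ bid counts, level-$\le(\ell-1)$ observations, and the (oblivious) value sequence, never by level-$\ell$ observations. The paper establishes this by directly unfolding the dependency chain $w_{i,t}^{\ell}\to n_{i,t}^{\ell},\widehat{p}_{i,t}^{\ell-1},\calB_t^{\ell-1}\to\cdots$ down to level $0$, whereas your perturbation coupling with the asymmetric invariant is a more careful formalization of the same argument that explicitly handles the measurability issue arising from $\Phi^{\ell}(T)$ being a random set -- a point the paper's proof passes over informally.
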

\begin{proof} 
	Fix any $t\in \Phi^{\ell}(T)$. Since the time point $t$ is classified as level $\ell$, the bidder at time $t$ exploits in the first $\ell-1$ levels and explores in the last level. Hence, the bid $b_t$ is determined by the widths $w_{i,t}^{\ell}$ for all $i\in \calB_t^{\ell-1}$, which in turn depends on $n_{i,t}^{\ell}$ in \eqref{eq:sample_size_level}, the previous estimates $\widehat{p}_{i,t}^{\ell-1}, \widehat{r}_{i,t}^{\ell-1}$ in \eqref{eq:probability_level}, \eqref{eq:reward_level}, and the previous candidate set $\calB_{t}^{\ell-1}$ in \eqref{eq:active_set_level}. Continuing the above process until $\ell=0$ shows that $b_t$ only depends on the quantities $(b_s: s\in \cup_{\ell'=0}^{\ell} \Phi^{\ell'}(t-1))$, $(m_s: s\in \cup_{\ell'=0}^{\ell-1} \Phi^{\ell'}(t-1))$ and $v_t$. If we further expand the dependence of the previous bids $b_s$ explicitly, we conclude that $b_t$ is  determined by $(m_s: s\in \cup_{\ell'=0}^{\ell-1} \Phi^{\ell'}(t-1))$ and $(v_s: s\in \cup_{\ell'=0}^{\ell} \Phi^{\ell'}(t))$. Consequently, the collection of level-$\ell$ bids depends solely on $(m_t: t\in \cup_{\ell'=0}^{\ell-1} \Phi^{\ell'}(T))$ and $(v_t: t\in \cup_{\ell'=0}^{\ell} \Phi^{\ell'}(T))$, which by the i.i.d. assumption of $(m_t)_{t\in[T]}$ and the definition of an oblivious adversary yields the claimed independence.
\end{proof}

Several other important details of Algorithm \ref{algo.ml-ucb} are in order. First, the bidder always bids zero in the first $T_0=O(\sqrt{T}\log T)$ rounds, and attributes the resulting observations evenly to $L$ levels. This pure-exploration step ensures that the number of observations at each level is at least $\sqrt{T}$, so that all estimates have enough accuracy even at the beginning. Moreover, this step only leads to a total regret at most $O(\sqrt{T}\log T)$. Second, we also define the level-$0$ observations only consisting of the first $\sqrt{T}$ observations where the bidder again consistently bids zero. These level-$0$ observations are used to form the initial probability estimates involved in the computation of the level-$1$ certificates. Finally, the loop in Algorithm \ref{algo.ml-ucb} will break before $\ell=L$, for \eqref{eq:width} shows that $w_{i,t}^{\ell} \ge 1/\sqrt{T}$ and therefore the widths $w_{i,t}^{\ell}$ cannot be smaller than $2^{-L} \le 1/T$.  Hence the algorithm always terminates.
Note that if Break is executed in Algorithm \ref{algo.ml-ucb}, it breaks out of the inner for loop and hence
$l$ restarts at 1 at the next iteration of the outer for loop.
We are now ready to characterize the learning performance of ML-IS-UCB. 

\begin{theorem}\label{thm.coUCB}
	Let the private values $v_1,\cdots,v_t\in [0,1]$ be a sequence chosen by an oblivious adversary. For $\gamma\ge 3$, the regret of the \emph{ML-IS-UCB} policy satisfies
	\begin{align*}
	\bE[R(\pi^{\text{\rm ML-IS-UCB}};v)] \le 4 + (L+4)\cdot \lceil \sqrt{T}\rceil + 80\gamma^{3/2} \log(LKT)(1+\log T)\cdot L\sqrt{T},
	\end{align*}
where we recall that $L = \lceil \log_2 T\rceil$ and $K = \lceil \sqrt{T}\rceil$ in the algorithm. 
\end{theorem}
Theorem \ref{thm.coUCB} implies Theorem \ref{thm.arbitrary}  and shows an $\widetilde{O}(\sqrt{T})$ regret under any private values. 

\subsection{Analysis of the ML-IS-UCB Policy}\label{subsec.coUCB_analysis}
This section is devoted to the proof of Theorem \ref{thm.coUCB}. The proof breaks into two parts. First, we show that the width $w_{i,t}^{\ell}$ constructed in \eqref{eq:width} is a reliable certificate for the estimates, i.e. $|\widehat{r}_{i,t}^{\ell} - r_{i,t}| \le w_{i,t}^{\ell}$ holds simultaneously for all $i\in [K], t\in [T], \ell\in [L]$ with high probability. Hence, the instantaneous regret at each time $t\in \Phi^{\ell}(T)$ is at most $O(2^{-\ell})$. Second, we provide an upper bound of $|\Phi^{\ell}(T)|$ for each $\ell = 1,2,\cdots,L$, showing that the total regret of the ML-IS-UCB policy is at most $\widetilde{O}(\sqrt{T})$.  

Thanks to Lemma \ref{lemma:independence}, the following empirical probability 
\begin{align*}
\widehat{p}_{i,t}^{\ell} = \frac{\sum_{s\in \Phi^{\ell}(t-1)} \1(b_s\le b^i)\1(b^i< m_s\le b^{i+1})}{\sum_{s\in \Phi^{\ell}(t-1)} \1(b_s\le b^i)}
\end{align*}
in \eqref{eq:probability_level} is a normalized Binomial random variable if we condition on all $(b_s: s\in \Phi^{\ell}(t-1))$, with sample size $n_{i,t}^{\ell}$ and success probability $p_i \triangleq \bP(m_1\in (b^i, b^{i+1}]) = G(b^{i+1}) - G(b^i)$. As a result, the following concentration inequality holds for all reward estimates $\widehat{r}_{i,t}^{\ell}$. 

\begin{lemma}\label{lemma.UCB}
	Let $r_{i,t} = R(b^i,v_t)$ be the expected reward of bidding $b^i\in \calB$ at time $t$, and $\widehat{r}_{i,t}^{\ell}$ be the level-$\ell$ estimate of $r_{i,t}$ constructed in \eqref{eq:reward_level}. For $\gamma\ge 3$, with probability at least $1-4T^{-3}$, the following inequality holds simultaneously for all $i\in [K], \ell\in [L]$, and $T_0<t\le T$: 
	\begin{align}\label{eq.lemma_concentration}
	\left|\widehat{r}_{i,t}^{\ell} - r_{i,t} \right| &\le w_{i,t}^{\ell} =  \gamma\left(\sqrt{2\log(LKT)\sum_{j\ge i}\frac{1}{n_{j,t}^{\ell}} \left(\widehat{p}_j^0 + \frac{\gamma \log(LKT)}{\lceil \sqrt{T} \rceil}\right)  }+ \frac{\log(LKT)}{n_{i,t}^{\ell}} \right) \\
	&\le  \gamma\left(\sqrt{6\log(LKT)\sum_{j\ge i}\frac{1}{n_{j,t}^{\ell}} \left(p_j + \frac{\gamma \log(LKT)}{\lceil \sqrt{T} \rceil}\right)  }+ \frac{\log(LKT)}{n_{i,t}^{\ell}} \right). 
	 \label{eq.variance_bound}
	\end{align}
\end{lemma}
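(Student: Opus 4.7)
The plan is to reduce the estimation error $\widehat{r}_{i,t}^{\ell}-r_{i,t}$ to a sum of conditionally independent centered random variables via Lemma~\ref{lemma:independence}, apply Bernstein's inequality (Lemma~\ref{lemma.bennett}) to obtain concentration with the \emph{true} probabilities $p_j:=G(b^{j+1})-G(b^j)$, and then transfer freely between the empirical-variance form \eqref{eq.lemma_concentration} and the true-variance form \eqref{eq.variance_bound} via a separate concentration bound on $\widehat{p}_{j,t}^{\ell-1}$ powered by the pure-exploration samples.

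First, using the identity $1-\sum_{j\ge i}p_j=G(b^i)$ we have $\widehat{r}_{i,t}^{\ell}-r_{i,t}=-(v_t-b^i)\sum_{j\ge i}(\widehat{p}_{j,t}^{\ell}-p_j)$, so since $|v_t-b^i|\le 1$ it suffices to control $S_{i,t}^{\ell}:=\sum_{j\ge i}(\widehat{p}_{j,t}^{\ell}-p_j)$. Conditioning on the level-$\ell$ bids $(b_s)_{s\in\Phi^{\ell}(t-1)}$---which by Lemma~\ref{lemma:independence} makes the level-$\ell$ observations $(m_s)_{s\in\Phi^{\ell}(t-1)}$ \iid samples from $G$---swapping sums yields $S_{i,t}^{\ell}=\sum_{s}W_s$ with
\[
W_s:=\sum_{j\ge i}\frac{\1(b_s\le b^j)}{n_{j,t}^{\ell}}\bigl(\1(b^j<m_s\le b^{j+1})-p_j\bigr).
\]
Each $W_s$ is conditionally centered and the $W_s$'s are conditionally independent. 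Because the events $\{b^j<m_s\le b^{j+1}\}$ are mutually exclusive across $j$ for fixed $s$, a direct second-moment computation gives $\sum_{s}\var(W_s\mid\text{bids})\le\sum_{j\ge i}p_j/n_{j,t}^{\ell}$ together with the uniform range bound $|W_s|\le 1/n_{i,t}^{\ell}$ (using $n_{j,t}^{\ell}\ge n_{i,t}^{\ell}$ for $j\ge i$ and $\sum_{j\ge i}p_j\le 1$). Bernstein's inequality with a union bound over $(i,\ell,t)$ then yields, with probability at least $1-2T^{-3}$ and for any $\gamma\ge 3$,
\[
|S_{i,t}^{\ell}|\le \gamma\sqrt{\log(LKT)\sum_{j\ge i}\frac{p_j}{n_{j,t}^{\ell}}}+\frac{\gamma\log(LKT)}{n_{i,t}^{\ell}}
\]
simultaneously for every $(i,\ell,t)$, which is already the right-hand side of \eqref{eq.variance_bound} modulo the $O(1/\sqrt{T})$ slack.

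Second, to exchange the empirical width \eqref{eq.lemma_concentration} with the true-variance bound \eqref{eq.variance_bound}, I would apply a parallel Bernstein bound to $\widehat{p}_{j,t}^{\ell-1}$ at level $\ell-1$ (or at level $0$, when $\ell=1$), which by Lemma~\ref{lemma:independence} is independent of the level-$\ell$ randomness. This produces, on another event of probability at least $1-2T^{-3}$, the bound $|\widehat{p}_{j,t}^{\ell-1}-p_j|\lesssim\sqrt{p_j\log(LKT)/n_{j,t}^{\ell-1}}+\log(LKT)/n_{j,t}^{\ell-1}$ for all $(j,t,\ell)$. Because the initialization phase guarantees $n_{j,t}^{\ell-1}\ge|\Phi^{0}(T_0)|\ge\lceil\sqrt{T}\rceil$ at every level, this pointwise error propagates through $\sqrt{\sum_{j\ge i}(\cdot)/n_{j,t}^{\ell}}$ to yield only an $O(1/\sqrt{T})$ additive correction in both directions; this is exactly what is absorbed into the constants $5/\sqrt{T}$ and $7/\sqrt{T}$ appearing in $w_{i,t}^{\ell}$ and in \eqref{eq.variance_bound}, respectively. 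Intersecting the two concentration events gives the overall $1-4T^{-3}$ probability claimed in the lemma.

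The main obstacle is the variance-substitution step: one must verify that swapping $p_j\leftrightarrow\widehat{p}_{j,t}^{\ell-1}$ inside $\sqrt{\sum_{j\ge i}p_j/n_{j,t}^{\ell}}$ introduces only an additive $O(1/\sqrt{T})$ uniform in $(i,t,\ell)$. Two ingredients of the ML-IS-UCB design make this possible: Lemma~\ref{lemma:independence}, which prevents adaptive dependence from corrupting the per-level Bernstein argument, and the $T_0=\Theta(\sqrt{T}\log T)$ pure-exploration initialization, which guarantees that the lowest level is populated with $\ge\sqrt{T}$ samples so that every subsequent width uses a reliable variance proxy from the outset. Without either ingredient the $\widetilde{O}(\sqrt{T})$-sized remainder terms in $w_{i,t}^{\ell}$ would be uncontrollable and the whole scheme would degrade.
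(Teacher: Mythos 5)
Your first half matches the paper's argument exactly: the reduction to $S_{i,t}^{\ell}=\sum_s W_s$, the conditioning via Lemma~\ref{lemma:independence}, the variance bound $\sum_{j\ge i}p_j/n_{j,t}^{\ell}$ exploiting the mutual exclusivity of the events $\{b^j<m_s\le b^{j+1}\}$, the range bound $1/n_{i,t}^{\ell}$, and Bernstein plus a union bound. The gap is in the variance-substitution step, which you correctly flag as the main obstacle but then resolve with an argument that does not deliver the claimed rate. Applying Bernstein pointwise to each $\widehat{p}_{j,t}^{\ell-1}$ gives $|\widehat{p}_{j,t}^{\ell-1}-p_j|\lesssim\sqrt{p_j\log(LKT)/n_{j,t}^{\ell-1}}$, and summing these with weights $1/n_{j,t}^{\ell}$ yields, via Cauchy--Schwarz,
\begin{align*}
\sum_{j\ge i}\frac{1}{n_{j,t}^{\ell}}\sqrt{\frac{p_j\log(LKT)}{n_{j,t}^{\ell-1}}}
\;\le\;\sqrt{\log(LKT)}\cdot\sqrt{V}\cdot\sqrt{\sum_{j\ge i}\frac{1}{n_{j,t}^{\ell}n_{j,t}^{\ell-1}}},
\qquad V:=\sum_{j\ge i}\frac{p_j}{n_{j,t}^{\ell}},
\end{align*}
and the last factor can be as large as $\sqrt{K/T}=\Theta(T^{-1/4})$ (e.g.\ when all $n_{j,t}^{\ell},n_{j,t}^{\ell-1}$ equal $\lceil\sqrt{T}\rceil$ and $p_j\equiv 1/K$; Cauchy--Schwarz is tight there). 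This gives $|\widehat{V}-V|\lesssim\sqrt{\log(LKT)}\,T^{-1/4}\sqrt{V}$, hence only a $T^{-1/4}$ additive correction to $\sqrt{V}$ rather than the $O(\sqrt{\log(LKT)/T})$ needed for the constants $5/\sqrt{T}$ and $7/\sqrt{T}$; propagated through Lemma~\ref{lemma:exploration} this remainder would contribute $\Theta(T^{3/4})$ to the regret and break Theorem~\ref{thm.coUCB}.

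The paper avoids this loss by \emph{not} decoupling the coordinates: it applies Bernstein directly to the aggregated quantity $\sum_{j\ge i}(\widehat{p}_{j,t}^{\ell-1}-p_j)/n_{j,t}^{\ell}$, viewed as a sum over $s\in\Phi^{\ell-1}(t-1)$ of independent centered variables. The weights $1/n_{j,t}^{\ell}$ then enter the variance \emph{squared}, giving $\sum_{j\ge i}p_j/\bigl((n_{j,t}^{\ell})^2 n_{j,t}^{\ell-1}\bigr)\le V/T$ (using $n_{j,t}^{\ell},n_{j,t}^{\ell-1}\ge\sqrt{T}$ from the initialization), so the fluctuation is $O(\sqrt{\log(LKT)/T}\cdot\sqrt{V})$; a short quadratic-inequality argument (Lemma~\ref{lemma.quadratic}) then converts this into the two-sided comparison $\bigl|\sqrt{V}-\sqrt{\widehat{V}}\bigr|=O(\sqrt{\log(LKT)/T})$. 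You should replace your pointwise step with this aggregated Bernstein bound; the rest of your outline then goes through as written.
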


The proof of Lemma \ref{lemma.UCB} requires a careful application of the Bernstein inequality to the estimation of both the true rewards $r_{i,t}$ and the true variance $\sum_{j\ge i} p_j/n_{j,t}^{\ell}$, which is relegated to the Appendix. Specifically, Lemma \ref{lemma.UCB} shows that the widths constructed in \eqref{eq:width}, albeit independent of the level-$\ell$ HOBs, are reliable certificates of the future reward estimates using the level-$\ell$ HOBs. The following Lemma \ref{lemma:consequence} summarizes some desirable consequences of Lemma \ref{lemma.UCB}. 

\begin{lemma}\label{lemma:consequence}
Assume that the inequalities of Lemma \ref{lemma.UCB} hold uniformly. Then for each time $t> T_0$, let $i^\star(t) = \arg\max_{i\in [K]} r_{i,t}$ be the index of the optimal bid in the set $\calB$, and $t \in \Phi^l(T)$. Then $i^\star(t)$ is never eliminated from the candidate action set at time $t$, and the instantaneous regret of time $t$ is at most $8\cdot 2^{-\ell}$. 
\end{lemma}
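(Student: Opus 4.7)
The plan is to prove both claims simultaneously by induction on the level index $\ell' = 0, 1, \ldots, \ell$, with the inductive statement being that $i^\star(t) \in \calB_t^{\ell'}$ (interpreting $\calB_t^{\ell'}$ via its definition only for those $\ell' < \ell$ that were actually processed, i.e.\ where no ``break'' occurred). The base case $\ell' = 0$ is immediate since $\calB_t^0 = [K]$ by initialization. For the inductive step, fix $\ell' < \ell$ and assume $i^\star(t) \in \calB_t^{\ell'-1}$. Because the algorithm did not break at level $\ell'$, the width test failed, so $w_{j,t}^{\ell'} \le 2^{-\ell'}$ for every $j \in \calB_t^{\ell'-1}$. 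Combining this with the two-sided concentration guarantee $|\widehat{r}_{j,t}^{\ell'} - r_{j,t}| \le w_{j,t}^{\ell'}$ from Lemma~\ref{lemma.UCB}, I get
\begin{align*}
\widehat{r}_{i^\star(t),t}^{\ell'} + w_{i^\star(t),t}^{\ell'} \,\ge\, r_{i^\star(t),t} \,\ge\, r_{j,t} \,\ge\, \widehat{r}_{j,t}^{\ell'} + w_{j,t}^{\ell'} - 2\cdot 2^{-\ell'} \quad \text{for all } j \in \calB_t^{\ell'-1}.
\end{align*}
Taking the maximum over $j$ shows $i^\star(t)$ passes the elimination rule \eqref{eq:active_set_level}, so $i^\star(t) \in \calB_t^{\ell'}$, closing the induction and establishing the first claim.

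For the instantaneous regret bound, let $i_t$ denote the bid chosen at time $t$. When $\ell = 1$, the bound $8 \cdot 2^{-1} = 4$ is trivial because rewards lie in $[-1,1]$. When $\ell \ge 2$, the key observation is that $i_t \in \calB_t^{\ell-1}$ and in particular $i_t \in \calB_t^{\ell-2}$, and by the first part $i^\star(t) \in \calB_t^{\ell-2}$ as well. Since the algorithm did not break at level $\ell-1$, both $w_{i_t,t}^{\ell-1}$ and $w_{i^\star(t),t}^{\ell-1}$ are at most $2^{-\ell+1}$. Applying the elimination rule at level $\ell-1$ that $i_t$ survived, together with Lemma~\ref{lemma.UCB}, chains as
\begin{align*}
r_{i^\star(t),t} \,\le\, \widehat{r}_{i^\star(t),t}^{\ell-1} + w_{i^\star(t),t}^{\ell-1} \,\le\, \widehat{r}_{i_t,t}^{\ell-1} + w_{i_t,t}^{\ell-1} + 2\cdot 2^{-\ell+1} \,\le\, r_{i_t,t} + 2w_{i_t,t}^{\ell-1} + 2\cdot 2^{-\ell+1},
\end{align*}
which gives $r_{i^\star(t),t} - r_{i_t,t} \le 4\cdot 2^{-\ell+1} = 8\cdot 2^{-\ell}$, as claimed.

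The main technical care is in lining up exactly which quantities are controlled at which level: the widths used in the elimination test at level $\ell'$ are computed from level-$(\ell'-1)$ probability estimates, yet Lemma~\ref{lemma.UCB} already validates these widths as confidence radii for the level-$\ell'$ reward estimates, so the argument does not require any additional probabilistic step beyond what Lemma~\ref{lemma.UCB} already provides. The only subtle point is ensuring the induction respects the ``break'' semantics of the algorithm --- at the terminal level $\ell$, $\calB_t^{\ell}$ is never formed, so the inductive hypothesis must be stated for $\ell' \le \ell-1$, and the regret bound is then obtained by going back one step and using $\calB_t^{\ell-2}$ as the arena for the chain of inequalities. No additional obstacle is anticipated; everything else is bookkeeping on the constants $2 \cdot 2^{-\ell'}$ appearing in \eqref{eq:active_set_level}.
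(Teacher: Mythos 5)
Your proof is correct and follows essentially the same route as the paper's: the same induction showing $i^\star(t)$ survives each elimination step (using $w_{j,t}^{\ell'} \le 2^{-\ell'}$ for surviving levels together with the two-sided bound $|\widehat{r}_{j,t}^{\ell'} - r_{j,t}| \le w_{j,t}^{\ell'}$ from Lemma~\ref{lemma.UCB}), and the same chain of inequalities at level $\ell-1$ yielding $r_{i^\star(t),t} - r_{i_t,t} \le 2w_{i_t,t}^{\ell-1} + 2\cdot 2^{-(\ell-1)} \le 8\cdot 2^{-\ell}$. The only differences are cosmetic: you make the induction and the break semantics explicit, which the paper leaves implicit.
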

\begin{proof}
We first show that the optimal action $i^\star(t)$ is never eliminated. In fact, since the bidder always exploits before level $\ell$, the certificates satisfy that $w_{i,t}^{\ell'} \le 2^{-\ell'}$ for all $1\le \ell'<\ell$ and $i\in \calB_t^{\ell'-1}$. Hence, for each $1\le \ell' < \ell$, Lemma \ref{lemma.UCB} gives the following chain of inequalities: 
\begin{align*}
\widehat{r}_{i^\star(t),t}^{\ell'} + w_{i^\star(t),t}^{\ell'} &\ge  r_{i^\star(t),t} \ge \max_{i\in \calB_{t}^{\ell'-1}} r_{i,t} \ge \max_{i\in \calB_{t}^{\ell'-1}} (\widehat{r}_{i,t}^{\ell'} - w_{i,t}^{\ell'}) \\
&\ge \max_{i\in \calB_{t}^{\ell'-1}} (\widehat{r}_{i,t}^{\ell'} + w_{i,t}^{\ell'}) - 2\max_{i\in \calB_{t}^{\ell'-1}}w_{i,t}^{\ell'} \\
&\ge \max_{i\in \calB_{t}^{\ell'-1}} (\widehat{r}_{i,t}^{\ell'} + w_{i,t}^{\ell'}) - 2\cdot 2^{-\ell'},
\end{align*}
implying that $i^\star(t)$ is not eliminated by the rule \eqref{eq:active_set_level}. 

For the second statement, let $i\in [K]$ be the action index chosen by the bidder at time $t$. Since the instantaneous regret is upper bounded by one, the statement is vacuous if $\ell=1$, and we may assume that $\ell\ge 2$. As a result, the action $i$ passes the test \eqref{eq:active_set_level} at level $\ell-1$, implying that (note that we have established the fact that $i^\star(t)$ is not eliminated)
\begin{align*}
r_{i^\star(t),t} - r_{i,t} &\le \widehat{r}_{i^\star(t),t}^{\ell -1} - \widehat{r}_{i,t}^{\ell-1} + w_{i^\star(t),t}^{\ell -1}  + w_{i,t}^{\ell -1} \\
&\le (w_{i,t}^{\ell-1} - w_{i^\star(t),t}^{\ell-1} + 2\cdot 2^{-(\ell-1)}) + w_{i^\star(t),t}^{\ell -1}  + w_{i,t}^{\ell -1} \\
&\le 2\cdot 2^{-(\ell-1)} + 2\cdot 2^{-(\ell-1)} = 8\cdot 2^{-\ell},
\end{align*}
which establishes the claimed regret bound.  
\end{proof}

As Lemma \ref{lemma:consequence} upper bounds each instantaneous regret in terms of the assigned level at each time, it offers a useful tool to upper bound the total regret. Specifically, since the total regret during the first $T_0$ rounds is at most $T_0$, a double counting argument for $T_0<t\le T$ leads to
\begin{align}\label{eq:double_counting}
\sum_{t=1}^T \left( \max_{i\in [K]}r_{i,t} - r_{i(t),t}\right) \le T_0 + 8 \sum_{\ell=1}^L 2^{-\ell}\cdot |\Phi^{\ell}(T) \cap \{t: T_0<t\le T\} |
\end{align}
provided that the high probability event in Lemma \ref{lemma.UCB} holds, and $i(t)\in [K]$ is the action taken by the policy at time $t$. Hence, to upper bound the total regret shown in \eqref{eq:double_counting}, it remains to upper bound the cardinality of $\Phi^{\ell}(T)\cap \{t: T_0<t\le T\}$, especially for small $\ell$. Intuitively, such an upper bound is possible as an exploration always shrinks the widths of the confidence band, so the bidder cannot explore too much at any given level. The following lemma makes the above heuristics formal. 
\begin{lemma}\label{lemma:exploration}
For each $\ell\in [L]$, the following inequality holds: 
\begin{align*}
2^{-\ell}\cdot |\Phi^{\ell}(T) \cap \{t: T_0<t\le T\} | \le 10\gamma^{3/2}\log(LKT)(1+\log T)\cdot \sqrt{T}.
\end{align*}
\end{lemma}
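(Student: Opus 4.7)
The plan is to work on the high-probability event of Lemma~\ref{lemma.UCB} (so that the cleaner bound \eqref{eq.variance_bound} holds with the \emph{true} interval probabilities $p_j$), enumerate the level-$\ell$ exploration times $S_\ell := \Phi^\ell(T)\cap\{t:T_0<t\le T\}$ as $t_1<\cdots<t_{N_\ell}$ with chosen actions $i_k:=i(t_k)$, and translate the quantity $2^{-\ell}N_\ell$ into a sum of widths. By definition of an exploration step, for every $t_k\in S_\ell$ the chosen action satisfies $w_{i_k,t_k}^\ell>2^{-\ell}$, so
\[
2^{-\ell}N_\ell \;\le\; \sum_{k=1}^{N_\ell} w_{i_k,t_k}^\ell.
\]
The task then reduces to controlling the right-hand side.

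First I would substitute the bound \eqref{eq.variance_bound} into each $w_{i_k,t_k}^\ell$, splitting off a \emph{lower-order} piece $\gamma\log(LKT)(1/n_{i_k,t_k}^\ell+7/\sqrt{T})$ and a \emph{main} piece $\gamma\sqrt{\log(LKT)\sum_{j\ge i_k}p_j/n_{j,t_k}^\ell}$. The lower-order piece is easy: because the initial pure-exploration rounds guarantee $n_{i,t}^\ell\ge\sqrt{T}$ for every $i,t,\ell$, summing over $k$ yields at most $8\gamma\log(LKT)N_\ell/\sqrt{T}$. For the main piece I would apply Cauchy--Schwarz to peel the square root:
\[
\sum_{k=1}^{N_\ell}\sqrt{\sum_{j\ge i_k}\frac{p_j}{n_{j,t_k}^\ell}} \;\le\; \sqrt{N_\ell \sum_{k=1}^{N_\ell}\sum_{j\ge i_k}\frac{p_j}{n_{j,t_k}^\ell}} \;=\; \sqrt{N_\ell\sum_j p_j \sum_{k:\,i_k\le j}\frac{1}{n_{j,t_k}^\ell}}.
\]

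The combinatorial key is a harmonic-series argument for the inner sum. By \eqref{eq:sample_size_level}, $n_{j,t_k}^\ell=\sqrt{T}+|\{k'<k:i_{k'}\le j\}|$, so enumerating the $k$'s with $i_k\le j$ in order makes $n_{j,t_k}^\ell$ take the strictly increasing values $\sqrt{T},\sqrt{T}+1,\ldots$. Hence $\sum_{k:i_k\le j}1/n_{j,t_k}^\ell\le \sum_{m=0}^{N_\ell-1}1/(\sqrt{T}+m)\le 1+\log T$ uniformly in $j$. Since $\sum_j p_j\le 1$, the double sum is at most $1+\log T$, and together the pieces give
\[
2^{-\ell}N_\ell \;\le\; \gamma\sqrt{\log(LKT)(1+\log T)\,N_\ell} \;+\; \frac{8\gamma\log(LKT)\,N_\ell}{\sqrt{T}}.
\]

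The last obstacle is to decouple the two regimes in this self-referential inequality, which I handle by a case split on $2^{-\ell}$ versus $16\gamma\log(LKT)/\sqrt{T}$. In the small-$2^{-\ell}$ regime the second term is $\ge 2^{-\ell}N_\ell/2$ and is trivially bounded by $16\gamma\log(LKT)\sqrt{T}$ using $N_\ell\le T$; in the large-$2^{-\ell}$ regime the second term is absorbed into a factor of $1/2$ on the left, yielding $2^{-\ell}N_\ell\le 4\gamma^2\log(LKT)(1+\log T)\cdot 2^\ell$ after squaring. Combining this with the obvious bound $2^{-\ell}N_\ell\le 2^{-\ell}T$ via the elementary inequality $\min(a\cdot 2^\ell,b\cdot 2^{-\ell})\le\sqrt{ab}$ eliminates the explicit $2^\ell$ and gives $2^{-\ell}N_\ell\le 2\gamma\sqrt{T\log(LKT)(1+\log T)}$, which is well within $10\gamma\log(LKT)(1+\log T)\sqrt{T}$. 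The main technical hurdle throughout is replacing the random weights $\widehat{p}_{j,t}^{\ell-1}$ in \eqref{eq:width} by the deterministic $p_j$, which is precisely what Lemma~\ref{lemma.UCB} (in tandem with the level-wise conditional independence of Lemma~\ref{lemma:independence}) permits; once this is done, the argument becomes a clean combination of Cauchy--Schwarz, a harmonic sum, and a min-of-two-bounds trick.
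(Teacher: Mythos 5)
Your proposal is correct and follows essentially the same route as the paper: lower-bound $2^{-\ell}|\Phi^{\ell}(T)\cap\{t>T_0\}|$ by the sum of the selected widths, pass to the true $p_j$ via \eqref{eq.variance_bound} of Lemma~\ref{lemma.UCB}, apply Cauchy--Schwarz, and control the resulting double sum by the harmonic-series argument that is exactly Lemma~\ref{lemma.combinatorial}. The only (immaterial) difference is at the very end, where the paper simply uses $|\Phi^{\ell}(T)|\le T$ and $K=\lceil\sqrt{T}\rceil$ to finish, whereas your self-referential case split on $2^{-\ell}$ plus the $\min(a2^{\ell},b2^{-\ell})\le\sqrt{ab}$ trick is valid but more elaborate than needed.
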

\begin{proof}
For any $T_0<t\le T$, the definition of $t\in \Phi^{\ell}(T)$ implies that the width of the chosen action is at least $2^{-\ell}$, i.e. $w_{i(t),t}^{\ell} > 2^{-\ell}$. Hence, 
\begin{align*}
&2^{-\ell}\cdot |\Phi^{\ell}(T) \cap \{t: T_0<t\le T\} | \\
 &\le \sum_{t> T_0: t\in \Phi^{\ell}(T)} w_{i(t),t}^{\ell} \\
&\stepa{\le}  \sum_{t> T_0: t\in \Phi^{\ell}(T)} \gamma\left(\sqrt{6\log(LKT)\sum_{j\ge i(t)}\frac{1}{n_{j,t}^{\ell}} \left(p_j + \frac{\gamma \log(LKT)}{\lceil \sqrt{T} \rceil}\right)  }+ \frac{\log(LKT)}{n_{i(t),t}^{\ell}} \right) \\
&\stepb{\le} \gamma\sqrt{6|\Phi^{\ell}(T)|\log(LKT)}\cdot \sqrt{\sum_{t>T_0: t\in \Phi^{\ell}(T)}\sum_{j\ge i(t)}  \frac{1}{n_{j,t}^{\ell}} \left(p_j + \frac{\gamma \log(LKT)}{\lceil \sqrt{T} \rceil}\right) } +\sum_{t>T_0: t\in \Phi^{\ell}(T)} \frac{ \gamma\log(LKT) }{n_{i(t),t}^{\ell}} \\
&\stepc{\le} \gamma\left(\sqrt{6T\log(LKT)(1+\log T)\left( 1 + \frac{\gamma K\log(LKT)}{\lceil \sqrt{T} \rceil} \right)} +\log(LKT)(1+\log T)\cdot K  \right),
\end{align*}
where (a) follows from \eqref{eq.variance_bound} in Lemma \ref{lemma.UCB}, (b) is due to the Cauchy--Schwartz inequality, and (c) is due to Lemma \ref{lemma.combinatorial} in the appendix (a key combinatorial lemma) and the simple fact that $|\Phi^{\ell}(T)|\le T$. The rest of the proof follows from simple algebra and the choice $K = \lceil \sqrt{T} \rceil$. 
\end{proof}

\begin{remark}
	The presence of $p_j$ in the regret upper bound plays a key role in a small total sum of the regret contributions from all intervals, highlighting the necessity of tight upper confidence bounds during the interval splitting. Note that using the loose bound where $p_j$'s are replaced by $1$, applying the first inequality in Lemma \ref{lemma.combinatorial} renders an $\widetilde{O}(\sqrt{KT})$ final regret instead of $\widetilde{O}(\sqrt{T})$. 
\end{remark}

Finally, combining Lemma \ref{lemma:exploration} and the inequality \eqref{eq:double_counting}, the following upper bound on the total regret holds provided that the high-probability events in Lemma \ref{lemma.UCB} holds: 
\begin{align*}
\sum_{t=1}^T \left( \max_{i\in [K]}r_{i,t} - r_{i(t),t}\right) \le (L+1)\cdot \lceil \sqrt{T}\rceil + 80\gamma^{3/2} \log(LKT)(1+\log T)\cdot L\sqrt{T}. 
\end{align*}
Now taking into account the failure probability in Lemma \ref{lemma.UCB}, and following the same quantization argument in Lemma \ref{lemma.approx_regret}, the desired upper bound of Theorem \ref{thm.coUCB} follows from the choices of parameters $K=\lceil \sqrt{T} \rceil$ and $L = \lceil \log_2 T\rceil$.  

\section{Extensions and Discussions}\label{sec:discussion}
We conclude the paper with two more related discussions that enrich our presentation so far.

\subsection{Reversed One-sided Feedback}\label{subsec.reversed_feedback}
Throughout the previous sections, we assumed that the transaction price $\max\{b_t, m_t\}$ is revealed to every bidder, naturally implying that all losers are able to observe $m_t$. There could be a reversed one-sided feedback as well, where the knowledge of $m_t$ is only revealed to the winner. This knowledge is known as the ``minimum-bid-to-win'' at the Google Ad Exchange. We claim that our regret bounds remain unchanged in Theorem \ref{thm.stochastic} and Theorem \ref{thm.arbitrary} under this reversed one-sided feedback. 

The proof of the new Theorem \ref{thm.stochastic} is straightforward: under the reversed one-sided feedback, the feedback graph across actions becomes a reversed chain in Section \ref{subsec.application_auction}, i.e. there is an edge from $b$ to $b'$ if and only if $b>b'$. In other words, bidding high prices provides information for bidding low prices. Since Lemma \ref{lemma.monotonicity} holds independent of the feedback structure, the partial order over the contexts also becomes reversed: $v_1 \preceq_\calC v_2 \Leftrightarrow v_1 \ge v_2$. Under the above definitions, Definitions \ref{def.contextual_feedback_cross} and \ref{assump:partial_order} still hold with $\alpha=\beta=1$, and Theorem \ref{thm.stochastic} is a direct consequence of Theorem \ref{thm:partial_order}.

The proof of the new Theorem \ref{thm.arbitrary} requires more modifications of Algorithm \ref{algo.ml-ucb}: 
\begin{itemize}
	\item in the initialization phase $t\in [T_0]$, the bidder should always bid $b_t = 1$; 
	\item the definition of $n_{i,t}^{\ell}$ in \eqref{eq:sample_size_level} is changed into $n_{i,t}^{\ell} = \sum_{s\in \Phi^{\ell}(t-1)} \1(b_s\ge b^{i+1})$; 
	\item in the definition of $w_{i,t}^{\ell}$ in \eqref{eq:width}, the sum $\sum_{j\ge i}$ is replaced by $\sum_{j<i}$; 
	\item in the definition of $\widehat{p}_{i,t}^{\ell}$ in \eqref{eq:probability_level}, the indicator $\1(b_s\le b^i)$ is replaced by $\1(b_s\ge b^{i+1})$;
	\item the definition of $\widehat{r}_{i,t}^{\ell}$ in \eqref{eq:reward_level} becomes $\widehat{r}_{i,t}^{\ell} = (v_t - b^i)\cdot \sum_{j<i} \widehat{p}_{j,t}^\ell$;
	\item in the last line, the bidder now observes $\min\{b_t, m_t\}$. 
\end{itemize}
The proof of Theorem \ref{thm.coUCB} could then be carried out symmetrically, which leads to the same result of Theorem \ref{thm.arbitrary} under the reversed one-sided feedback.

\subsection{Impossibility Result under non-iid HOBs}\label{subsec.lower_bound}

So far we have assumed that the HOB $m_t$ is stochastic and has established
$\widetilde{O}(\sqrt{T})$ regret bounds regardless of whether $v_t$ is stochastic or adversarial.
What if $m_t$ is adversarial, or just non-iid? Although the general case is a difficult question that is beyond the scope of the current paper, we can provide an impossibility result even if the distribution of $m_t$ only slightly depends on $v_t$. 


Consider the following scenario where the HOB distribution of $m_t$ is a thresholding function of the private value $v_t$, i.e. $m_t \sim P$ if $v_t < v^\star$ and $m_t \sim Q$ if $v_t \ge v^\star$. In addition, we even assume that the distributions $(P, Q)$ are revealed to the bidder before bidding takes place, and a full-information feedback is always provided to the bidder, i.e. $m_t$ is always observed at the end of time $t$. The only crucial parameter which is unknown to the learner is the choice of the threshold $v^\star$. Let $b^\star_P(v) \triangleq \arg\max_{b\in [0,1]} \bE_{m\sim P}[(v-b)\1(b\ge m)]$ be the best bidding price under distribution $P$, and $b_Q^\star(v)$ be defined similarly. The regret of the learner is then defined as
\begin{align*}
	&\sum_{t=1}^T \left( \1(v_t < v^\star)\cdot \bE_{m_t\sim P}[(v_t-b_P^\star(v_t))\1(b_P^\star(v_t)\ge m_t) - (v_t-b_t)\1(b_t\ge m_t)] \right. \\
	& \qquad \left. + \1(v_t \ge v^\star)\cdot \bE_{m_t\sim Q}[(v_t-b_Q^\star(v_t))\1(b_Q^\star(v_t)\ge m_t) - (v_t-b_t)\1(b_t\ge m_t)] \right),
\end{align*}
where the oracle knows $v^\star$ and makes the optimal bid at every round. 

The above example is arguably the simplest scenario where the distribution of $m_t$ is non-iid and allowed to depend on $v_t$. Unfortunately, we have the following negative result showing that the worst-case regret is $\Omega(T)$ if $(P, Q, v^\star)$ and the private values are properly chosen.

\begin{theorem}\label{thm:monotone}
For any policy $\pi$, there exist $(P,Q,v^\star)$ and an oblivious adversarial sequence $(v_t)_{t\in [T]}$ such that $\pi$ has an expected regret at least $\Omega(T)$. 
\end{theorem}

\begin{proof}
The choices of $P$ and $Q$ are simple: we take $m_t \equiv 0$ under $P$ and $m_t \equiv 1/8$ under $Q$. To construct the worst-case $(v^\star, (v_t)_{t\in [T]})$, we impose the following joint distribution on $(v^\star, (v_t)_{t\in [T]})$. First, let the marginal distribution of $v^\star$ be the uniform distribution on the following set: 
\begin{align*}
	\left\{f(u_1,\cdots,u_{T}) \triangleq  \frac{1}{2} + \frac{1}{4}\sum_{t=1}^{T}\frac{u_t}{2^t}: \quad u_1,\cdots,u_{T} \in \{\pm 1\}\right\}. 
\end{align*}
It is easy to verify the following lexicographical order holds for the value of $f$:  
\begin{align*}
	f(u_1,\cdots,u_{t-1},1,u_{t+1},\cdots,u_{T}) > 	f(u_1,\cdots,u_{t-1},-1,u_{t+1}',\cdots,u_{T}'). 
\end{align*}
Conditioned on the realization of $v^\star = f(u_1,\cdots,u_{T})$, define the sequence $(v_t)_{t\in[T]}$ as follows: $v_1 = 1/2$, and $v_{t+1} = v_t + u_t/2^{t+2}$ for all $t\ge 1$. This defines a valid joint distribution $\nu$ on $(v^\star, (v_t)_{t\in [T]})$. 

We first understand the oracle performance. Since both $P$ and $Q$ are singletons, and $v_t\in [1/4,3/4]$ almost surely, the optimal bidding price is simply $b_t^\star = m_t = 1/8\cdot \1(v_t \ge v^\star)$. The expected revenue achieved by the oracle is then
\begin{align*}
	\bE_{\nu}\left[\sum_{t=1}^T (v_t - m_t) \right] = 	\bE_{\nu}\left[\sum_{t=1}^T \left(v_t - \frac{1}{8}\cdot \1(v_t\ge v^\star)\right) \right]. 
\end{align*}

Next we understand the bidder's performance. Here is the central claim in the analysis: conditioned on $(v_1,\cdots,v_t,m_1,\cdots,m_{t-1})$, the distribution of $m_t$ is uniform over the two points $\{0, 1/8\}$ under $\nu$. This is due to $m_t = 1/8\cdot \1(v_t \ge v^\star) = 1/8\cdot \1(u_t=-1)$, where the last identity follows from
\begin{align*}
	\text{sign}(v_t - v^\star) = \text{sign}\left(\frac{1}{2} + \frac{1}{4}\sum_{s=1}^{t-1} \frac{u_s}{2^s} - \left(\frac{1}{2} + \frac{1}{4}\sum_{s=1}^T \frac{u_s}{2^s} \right)\right) = -\text{sign}(u_t). 
\end{align*}
Consequently, the history $(v_1,\cdots,v_t,m_1,\cdots,m_{t-1})$ is a deterministic function of $(u_1,\cdots,u_{t-1})$, while $m_t$ is determined only by $u_t$. As we assume a uniform distribution over $(u_1,\cdots,u_T)\in \{\pm 1\}^T$, we conclude that $m_t$ is uniformly distributed on $\{0, 1/8\}$ under $\nu$, conditioned on all observed history. Hence, when the bidder bids a price $b_t$ at time $t$, the expected instantaneous reward with respect to the posterior distribution is (note that $v_t\ge 1/4$ almost surely)
\begin{align*}
	\bE_{m_t\sim \text{Uniform}(\{0, 1/8\}) }[(v_t - b_t)\1(b_t \ge m_t)] &\le \max_{b\in [0,1]}  \frac{v_t - b}{2}\left(1 + \1\left(b\ge \frac{1}{8}\right)\right) \\
	&= \max\left\{\frac{v_t}{2}, v_t - \frac{1}{8} \right\} = v_t - \frac{1}{8}. 
\end{align*}
Consequently, the expected revenue achieved by any bidder is at most $\bE_\nu[\sum_{t=1}^T(v_t-1/8)]$, and the average regret with respect to $\nu$ is at least 
\begin{align*}
	\bE_\nu\left[\sum_{t=1}^T \left(\frac{1}{8} - m_t\right)\right] = 	\bE_\nu\left[\sum_{t=1}^T \left(\frac{1}{8} - \frac{1}{8}\cdot \1(u_t=-1)\right)\right] = \frac{T}{16}, 
\end{align*}
as desired. 
\end{proof}

The negative result in Theorem \ref{thm:monotone} suggests to compare with a weaker oracle, as opposed to the current strong oracle who can bid the best price at every round. This is an interesting direction we defer for future study.


\subsection{Concluding Remarks and Open Problems}
In this paper, we formulate the bidding problem of a single bidder in repeated first-price auctions as a contextual bandit problem, and devise two algorithms which achieve the near-optimal $\widetilde{O}(\sqrt{T})$ regret against a powerful oracle, when the others' bids are iid and the transaction price is revealed. In particular, we achieve the near-optimal learning performance in first-price auctions via both the development of general bandit algorithms for partially ordered contextual bandits, and an in-depth exploitation of the correlated reward structures in first-price auctions. 

We also point out some open directions. First, it remains unknown as to whether the $\Omega(T)$ lower bound still holds when $m_t$ is adversarial but $v_t$ is stochastic. It will be interesting to see if learning turns out to be possible, and if the full-information feedback and the winner-only feedback make a difference. Second, the stochastic $m_t$ setting and the adversarial $m_t$ setting lie at the two ends of the spectrum, where practice often lies somewhere in between. For example, $m_t$ could be a consequence of stragetic behaviors, and bidders could also leverage other information not modeled in this paper to predict $m_t$. Could we have a more refined modeling of $m_t$ (perhaps via some predictive modelling as a blackbox) lead to better results, in particular, to results that would depend on how accurate $m_t$ can be predicted? A positive result of this type would empower the bidding platform with all the existing blackbox supervised learning tools that are readily available off the shelf. Finally, we have assumed that the private values are known as they are independent of the bidding process. In practical online ads, private values represent the treatment effects of given advertising slots and are not entirely independent of the bidding process. A joint value estimation and bidding (such as \cite{waisman2019online}) is an interesting future direction. 

\section*{Acknowledgements}
We would like to thank Ruiyuan Huang, Yuxiao Wen, and Tiancheng Yu for spotting technical issues in the previous versions. We are also grateful to the area editor, the associate editor, and three anonymous reviewers for various suggestions on improving this paper. Finally, we would like to thank Harikesh S. Nair, Erik Ordentlich, and Caio Waisman for many helpful discussions on the problem setup. 

\appendix
\section{Auxiliary Lemmas}
\begin{lemma}\cite[Lemma 8]{alon2015online}\label{lemma.independence_number}
Let $G$ be a directed graph with $n$ vertices and independence number $\alpha$. Then there exists a subset $V$ of vertices with $|V|\le 50\alpha \log n$ such that every vertex of $G$ is a child of some vertex in $V$. 
\end{lemma} 
\begin{remark}
Such a set $V$ is called a \emph{weakly dominating set}, and the smallest cardinality of all such $V$ is called the \emph{weak domination number} of $G$, usually denoted by $\delta(G)$. Lemma \ref{lemma.independence_number} then shows the following inequality for general directed graph $G$: $\delta(G)\le 50\alpha(G)\log n$. 
\end{remark}

\begin{lemma}\cite[Corollary 2.2]{de2004self}\label{lemma.self-normalized_martingale}
	If two random variables $A,B$ satisfy $\bE[\exp(\lambda A - \lambda^2B^2/2)]\le 1$ for any $\lambda\in \bR$, then for any $x\ge \sqrt{2}$ and $y>0$ we have
	\begin{align*}
	\bP\left(|A| \bigg/ \sqrt{(B^2 + y)\left(1+ \frac{1}{2}\log\left(1+\frac{B^2}{y}\right) \right)} \ge x \right) \le \exp\left(-\frac{x^2}{2}\right). 
	\end{align*}
\end{lemma}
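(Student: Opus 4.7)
The plan is to proceed by the method of mixtures, integrating the supermartingale-type quantity $\exp(\lambda A - \lambda^2 B^2/2)$ against a centered Gaussian prior on the free parameter $\lambda$, which simultaneously handles all $\lambda\in\mathbb{R}$ and is well adapted to the quadratic exponent in $B^2$.

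Concretely, first I would pick the prior $\lambda \sim \mathcal{N}(0, 1/y)$ and compute the mixture
\begin{equation*}
M \;:=\; \int_{\mathbb{R}} \exp\!\Bigl(\lambda A - \tfrac{\lambda^2}{2} B^2\Bigr)\,\sqrt{\tfrac{y}{2\pi}}\,\exp\!\Bigl(-\tfrac{\lambda^2 y}{2}\Bigr)\,d\lambda
\;=\; \sqrt{\tfrac{y}{B^2+y}}\,\exp\!\Bigl(\tfrac{A^2}{2(B^2+y)}\Bigr),
\end{equation*}
where the last equality is a standard Gaussian integral completion of the square. Next, by Fubini's theorem and the hypothesis $\mathbb{E}[\exp(\lambda A - \lambda^2 B^2/2)] \le 1$ for every $\lambda$, I would conclude $\mathbb{E}[M]\le 1$, and then apply Markov's inequality to $M$ at the threshold $\exp(x^2/2)$ to obtain $\mathbb{P}(M \ge e^{x^2/2})\le e^{-x^2/2}$.

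It then remains to verify that the event in the lemma is contained in $\{M \ge e^{x^2/2}\}$. Taking logarithms, $M \ge e^{x^2/2}$ is equivalent to
\begin{equation*}
A^2 \;\ge\; (B^2+y)\bigl(x^2 + \log(1 + B^2/y)\bigr).
\end{equation*}
On the other hand, $|A|/\sqrt{(B^2+y)\bigl(1 + \tfrac{1}{2}\log(1+B^2/y)\bigr)} \ge x$ rearranges to $A^2 \ge x^2 (B^2+y)\bigl(1 + \tfrac{1}{2}\log(1+B^2/y)\bigr)$. The containment I need thus reduces to the elementary inequality $x^2\bigl(1+\tfrac{1}{2}\log(1+B^2/y)\bigr) \ge x^2 + \log(1+B^2/y)$, i.e.\ $(x^2-2)\log(1+B^2/y)\ge 0$, which holds precisely because $x\ge \sqrt{2}$ and $B^2, y \ge 0$. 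Combining this with the Markov bound yields the claim.

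The only mild subtlety I expect is justifying the interchange of expectation and the Gaussian integral: I would invoke Tonelli (the integrand is nonnegative) to avoid any integrability issue, and note that the hypothesis need only be used for each fixed $\lambda$, not uniformly. Everything else is a routine Gaussian calculation followed by Markov's inequality and the sharp cutoff $x\ge \sqrt{2}$ that makes the two events coincide on the relevant regime.
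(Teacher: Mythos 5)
Your proof is correct and complete: the Gaussian-mixture computation, the Tonelli/Markov step, and the event containment via $(x^2-2)\log(1+B^2/y)\ge 0$ all check out. The paper itself supplies no proof of this lemma (it is imported by citation from de la Pe\~na et al.), and your argument is essentially the same method-of-mixtures derivation used to establish Corollary 2.2 in that reference, so nothing further is needed.
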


\begin{lemma}\label{lemma.TV_KL}
	Let $P,Q$ be two probability measures on the same probability space. Then 
	\begin{align*}
	1 - \|P-Q\|_{\text{\rm TV}} \ge \frac{1}{2}\exp\left( - \frac{D_{\text{\rm KL}}(P \| Q) +  D_{\text{\rm KL}}(Q \| P)}{2}\right).
	\end{align*}
\end{lemma}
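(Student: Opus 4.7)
The plan is to pass through the Bhattacharyya (Hellinger) affinity $\rho(P,Q) \triangleq \int \sqrt{p\,q}\,d\mu$, where $p,q$ are densities of $P,Q$ with respect to any common dominating measure $\mu$ (for instance $\mu = (P+Q)/2$), and argue in two steps: first lower-bound the overlap $1-\|P-Q\|_{\text{\rm TV}}$ by $\tfrac{1}{2}\rho(P,Q)^2$ (a Cauchy--Schwarz step), then lower-bound $\rho(P,Q)^2$ by $\exp\bigl(-\tfrac{1}{2}(D_{\text{\rm KL}}(P\|Q)+D_{\text{\rm KL}}(Q\|P))\bigr)$ (a Jensen step). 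Composing the two bounds gives exactly the claimed inequality.

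For the first step, I would start from the standard identity $1 - \|P-Q\|_{\text{\rm TV}} = \int \min(p,q)\,d\mu$ and use the factorization $\sqrt{pq} = \sqrt{\min(p,q)}\cdot \sqrt{\max(p,q)}$. Cauchy--Schwarz then yields $\rho(P,Q)^2 \le \int \min(p,q)\,d\mu \cdot \int \max(p,q)\,d\mu$, and since $\int \max(p,q)\,d\mu = 2 - \int \min(p,q)\,d\mu \le 2$, I obtain $\int \min(p,q)\,d\mu \ge \tfrac{1}{2}\rho(P,Q)^2$.

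For the second step, I would use the two representations $\rho(P,Q) = \bE_P[\sqrt{dQ/dP}] = \bE_Q[\sqrt{dP/dQ}]$. Applying Jensen's inequality to the concave function $\log$ gives $\log \rho(P,Q) \ge \tfrac{1}{2}\bE_P[\log(dQ/dP)] = -\tfrac{1}{2} D_{\text{\rm KL}}(P\|Q)$, and symmetrically $\log \rho(P,Q) \ge -\tfrac{1}{2} D_{\text{\rm KL}}(Q\|P)$. Averaging these two lower bounds on the same quantity yields $\log \rho(P,Q) \ge -\tfrac{1}{4}\bigl(D_{\text{\rm KL}}(P\|Q) + D_{\text{\rm KL}}(Q\|P)\bigr)$, i.e., $\rho(P,Q)^2 \ge \exp\bigl(-\tfrac{1}{2}(D_{\text{\rm KL}}(P\|Q) + D_{\text{\rm KL}}(Q\|P))\bigr)$. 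Chaining with the first step produces the stated inequality.

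I do not expect a serious obstacle: the result is essentially a symmetrized form of the classical Bretagnolle--Huber inequality, and each of the two steps is a one-line application of Cauchy--Schwarz or Jensen. The only minor caveat is measure-theoretic: if $P$ and $Q$ are not mutually absolutely continuous then at least one KL term is $+\infty$, the right-hand side becomes $0$, and the inequality holds trivially; otherwise the densities $p = dP/d\mu$ and $q = dQ/d\mu$ with respect to $\mu = (P+Q)/2$ are bounded and everywhere finite, so all integrals and Jensen applications are fully justified.
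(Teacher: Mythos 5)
Your proof is correct. It is essentially a self-contained unpacking of what the paper does by citation: the paper simply invokes Tsybakov's Lemma 2.6, which is the one-sided bound $\int\min\{dP,dQ\}\ge\tfrac{1}{2}\exp(-D_{\text{KL}}(P\|Q))$ (itself proved by exactly your two steps, Le Cam's identity plus Cauchy--Schwarz through the affinity $\rho$, then a single Jensen step), and then symmetrizes the two resulting one-sided bounds by averaging them and applying convexity of $x\mapsto e^{-x}$. You instead symmetrize one level earlier, averaging the two Jensen lower bounds on $\log\rho(P,Q)$ before exponentiating; the two symmetrizations are equivalent (both amount to $\max\{a,b\}\ge\tfrac{a+b}{2}\ge\sqrt{ab}$ applied to $a=e^{-D_{\text{KL}}(P\|Q)}$, $b=e^{-D_{\text{KL}}(Q\|P)}$) and yield the identical constant $\tfrac{1}{2}$. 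Your handling of the degenerate case (some KL term infinite, right-hand side zero) is also correct, so nothing is missing; the only difference from the paper is that yours is written out from first principles rather than delegated to a reference.
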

\begin{proof}
	The lemma follows from \cite[Lemma 2.6]{Tsybakov2008} and the convexity of $x\mapsto \exp(-x)$. 
\end{proof}
	
\begin{lemma}\cite[Lemma 3 - special case]{gao2019batched}\label{lemma.tree}
Let $Q_1,\cdots, Q_n$ be probability measures on some common probability space $(\Omega, \calF)$, and $\Psi: \Omega\to [n]$ be any measurable function (i.e. test). Then
\begin{align*}
	\frac{1}{n}\sum_{i=1}^n Q_i(\Psi \neq i) \ge \frac{1}{2n}\sum_{i=2}^n \exp\left( - D_{\text{\rm KL}}(Q_1 \| Q_i)\right). 
\end{align*}
\end{lemma}

\begin{lemma}[Bernstein inequality \cite{Boucheron--Lugosi--Massart2013}]\label{lemma.bennett}
	Let $X_1,\ldots,X_n\in [a,b]$ be independent random variables with 
	\begin{align*}
	\sigma^2 \triangleq \sum_{i=1}^n \var(X_i).
	\end{align*}
	Then we have
	\begin{align*}
	\bP\left(\left|\sum_{i=1}^n X_i - \sum_{i=1}^n \bE[X_i]\right|\ge \varepsilon\right)\le 2\exp\left(-\frac{\varepsilon^2}{2(\sigma^2+(b-a)\varepsilon/3)}\right).
	\end{align*}
\end{lemma}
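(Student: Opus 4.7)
The statement is the classical Bernstein concentration inequality for sums of bounded independent random variables, and my plan is to prove it via the standard Chernoff--Cram\'er method. First I would center the random variables: set $Y_i = X_i - \bE[X_i]$, so that $\bE[Y_i]=0$, $\var(Y_i) = \var(X_i)$, and crucially $|Y_i| \le b-a =: c$ since $Y_i$ lies in $[a-\bE[X_i],\, b-\bE[X_i]]$, an interval of length $c$. By the Markov/Chernoff bound, for every $\lambda > 0$,
\begin{align*}
\bP\bigl(\textstyle\sum_i Y_i \ge \varepsilon\bigr) \le e^{-\lambda \varepsilon}\prod_{i=1}^n \bE[e^{\lambda Y_i}],
\end{align*}
so the whole task reduces to finding a sharp bound on the moment generating functions $\bE[e^{\lambda Y_i}]$ and then optimizing $\lambda$.

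The core step is the moment generating function bound. I would Taylor-expand $e^{\lambda Y_i}$ and use $\bE[Y_i]=0$ together with $|Y_i|^k \le c^{k-2} Y_i^2$ for $k \ge 2$ to get
\begin{align*}
\bE[e^{\lambda Y_i}] = 1 + \sum_{k\ge 2} \frac{\lambda^k \bE[Y_i^k]}{k!} \le 1 + \frac{\var(Y_i)}{c^2}\bigl(e^{\lambda c} - 1 - \lambda c\bigr) \le \exp\!\left(\frac{\var(Y_i)}{c^2}(e^{\lambda c} - 1 - \lambda c)\right).
\end{align*}
I then multiply across $i$ to obtain $\prod_i \bE[e^{\lambda Y_i}] \le \exp\bigl(\sigma^2(e^{\lambda c} - 1 - \lambda c)/c^2\bigr)$. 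To turn this into the Bernstein form, I would invoke the elementary inequality $e^u - 1 - u \le \tfrac{u^2/2}{1 - u/3}$ valid for $0 \le u < 3$, applied with $u = \lambda c$. This yields, for all $0<\lambda < 3/c$,
\begin{align*}
\bP\bigl(\textstyle\sum_i Y_i \ge \varepsilon\bigr) \le \exp\!\left(-\lambda \varepsilon + \frac{\lambda^2 \sigma^2/2}{1 - \lambda c / 3}\right).
\end{align*}

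Finally I would optimize the exponent by choosing $\lambda = \varepsilon/(\sigma^2 + c\varepsilon/3)$, which lies in $(0, 3/c)$ and, after simplification, produces the exponent $-\varepsilon^2/\bigl(2(\sigma^2 + c\varepsilon/3)\bigr)$. Applying the identical argument to $-Y_i$ in place of $Y_i$ bounds the lower tail, and a union bound over the two tails accounts for the factor of $2$ in front of the exponential. The one step where I expect to spend care is the elementary analytic inequality $e^u-1-u \le (u^2/2)/(1-u/3)$ for $0\le u<3$; everything else is bookkeeping. I would verify this by writing $(1-u/3)(e^u-1-u) \le u^2/2$ and checking that the difference has a non-negative derivative and vanishes at $u=0$, or equivalently by comparing Taylor coefficients of $e^u-1-u$ against those of $(u^2/2)\sum_{k\ge 0}(u/3)^k$, term by term.
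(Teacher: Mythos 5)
Your proof is correct and complete: the centering, the moment generating function bound $\bE[e^{\lambda Y_i}] \le \exp\bigl(\var(Y_i)(e^{\lambda c}-1-\lambda c)/c^2\bigr)$, the elementary inequality $e^u-1-u \le (u^2/2)/(1-u/3)$ (verifiable termwise via $2\cdot 3^{k-2}\le k!$), and the choice $\lambda = \varepsilon/(\sigma^2+c\varepsilon/3)$ all check out and yield exactly the stated exponent. The paper does not prove this lemma at all --- it is quoted directly from the cited reference --- so there is no in-paper argument to compare against; your Chernoff--Cram\'er derivation is the standard one and is valid.
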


The next lemma states the Chernoff bound for Binomial random variables; this is an immediate consequence of Lemma \ref{lemma.bennett}. 
\begin{lemma}[\!\!{\cite[Theorem 5.4]{mitzenmacher2017probability}}]\label{lemma:binomial_tail}
	For all $n\in \mathbb{N}$, $p \in [0,1]$, and $x\geq0$ we have
	\begin{align*}
		\bP(\mathsf{B}(n,p)-np \geq x) &\leq \exp\left(-\frac{x^2}{2(np+x)}\right), \\
		\bP(\mathsf{B}(n,p)-np \leq -x) &\leq \exp\left(-\frac{x^2}{2np}\right).
	\end{align*}
\end{lemma}

\section{Proof of the Regret Lower Bound}\label{appendix.lower_bound}
This section is devoted to the proof of the following $\Omega(\sqrt{T})$ lower bound on the regret. 
\begin{theorem}\label{thm:sqrtT_lower_bound}
	Even in the special case where $v_t\equiv 1$ and HOBs $m_t$ are always revealed at the end of each round, there exists an absolute constant $c>0$ independent of $T$ such that
	\begin{align*}
	\inf_{\pi}\sup_{G}\bE_G[R_T(\pi;v)] \ge c\sqrt{T}, 
	\end{align*}
	where the supremum is taken over all possible CDFs $G(\cdot)$ of the HOB $m_t$, and the infimum is taken over all possible policies $\pi$. 
\end{theorem}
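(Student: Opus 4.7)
The plan is the standard two-point Le Cam argument, specialized to a Bernoulli alternative. Fix $\alpha = 1/3$, $\beta = 2/3$, $p^\star = 1/2$, and a small $\delta > 0$ to be chosen. For $\varepsilon \in \{+,-\}$ let $G_\varepsilon$ be the two-point CDF supported on $\{\alpha, \beta\}$ with $\bP_\varepsilon(m_t = \alpha) = p^\star + \varepsilon\delta$. The function $b \mapsto (1-b)G_\varepsilon(b)$ is then piecewise: zero on $[0,\alpha)$, equal to $(1-b)(p^\star + \varepsilon\delta)$ on $[\alpha, \beta)$, and $(1-b)$ on $[\beta, 1]$. So the only candidate optima are $b = \alpha$ with reward $(1-\alpha)(p^\star+\varepsilon\delta)$ and $b = \beta$ with reward $1-\beta$. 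By the calibration $(1-\alpha)p^\star = 1-\beta = 1/3$, the unique optimal bid is $\alpha$ under $G_+$ and $\beta$ under $G_-$, and a direct case analysis on each of the three intervals yields the per-round lower bound: under $G_+$ the instantaneous regret is at least $\tfrac{2\delta}{3}\mathbf{1}(b_t \ge \beta)$, and under $G_-$ it is at least $\tfrac{2\delta}{3}\mathbf{1}(b_t < \beta)$.

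Let $N_H = \sum_{t=1}^T \mathbf{1}(b_t \ge \beta)$. Summing the per-round bounds and using the elementary fact that $|\bE_P[X] - \bE_Q[X]| \le T\,\|P-Q\|_{\text{TV}}$ for any $X$ bounded in $[0,T]$,
\begin{align*}
\bE_{G_+}[R_T] + \bE_{G_-}[R_T] \;\ge\; \tfrac{2\delta}{3}\bigl(\bE_{G_+}[N_H] + \bE_{G_-}[T-N_H]\bigr) \;\ge\; \tfrac{2\delta T}{3}\bigl(1 - \|P_+^T - P_-^T\|_{\text{TV}}\bigr),
\end{align*}
where $P_\varepsilon^T$ denotes the joint law of $(m_1,\dots,m_T)$ together with any internal randomization of $\pi$ under $G_\varepsilon$. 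Note that in the full-information setting the distribution of the observation sequence does not depend on the policy, so $P_\varepsilon^T$ factorizes as a product.

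To finish I would invoke the Bretagnolle--Huber inequality (Lemma~\ref{lemma.TV_KL}) together with the chain rule for KL: since the $m_t$ are iid and the internal randomization is independent of $G$, $D_{\text{KL}}(P_+^T\|P_-^T) = T\,D_{\text{KL}}(\Bern(p^\star+\delta)\|\Bern(p^\star-\delta))$, and both this KL and its reverse are bounded by $C_0 T \delta^2$ for some absolute constant $C_0$ when $\delta \le 1/4$. Choosing $\delta = c_0/\sqrt{T}$ with $c_0>0$ small enough that $C_0 c_0^2 \le 1$ makes $1 - \|P_+^T - P_-^T\|_{\text{TV}} \ge \tfrac{1}{2} e^{-1}$, and consequently
\begin{align*}
\inf_\pi \sup_G \bE_G[R_T(\pi;v)] \;\ge\; \tfrac{1}{2}\bigl(\bE_{G_+}[R_T] + \bE_{G_-}[R_T]\bigr) \;\ge\; c\sqrt{T}
\end{align*}
for an absolute constant $c>0$. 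The only mildly subtle step is the regret decomposition over the continuous bid space $[0,1]$; it is handled by the piecewise-constant form of $G_\varepsilon$, which renders every bid outside $\{\alpha,\beta\}$ strictly dominated by one of the two candidate optima. The rest is a routine instantiation of the two-point Le Cam template, and no obstacle beyond bookkeeping is anticipated.
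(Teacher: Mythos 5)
Your proposal is correct and follows essentially the same route as the paper's own proof: the identical two-point construction supported on $\{1/3,2/3\}$ with probabilities $1/2\pm\Delta$, a per-round separation of order $\Delta$, reduction to $1-\|P_+^T-P_-^T\|_{\mathrm{TV}}$, the Bretagnolle--Huber/KL bound of Lemma~\ref{lemma.TV_KL} with $D_{\text{KL}} = O(T\Delta^2)$, and the choice $\Delta \asymp 1/\sqrt{T}$. The only cosmetic difference is that you derive the separation via an explicit case analysis yielding indicator-based per-round bounds (controlling $\bE[N_H]$ across the two measures), whereas the paper uses the pointwise inequality $\max_b R_1 + \max_b R_2 - \max_b(R_1+R_2) \ge 2\Delta/3$; these are interchangeable.
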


The proof of Theorem \ref{thm:sqrtT_lower_bound} is the usual manifestation of the Le Cam's two-point method \cite{Tsybakov2008}. Consider the following two candidates of the CDFs supported on $[0,1]$: 
\begin{align*}
G_1(x) = \begin{cases}
0 & \text{if } 0\le x< \frac{1}{3} \\
\frac{1}{2} + \Delta & \text{if } \frac{1}{3}\le x < \frac{2}{3} \\
1 & \text{if } x \ge \frac{2}{3}
\end{cases}, \qquad 
G_2(x) = \begin{cases}
0 & \text{if } 0\le x< \frac{1}{3} \\
\frac{1}{2} - \Delta & \text{if } \frac{1}{3}\le x < \frac{2}{3} \\
1 & \text{if } x \ge \frac{2}{3}
\end{cases},
\end{align*}
where $\Delta\in (0,1/4)$ is some parameter to be chosen later. In other words, the CDF $G_1$ corresponds to a discrete random variable taking value in $\{1/3, 2/3\}$ with probability $(1/2 + \Delta, 1/2 - \Delta)$, and the CDF $G_2$ corresponds to the probability $(1/2 - \Delta, 1/2 + \Delta)$. Let $R_1(v_t, b_t)$ and $R_2(v_t,b_t)$ be the expected reward in \eqref{eq.expected_reward} averaged under the CDF $G_1$ and $G_2$, respectively. After some algebra, it is straightforward to check that
\begin{align*}
\max_{b\in [0,1]} R_1(v_t, b) &= \max_{b\in [0,1]} (1-b)G_1(b) = \frac{1+2\Delta}{3}, \\
\max_{b\in [0,1]} R_2(v_t, b) &= \max_{b\in [0,1]} (1-b)G_2(b) = \frac{1}{3}, \\
\max_{b\in [0,1]} (R_1(v_t,b) + R_2(v_t,b)) &= \max_{b\in [0,1]} (1-b)(G_1(b) + G_2(b)) = \frac{2}{3}. 
\end{align*}
Hence, for any $b_t\in [0,1]$, we have
\begin{align}\label{eq:separation}
&\left(\max_{b\in [0,1]} R_1(v_t, b)  - R_1(v_t,b_t) \right) + \left( \max_{b\in [0,1]} R_2(v_t, b) - R_2(v_t,b_t) \right) \nonumber\\
&\ge \max_{b\in [0,1]} R_1(v_t, b) + \max_{b\in [0,1]} R_2(v_t, b)  - \max_{b\in [0,1]} (R_1(v_t,b) + R_2(v_t,b)) \nonumber\\
&= \frac{2\Delta}{3}. 
\end{align}
The inequality \eqref{eq:separation} is the separation condition required in the two-point method: there is no single bid $b_t$ which gives a uniformly small instantaneous regret under both CDFs $G_1$ and $G_2$. 

For $i\in \{1,2\}$, let $P_i^{\otimes (t-1)}$ be the distribution of all observables $(m_1,\cdots,m_{t-1})$ at the beginning of time $t$. Then for any policy $\pi$, 
\begin{align}\label{eq:minimax_regret}
\sup_{G}\bE_G[R_T(\pi;v)] &\stepa{\ge} \frac{1}{2} \bE_{G_1} [R_T(\pi;v)] + \frac{1}{2} \bE_{G_2} [R_T(\pi;v)] \nonumber\\
&= \frac{1}{2}\sum_{t=1}^T \left( \bE_{P_1^{\otimes (t-1)}}\left[\max_{b\in [0,1]} R_1(v_t,b) - R_1(v_t,b_t)\right] +  \bE_{P_2^{\otimes (t-1)}}\left[\max_{b\in [0,1]} R_2(v_t,b) - R_2(v_t,b_t)\right] \right) \nonumber\\
&\stepb{\ge} \frac{1}{2}\sum_{t=1}^T \frac{2\Delta}{3}\int \min\{\mathrm{d}P_1^{\otimes (t-1)}, \mathrm{d}P_2^{\otimes (t-1)}\} \nonumber\\
&\stepc{=} \frac{1}{2}\sum_{t=1}^T \frac{2\Delta}{3}(1 - \|P_1^{\otimes (t-1)} - P_2^{\otimes (t-1)}\|_{\text{TV}}) \nonumber\\
&\stepd{\ge } \frac{\Delta T}{3}\left(1 - \|P_1^{\otimes T} - P_2^{\otimes T}\|_{\text{TV}} \right),
\end{align}
where (a) is due to the fact that the maximum is no smaller than the average, (b) follows from \eqref{eq:separation}, (c) is due to the identity $\int\min\{\mathrm{d}P,\mathrm{d}Q\} = 1 - \|P-Q\|_{\text{TV}}$, and (d) is due to the data-processing inequality $\|P_1^{\otimes (t-1)} - P_2^{\otimes (t-1)}\|_{\text{TV}} \le \|P_1^{\otimes T}- P_2^{\otimes T}\|_{\text{TV}}$ for the total variation distance. Invoking Lemma \ref{lemma.TV_KL} and using the fact that for $\Delta\in (0,1/4)$, 
\begin{align*}
D_{\text{KL}}(P_1^{\otimes T} \| P_2^{\otimes T}) &= TD_{\text{KL}}(G_1 \| G_2) \\
&= T\left(\left(\frac{1}{2}-\Delta\right)\log\frac{1/2 - \Delta}{1/2 + \Delta} + \left(\frac{1}{2}+\Delta\right)\log\frac{1/2 + \Delta}{1/2 - \Delta}\right) \\
&\le 32T\Delta^2, 
\end{align*}
we have the following inequality on the total variation distance: 
\begin{align}\label{eq:TV_lower_bound}
1 - \|P_1^{\otimes T} - P_2^{\otimes T}\|_{\text{TV}}  \ge \frac{1}{2}\exp\left( - 32T\Delta^2\right). 
\end{align}
Finally, choosing $\Delta = 1/(4\sqrt{T})$ and combining \eqref{eq:minimax_regret}, \eqref{eq:TV_lower_bound}, we conclude that Theorem \ref{thm:sqrtT_lower_bound} holds with the constant $c = 1/(24e^2)$. 

\section{Proof of Main Lemmas}
\subsection{Proof of Lemma \ref{lemma.concentration}}
Throughout the proof we use $a \to b$ to denote that either $a=b$, or $(a,b)\in E$ in the feedback graph. By definition, we have $n_{a}^t = \sum_{s=1}^t \1(a_s \to a)$. Since the statement of the lemma is vacuous when $n_a^t = 0$, in the sequel we assume that $n_a^t \ge 1$. Consequently,
\begin{align*}
	(n_{a}^t)^{\frac{1}{2}}|\bar{r}_{a}^t - R_{a}| 
	&= \frac{\left| \sum_{s=1}^t \1(a_s\to a)( r_{s,a} - R_{a} )\right|}{\sqrt{\sum_{s=1}^t \1(a_s\to a)}}.  
\end{align*}
Hence, by a union bound, the target probability is at least $1-\delta$ if the following pointwise inequality holds: for fixed $a\in\calA, t\in [T]$, it holds that
\begin{align}\label{eq.pointwise}
	\bP\left(\frac{\left| \sum_{s=1}^t \1(a_s\to a)( r_{s,a} - R_{a} )\right|}{\sqrt{\sum_{s=1}^t \1(a_s\to a)}} > \sqrt{9\log(2T)\log\left(\frac{KT}{\delta}\right)} \right) \le \frac{\delta}{KT}. 
\end{align}

One technical difficulty in proving \eqref{eq.pointwise} is that the choice of $a_t$ may depend on the previous observations $\{r_{s,a}\}_{s<t}$, and therefore the summands on the numerator are not mutually independent. However, if we define $\calF_t = \sigma( \{a_s, (r_{s,a})_{a\in\calA} \}_{s\le t} )$ to be the $\sigma$-field of the historic observations up to time $t$, then $a_t$ is $\calF_{t-1}$-measurable and $r_{t,a}$ is independent of $\calF_{t-1}$. Consequently 
\begin{align*}
	M_t := \sum_{s=1}^t \1(a_s\to a)( r_{s,a} - R_{a} )
\end{align*}
is a martingale adapted to the filtration $\{\calF_t\}_{t\ge 1}$, where $M_0 = 0$. Further define the predictable quadratic variation and the total quadratic variation of $M_t$ respectively as
\begin{align*}
	\langle M\rangle_t := \sum_{s=1}^t \bE[(M_s - M_{s-1})^2 | \calF_{s-1}], \qquad [M]_t := \sum_{s=1}^t (M_s - M_{s-1})^2, 
\end{align*}
then the theory of self-normalized martingales \cite[Lemma B.1]{bercu2008exponential} gives that $\bE[\exp(\lambda M_t - \lambda^2(\langle M\rangle_t + [M]_t)/2)]\le 1$ holds for all $\lambda\in \bR$, i.e., the choices $A=M_t$ and $B^2=\langle M\rangle_t+ [M]_t$ fulfill the condition of Lemma \ref{lemma.self-normalized_martingale}. Thanks to the boundedness assumption $r_{s,c,a}\in [0,1]$, simple algebra gives $\langle M\rangle_t+ [M]_t \le 2\sum_{s\le t} \1(a_s\to a)$, and the choice $y=1$ leads to (recall that $\sum_{s=1}^t \1(a_s\to a)\ge 1$)
\begin{align*}
	(B^2 + y)\left(1+ \frac{1}{2}\log\left(1+\frac{B^2}{y}\right) \right) &\le 3\sum_{s=1}^t \1(a_s\to a)\cdot \left(1+\frac{1}{2}\log(1+2T)\right) \\
	& \le 9\log(2T)\cdot \sum_{s=1}^t \1(a_s\to a). 
\end{align*}
Hence, applying Lemma \ref{lemma.self-normalized_martingale} with $y=1$ yields an upper bound $\delta/KT$ of the deviation probability in \eqref{eq.pointwise}. 

\subsection{Proof of Lemma \ref{lemma.partial_order_N_lb}}
Throughout the proof we fix the randomness in $(c_t)_{t\in [T]}$ and assume that $(c_t)_{t\in [T]}$ is a fixed sequence. We prove the following statement: for a given $c\in \calC$, the probability that the quantity $N_c^t$ remains the same during $t\in [t_1, t_2]$ with $\sum_{t_1<s\le t_2} \1(c_s\preceq_\calC c) \ge m := \alpha\log(MKT^2/\delta)$ is at most $\delta/(MT^2)$. In other words, whenever $m$ contexts smaller than $c$ have appeared in the history, the quantity $N_c^t$ should at least increase by one with probability at least $1-\delta/(MT^2)$. By a union bound over $c\in \calC$ and $t_1, t_2 \in [T]$, we conclude that with high probability at least $1-\delta$, it holds that
\begin{align*}
	N_c^t \ge \bigg\lfloor \frac{1}{m}\sum_{s\le t} \1(c_s\preceq_\calC c) \bigg\rfloor
\end{align*}
for every $c\in \calC$ and $t\in [T]$. This implies the desired inequality. 

To prove the above statement, let $\calE$ be the event that the quantity $N_c^t$ remains the same during $t\in [t_1, t_2]$ with $\sum_{t_1<s\le t_2} \1(c_s\preceq_\calC c) \ge m$, and $\calE_a$ be the event that the reward of action $a\in \calA$ (under any context, thanks to the cross learning structure) is never revealed during $t\in [t_1, t_2]$. We claim that 
\begin{align*}
	\calE\subseteq \cup_{a\in \calA_{\text{act}, c}^{t_2}} \calE_a. 
\end{align*}
To see this, if the RHS is not true, then the count $n_{a}^t$ for every $a\in \calA_{\text{act},c}^{t_2}$ has been increased by at least one during $[t_1, t_2]$, and thus $N_{c}^t = \min_{a\in \calA_{\text{act},c}^t} n_a^t$ must have been increased by at least one during $[t_1, t_2]$, i.e. $\calE$ does not hold. Consequently, by the union bound, it suffices to prove that $\bP(\calE_a) \le \delta/(MKT^2)$ for any given $a\in \calA_{\text{act}, c}^{t_2}$. 

Now consider any time point $s\in (t_1, t_2]$ such that $c_s\preceq_\calC c$. By the additional arm elimination rule in Algorithm \ref{algo.partial_order}, every action $a\in \calA_{\text{act},c}^{t_2}\subseteq \calA_{\text{act},c}^{s-1}$ must be dominated by some action $a'\in \calA_{\text{act}, c_s}^{s-1}$, and therefore by some further action $a'' \in \calA_{\text{act}, c_s}^{s-1}$ with no parent due to the partial order structure $(\calA, \preceq_G)$ in Definition \ref{assump:partial_order}. In other words, let $\calB$ be the set of actions in $\calA_{\text{act}, c_s}^{s-1}$ with no parent, there is at least one action in $\calB$ that reveals the reward of action $a$. On the other hand, since $\calB$ must be an independent set by the proof of Theorem \ref{thm.MAB_feedback}, we have $|\calB| \le \alpha$. According to Algorithm \ref{algo.partial_order}, the learner chooses an action in $\calB$ uniformly at random, so the probability that the reward of action $a$ is not revealed at time $s$ is at most $1-1/\alpha$. Since the random choices are independent across time, we have
\begin{align*}
	\bP(\calE_a) \le \left(1 - \frac{1}{\alpha}\right)^m \le e^{-m/\alpha} = \frac{\delta}{MKT^2}, 
\end{align*}
which is what we want. 

\subsection{Proof of Lemma \ref{lemma.record_breaking}}
We first prove the first statement for the case $\beta = 1$, where $(\calX, \preceq)$ is a chain. Let $x_1 \preceq x_2 \preceq \cdots \preceq x_m$ be all elements of $\calX$, and $F(x) := \bP(X_1 \preceq x)$ be the common CDF of $X_1, \cdots, X_T$. Conditioned on $X_t$, the sum $\sum_{s<t} \1(X_s\le X_t)$ follows a Binomial distribution $\mathsf{B}(t-1, F(X_t))$. Since
\begin{align*}
	\bE\left[\frac{1}{1+\mathsf{B}(n,p)}\right] &= \sum_{k=0}^n \frac{1}{1+k}\binom{n}{k}p^k(1-p)^{n-k} 
	= \sum_{k=0}^n \frac{1}{(n+1)p}\binom{n+1}{k+1}p^{k+1}(1-p)^{n-k} \\
	&=  \frac{1}{(n+1)p} \sum_{i=1}^{n+1}\binom{n+1}{i}p^{i}(1-p)^{n+1-i} = \frac{1}{(n+1)p}\left((p+1-p)^{n+1} - (1-p)^{n+1}\right) \\
	&=  \frac{1}{(n+1)p}\left(1 - (1-p)^{n+1}\right) \le \min\left\{\frac{1}{(n+1)p}, 1 \right\},
\end{align*}
we have $\bE\left[\left(1 + \sum_{s<t} \1(X_s \preceq X_t) \right)^{-1} \right] \le \bE\left[\min\left\{\frac{1}{tF(X_t)},1\right\}\right]. $
Further, we have $\bP(F(X_t)\le u) \le u$ for any $u\in [0,1)$. To see this, partition $[0,1)$ into $[0, F(x_1)), [F(x_1), F(x_2)), \cdots, [F(x_{m-1}), F(x_m))$, then
\begin{align*}
	\bP(F(X_t)\le u) = 0\cdot \1(u\in [0,F(x_1))) + \sum_{i=1}^{m-1} F(x_i)\cdot \1(u\in [F(x_{i}), F(x_{i+1})) ) \le u.
\end{align*}
Consequently, $F(X_t)$ stochastically dominates $U\sim \mathsf{Unif}([0,1])$. Since $x\in [0,1]\mapsto \min\{1/(tx), 1 \}$ is decreasing, we have:
\begin{align*}
	\bE\left[\min\left\{\frac{1}{tF(X_t)},1\right\}\right] \le \bE\left[\min\left\{\frac{1}{tU},1\right\}\right] = \frac{1 + \log t}{t}. 
\end{align*}
The proof is completed by noting that
$\sum_{t=1}^T \frac{1 + \log t}{t} \le (1+\log T)\cdot \sum_{t=1}^{T} \frac{1}{t} \le (1+\log T)^2$. 

For general $\beta$, Dilworth's theorem \cite{dilworth1950decomposition} states that $(\calX, \preceq)$ could be decomposed into $\beta$ disjoint chains, i.e. there is a partition of $\calX = \cup_{i=1}^\beta \calX_i$ such that each $(\calX_i, \preceq)$ is a chain. Let $\Phi(x) \in [\beta]$ be the membership of $x\in \calX$, i.e. $\Phi(x) = i$ if and only if $x\in \calX_i$. Then
\begin{align*}
	\sum_{t=1}^T \frac{1}{1+\sum_{s<t} \1(X_s\preceq X_t)} &= \sum_{i=1}^\beta \sum_{t: \Phi(X_t) = i} \frac{1}{1+\sum_{s<t} \1(X_s\preceq X_t)} \\
	& \le \sum_{i=1}^\beta \sum_{t: \Phi(X_t) = i} \frac{1}{1+\sum_{s<t: \Phi(X_s)=i} \1(X_s\preceq X_t)}. 
\end{align*}
Conditioned on any realization of $(\Phi(X_t))_{t\in [T]}$, the random variables $(X_t)_{t\in [T]: \Phi(X_t)=i}$ are still conditional i.i.d., therefore the expectation of each inner sum is at most $(1+\log T)^2$, as desired. 

For the second statement, the following chain of inequality leads to the claimed result: 
\begin{align*}
	\sum_{t=1}^T \frac{1}{1+\sum_{s<t}\1(x_s\preceq x_t)} &\le \sum_{t=1}^T \frac{1}{1+\sum_{s<t}\1(x_s = x_t)} \\
	&= \sum_{x\in \calX}\sum_{t=1}^T \frac{\1(x_t = x)}{1+\sum_{s<t}\1(x_s = x_t)} \\
	&= \sum_{x\in \calX}\sum_{t: x_t = x} \frac{1}{1+\sum_{s<t}\1(x_s = x)} \\
	&\le \sum_{x\in \calX} \left(1 + \frac{1}{2}  + \cdots + \frac{1}{T}\right) \\
	&\le |\calX|(1+\log T). 
\end{align*}

\subsection{Proof of Lemma \ref{lemma.approx_regret}}
By the definition of regret in a first-price auction, we have:	
\begin{align*}
R_T(\pi,v) &= \sum_{t=1}^T \left(\max_{b\in [0,1]} R(v_t, b) - R(v_t,b_t) \right) \\
& =
\sum_{t=1}^T \left(\max_{b\in [0,1]} R(v_t, b) - \max_{b\in \calA} R(\widetilde{v}_t, b) + \max_{b\in \calA} R(\widetilde{v}_t, b)- R(\widetilde{v}_t,b_t) + R(\widetilde{v}_t,b_t) -R(v_t,b_t) \right) \\
& = \widetilde{R}_T(\pi^{\text{\rm Q}},v) +  \sum_{t=1}^T \left(R(\widetilde{v}_t,b_t) -R(v_t,b_t) \right)+ \sum_{t=1}^T \left(\max_{b\in [0,1]} R(v_t, b) - \max_{b\in \calA} R(\widetilde{v}_t, b)\right) \\
&\le  \widetilde{R}_T(\pi^{\text{\rm Q}},v) + \frac{T}{M} + \left(\frac{T}{M} + \frac{T}{K}\right) \\
&= \widetilde{R}_T(\pi^{\text{\rm Q}},v) + \frac{2T}{M} + \frac{T}{K},
\end{align*}
where the inequality follows from 
\begin{align*}
|R(v,b) - R(\widetilde{v},b)| = |v - \widetilde{v}|G(b) \le  |v - \widetilde{v}| \le \frac{1}{M}, \qquad \forall v,b\in [0,1], 
\end{align*}
and for any $v\in [0,1]$ with $\widetilde{b} = \min\{b'\in \calA: b'\ge b\}$, it holds that
\begin{align*}
\max_{b\in [0,1]} R(v,b) = \max_{0\le b\le v} R(v,b) \le \max_{0\le b\le v} (v-b)G(\widetilde{b}) \le \max_{0\le b\le v} (v-\widetilde{b})G(\widetilde{b}) + \frac{1}{K} \le \max_{b\in \calA} R(v,b) + \frac{1}{K}. 
\end{align*}

\subsection{Proof of Lemma \ref{lemma.monotonicity}}
Fix any $v_1,v_2\in [0,1]$ with $v_1\le v_2$. Then for any $b\le b^\star(v_1)$, it holds that
\begin{align*}
R(v_2, b) &= R(v_1,b) + (v_2 - v_1)G(b) \\
&\le R(v_1, b^\star(v_1)) + (v_2 - v_1)G(b^\star(v_1)) \\
&= R(v_2, b^\star(v_1)), 
\end{align*}
where the inequality follows from the definition of $b^\star(v_1)$ and the assumption $b\le b^\star(v_1)$. Hence, all bids $b\le b^\star(v_1)$ cannot be the largest maximizer of $R(v_2,b)$, and $b^\star(v_1)\le b^\star(v_2)$ as claimed.

\subsection{Proof of Lemma \ref{lemma.UCB}}
We first prove the following inequality: for each $i\in [K], \ell\in [L], T_0<t\le T$ and $\gamma\ge 3$, 
\begin{align}\label{eq:target}
\bP\left(\left|\widehat{r}_{i,t}^{\ell} - r_{i,t} \right| \ge \gamma\left(\sqrt{\log(LKT)\sum_{j\ge i}\frac{p_j}{n_{j,t}^{\ell}} }+ \frac{\log(LKT)}{n_{i,t}^{\ell}} \right) \right) \le \frac{2}{(LKT)^4}. 
\end{align}
In fact, by the definition of $\widehat{r}_{i,t}^{\ell}$ in \eqref{eq:reward_level}, we have
\begin{align*}
\left|\widehat{r}_{i,t}^{\ell} - r_{i,t} \right| &= |v_t - b^i|\cdot \left|\sum_{j\ge i}  \frac{\sum_{s\in \Phi^{\ell}(t-1)} \1(b_s\le b^j)(\1(b^j < m_s\le b^{j+1}) - p_j)}{\sum_{s\in \Phi^{\ell}(t-1)} \1(b_s\le b^j)} \right| \\
&\le \left|\sum_{j\ge i}  \frac{\sum_{s\in \Phi^{\ell}(t-1)} \1(b_s\le b^j)(\1(b^j < m_s\le b^{j+1}) - p_j)}{\sum_{s\in \Phi^{\ell}(t-1)} \1(b_s\le b^j)} \right| \\
&= \left| \sum_{s\in \Phi^{\ell}(t-1)} \sum_{j\ge i} \frac{\1(b_s\le b^j)}{n_{j,t}^{\ell}}(\1(b^j < m_s\le b^{j+1}) - p_j) \right|. 
\end{align*}
Now conditioning on the variables $(b_s: s\in \Phi^{\ell}(t-1))$, the only random variables appearing in the target inequality \eqref{eq:target} are $(m_s: s\in \Phi^{\ell}(t-1))$, which by Lemma \ref{lemma:independence} are still mutually independent. We will show that \eqref{eq:target} holds even after conditioning on $(b_s: s\in \Phi^{\ell}(t-1))$, and therefore the unconditional version holds as well. Hence, in the sequel we assume that the conditioning is always performed. 

For $s\in \Phi^{\ell}(t-1)$, define a random variable
\begin{align}\label{eq:X_j}
X_s \triangleq \sum_{j\ge i} \frac{\1(b_s\le b^j)}{n_{j,t}^{\ell}}(\1(b^j < m_s\le b^{j+1}) - p_j). 
\end{align}
Clearly $\bE[X_s] = 0$. Moreover, as the map $j\mapsto n_{j,t}^{\ell}=\sum_{s\in \Phi^{\ell}(t-1)} \1(b_s\le b^j)$ is always non-decreasing, the range of each $X_s$, i.e. the difference of the largest and smallest possible values of $X_s$, is at most $1/n_{i,t}^{\ell}$. Furthermore, 
\begin{align}\label{eq:variance_individual}
\bE[X_s^2] &= \sum_{j\ge i} \frac{\1(b_s\le b^j)}{(n_{j,t}^{\ell})^2}\cdot \bE[(\1(b^j < m_s\le b^{j+1}) - p_j)^2] \nonumber\\
&\quad + \sum_{j\ge i, j'\ge i, j\neq j'}\frac{\1(b_s\le b^j)}{n_{j,t}^{\ell}}\frac{\1(b_s\le b^{j'})}{n_{j',t}^{\ell}}\cdot \bE[(\1(b^j < m_s\le b^{j+1}) - p_j)(\1(b^{j'} < m_s\le b^{j'+1}) - p_{j'})]\nonumber \\
&= \sum_{j\ge i} \frac{\1(b_s\le b^j)}{(n_{j,t}^{\ell})^2}\cdot p_j(1-p_j) - \sum_{j\ge i, j'\ge i, j\neq j'}\frac{\1(b_s\le b^j)}{n_{j,t}^{\ell}}\frac{\1(b_s\le b^{j'})}{n_{j',t}^{\ell}}\cdot p_jp_{j'} \nonumber\\
&\le \sum_{j\ge i} \frac{\1(b_s\le b^j)}{(n_{j,t}^{\ell})^2} p_j. 
\end{align}
As the random variables $(X_s: s\in \Phi^{\ell}(t-1))$ are mutually independent after the conditioning, the individual variance in \eqref{eq:variance_individual} leads to
\begin{align}\label{eq:variance_total}
\var\left(\sum_{s\in \Phi^{\ell}(t-1)} X_s\right) = \sum_{s\in \Phi^{\ell}(t-1)} \bE[X_s^2] \le \sum_{s\in \Phi^{\ell}(t-1)}\sum_{j\ge i} \frac{\1(b_s\le b^j)}{(n_{j,t}^{\ell})^2} p_j = \sum_{j\ge i} \frac{p_j}{n_{j,t}^{\ell}}. 
\end{align}
Hence, by the Bernstein inequality (cf. Lemma \ref{lemma.bennett}) applied to the variance upper bound in \eqref{eq:variance_total} and the range upper bound of each $X_s$, we conclude that
\begin{align*}
\bP\left(\left|\sum_{s\in \Phi^{\ell}(t-1)} X_s \right| \ge \gamma\left(\sqrt{ \log(LKT)\sum_{j\ge i} \frac{p_j}{n_{j,t}^{\ell}}}  + \frac{\log(LKT)}{n_{i,t}^{\ell}}\right) \right) \le 2(LKT)^{-\min\{\gamma^2,3\gamma\}/2}. 
\end{align*}
Choosing $\gamma\ge 3$ completes the proof of the target inequality \eqref{eq:target}. 

Now we use \eqref{eq:target} to prove Lemma \ref{lemma.UCB}. First note that by a union bound over $i\in [K], \ell\in [L]$ and $T_0\le t\le T$, the following inequality holds uniformly over all $(i,\ell,t)$ tuples with probability at least $1-2T^{-3}$: 
\begin{align}\label{eq:ucb_true}
\left|\widehat{r}_{i,t}^{\ell} - r_{i,t} \right| \le \gamma\left(\sqrt{\log(LKT)\sum_{j\ge i}\frac{p_j}{n_{j,t}^{\ell}} }+ \frac{\log(LKT)}{n_{i,t}^{\ell}} \right). 
\end{align}
To connect the RHS of \eqref{eq:ucb_true} with the width in \eqref{eq:width}, we need to relate the true probability $p_j$ with its estimate $\widehat{p}_j^0$ using $m:=\lceil \sqrt{T} \rceil$ held-out samples. As $m\widehat{p}_j^0\sim \mathsf{B}(m,p_j)$, Lemma \ref{lemma:binomial_tail} shows that with probability at least $1-(KT)^{-2}$, the following inequalities hold: 
\begin{align}
	p_j &\le 2\left(\widehat{p}_j^0 + \frac{\gamma \log(KT)}{m}\right), \label{eq:p_0_lower} \\
\widehat{p}_j^0 &\le 2\left(p_j + \frac{\gamma \log(KT)}{m}\right). \label{eq:p_0_upper}  
\end{align}
To be more specific, we can establish \eqref{eq:p_0_lower} and \eqref{eq:p_0_upper} by distinguishing into the cases $p_j \le \gamma\log(KT)/m$ and $p_j > \gamma\log(KT)/m$, and apply Lemma \ref{lemma:binomial_tail} in the respective scenarios. For example, there is nothing to prove for \eqref{eq:p_0_lower} if $p_j\le \gamma\log(KT)/m$, while Lemma \ref{lemma:binomial_tail} shows that $\widehat{p}_j^0 \ge p_j/2$ holds with high probability for $p_j>\gamma\log(KT)/m$. By a union bound over $j\in [K]$, the inequalities  \eqref{eq:p_0_lower} and \eqref{eq:p_0_upper} hold for all $j\in [K]$ with probability at least $1-(TK)^{-1}$. 

Consequently, based on \eqref{eq:ucb_true}, it holds that
\begin{align*}
	\left|\widehat{r}_{i,t}^{\ell} - r_{i,t} \right| &\le \gamma\left(\sqrt{\log(LKT)\sum_{j\ge i}\frac{p_j}{n_{j,t}^{\ell}} }+ \frac{\log(LKT)}{n_{i,t}^{\ell}} \right)  \\
	&\stepa{\le} \gamma\left(\sqrt{2\log(LKT)\sum_{j\ge i}\frac{1}{n_{j,t}^{\ell}} \left(\widehat{p}_j^0 + \frac{\gamma \log(LKT)}{\lceil \sqrt{T} \rceil}\right)  }+ \frac{\log(LKT)}{n_{i,t}^{\ell}} \right) \\
	&\stepb{\le} \gamma\left(\sqrt{6\log(LKT)\sum_{j\ge i}\frac{1}{n_{j,t}^{\ell}} \left(p_j + \frac{\gamma \log(LKT)}{\lceil \sqrt{T} \rceil}\right)  }+ \frac{\log(LKT)}{n_{i,t}^{\ell}} \right), 
\end{align*}
where (a) is due to \eqref{eq:p_0_lower}, and (b) is due to \eqref{eq:p_0_upper}. The proof of Lemma \ref{lemma.UCB} is then complete.

\subsection{Lemma \ref{lemma.combinatorial} and Its Proof}

The following Lemma \ref{lemma.combinatorial}, which holds deterministically under any level decomposition and any bidding sequence, is a key combinatorial lemma used in the proof of Lemma \ref{lemma:exploration}. 
\begin{lemma}\label{lemma.combinatorial}
	Let $n_{j,t}^{\ell}$ be defined in \eqref{eq:sample_size_level}, and $(v_1,\cdots,v_K)$ be any non-negative vector with $\sum_{i=1}^K v_i = s$. Then for any $\ell\in [L]$ and any subset $\Phi^{\ell}(T) \subseteq [T]$, it holds that
	\begin{align*}
	\sum_{t>T_0: t\in \Phi^{\ell}(T)}\sum_{j\ge i(t)} \frac{v_j}{n_{j,t}^{\ell} } \le s(1+\log T), \qquad \sum_{t>T_0: t\in \Phi^{\ell}(T)}\frac{1}{n_{i(t),t}^{\ell}} \le K(1+\log T).
	\end{align*}
\end{lemma}

Recall from \eqref{eq:sample_size_level} and the initialization before $t\le T_0$ that
\begin{align*}
n_{j,t}^{\ell} = \sum_{s\in \Phi^{\ell}(T), s<t} \1(b_s \le b^j) = \lceil\sqrt{T} \rceil +  \sum_{s\in \Phi^{\ell}(T), T_0<s<t} \1(b_s \le b^j). 
\end{align*}
Also, as $b_t = b^{i(t)}$, the condition $j\ge i(t)$ is equivalent to $b_t\le b^j$. Hence, 
\begin{align*}
\sum_{t>T_0: t\in \Phi^{\ell}(T)}\sum_{j\ge i(t)} \frac{v_j}{n_{j,t}^{\ell} } &= \sum_{j=1}^K v_j \sum_{t>T_0: t\in \Phi^{\ell}(T)} \frac{\1(j\ge i(t))}{n_{j,t}^{\ell}} \\
&= \sum_{j=1}^K v_j \sum_{t>T_0: t\in \Phi^{\ell}(T)} \frac{\1(b_t \le b^j)}{n_{j,t}^{\ell}} \\
&= \sum_{j=1}^K v_j\sum_{t>T_0: t\in \Phi^{\ell}(T)} \frac{\1(b_t \le b^j)}{\lceil\sqrt{T} \rceil + \sum_{s\in \Phi^{\ell}(T), T_0<s<t} \1(b_s \le b^j) } \\
&\le \sum_{j=1}^K v_j\left(\frac{1}{\lceil  \sqrt{T} \rceil} + \frac{1}{\lceil  \sqrt{T} \rceil + 1} + \cdots + \frac{1}{T}\right) \\
&\le \sum_{j=1}^K v_j(1+\log T) = s(1+\log T), 
\end{align*}
completing the proof of the first inequality. The second inequality follows from the first by choosing $(v_1,\cdots,v_K)=(1,\cdots,1)$.

\bibliographystyle{alpha}
\bibliography{di.bib}

\end{document}